\PassOptionsToPackage{table}{xcolor}

\documentclass[11pt]{article}

\usepackage[final]{acl}

\usepackage{times}
\usepackage[T1]{fontenc}
\usepackage[utf8]{inputenc}
\usepackage{microtype}
\usepackage{inconsolata}
\usepackage{graphicx}
\usepackage{paralist}
\usepackage{hyperref}
\usepackage{xcolor}

\usepackage{anyfontsize}

\DeclareFontShape{T1}{ptm}{m}{scit}{<-> ptmrc8t}{}
\DeclareFontShape{T1}{ptm}{b}{scit}{<-> ptmbc8t}{}
\DeclareFontShape{T1}{ptm}{bx}{scit}{<->ssub * ptm/b/scit}{}

\usepackage{cuted}        
\usepackage{caption}      

\usepackage{tikz}
\usetikzlibrary{shapes.geometric,calc,arrows.meta}
\usepackage{fontawesome5} 
\usepackage{pifont}       

\newcommand{\HFsmiley}[1][0.32]{%
\begin{tikzpicture}[scale=#1, line cap=round, baseline={(0,0)}]
  \fill[yellow!80!orange] (-0.95,-0.55) circle (0.32);
  \fill[yellow!80!orange] ( 0.95,-0.55) circle (0.32);
  \draw[orange!85!red, line width=0.5pt] (-0.95,-0.55) circle (0.32);
  \draw[orange!85!red, line width=0.5pt] ( 0.95,-0.55) circle (0.32);
  \fill[yellow!80!orange] (0,0) circle (1);
  \draw[orange!85!red, line width=0.5pt] (0,0) circle (1);
  \draw[black, line width=1.4pt] (-0.42,0.15) .. controls (-0.32,0.40) and (-0.18,0.40) .. (-0.08,0.15);
  \draw[black, line width=1.4pt] ( 0.08,0.15) .. controls ( 0.18,0.40) and ( 0.32,0.40) .. ( 0.42,0.15);
  \draw[black, line width=1.6pt]
    (-0.45,-0.25) .. controls (-0.2,-0.58) and (0.2,-0.58) .. (0.45,-0.25);
\end{tikzpicture}%
}

\usepackage{amsmath,amssymb,amsfonts,amsthm,mathtools}

\DeclareMathOperator*{\argmax}{arg\,max}
\usepackage{bm}

\usepackage{booktabs}
\usepackage{multirow}
\usepackage{array}
\usepackage{colortbl}

\usepackage{microtype}
\usepackage{mathtools}

\definecolor{tableheader}{HTML}{2C3E50}      
\definecolor{tablezebra}{HTML}{F4F6F8}       
\definecolor{tablehighlight}{HTML}{FFF3CD}   
\definecolor{tablerule}{HTML}{1F2D3D}        
\newcommand{\hdr}[1]{\textcolor{white}{\textbf{#1}}}
\newcommand{\hdrrow}{\rowcolor{tableheader}}

\usepackage{algorithm}
\usepackage{algpseudocode}

\usepackage[most]{tcolorbox}
\tcbuselibrary{skins,breakable}

\definecolor{boxBlueBar}{HTML}{2C3E66}     
\definecolor{boxBlueBg}{HTML}{EEF2FA}
\definecolor{boxOrangeBar}{HTML}{C25A1F}   
\definecolor{boxOrangeBg}{HTML}{FCEFE3}
\definecolor{boxGreenBar}{HTML}{2F7A4D}    
\definecolor{boxGreenBg}{HTML}{E8F3EC}
\definecolor{boxTealBar}{HTML}{2A6F7A}     
\definecolor{boxTealBg}{HTML}{E5F1F3}

\newtcolorbox{insightblue}[1]{%
  enhanced, breakable, sharp corners,
  colback=boxBlueBg, colframe=boxBlueBar,
  boxrule=0pt, leftrule=3pt,
  left=10pt, right=10pt, top=7pt, bottom=7pt,
  before upper={\textbf{\textcolor{boxBlueBar}{Principle.~#1}}\par\smallskip},
}
\newtcolorbox{insightorange}[1]{%
  enhanced, breakable, sharp corners,
  colback=boxOrangeBg, colframe=boxOrangeBar,
  boxrule=0pt, leftrule=3pt,
  left=10pt, right=10pt, top=7pt, bottom=7pt,
  before upper={\textbf{\textcolor{boxOrangeBar}{Why.~#1}}\par\smallskip},
}
\newtcolorbox{insightgreen}[1]{%
  enhanced, breakable, sharp corners,
  colback=boxGreenBg, colframe=boxGreenBar,
  boxrule=0pt, leftrule=3pt,
  left=10pt, right=10pt, top=7pt, bottom=7pt,
  before upper={\textbf{\textcolor{boxGreenBar}{Finding.~#1}}\par\smallskip},
}
\newtcolorbox{insightteal}[1]{%
  enhanced, breakable, sharp corners,
  colback=boxTealBg, colframe=boxTealBar,
  boxrule=0pt, leftrule=3pt,
  left=10pt, right=10pt, top=7pt, bottom=7pt,
  before upper={\textbf{\textcolor{boxTealBar}{Takeaway.~#1}}\par\smallskip},
}

\usepackage{hyperref}
\usepackage[nameinlink,capitalize]{cleveref}
\crefname{assumption}{Assumption}{Assumptions}
\Crefname{assumption}{Assumption}{Assumptions}
\crefname{theorem}{Theorem}{Theorems}
\Crefname{theorem}{Theorem}{Theorems}
\crefname{definition}{Definition}{Definitions}
\Crefname{definition}{Definition}{Definitions}
\crefname{appendix}{Appendix}{Appendices}
\Crefname{appendix}{Appendix}{Appendices}

\newtheorem{theorem}{Theorem}
\newtheorem{proposition}{Proposition}
\newtheorem{assumption}{Assumption}
\theoremstyle{definition}
\newtheorem{definition}{Definition}

\newcommand{\OPR}{\ensuremath{\mathrm{OPR}}}
\newcommand{\PTC}{\textsc{PTC}}
\newcommand{\TAS}{\textbf{\textsc{TAS}}}
\newcommand{\EG}{\ensuremath{\mathrm{EG}}}
\newcommand{\PA}{\ensuremath{\mathrm{PA}}}
\newcommand{\AUPRC}{\ensuremath{\mathrm{AUPRC}}}
\newcommand{\Ptheta}{\ensuremath{P_{\theta}}}
\newcommand{\ours}{\TAS}

\title{Right Knowledge, Wrong Answer: Characterizing Parametric Temporal Conflict in Open-Weight Language Models}

\newcommand{\linkicon}[2][11pt]{\raisebox{-0.32\height}{\resizebox{!}{#1}{#2}}}
\newcommand{\linkbadge}[3]{%
  \href{#1}{\shortstack[c]{#2\\[1pt]{\fontsize{7}{8}\selectfont\textsf{#3}}}}}

\author{
\fontsize{9.5}{11}\selectfont\textbf{Elias Hossain}$^{1\dagger}$,
\fontsize{9.5}{11}\selectfont\textbf{Sourav Saha}$^{1\dagger}$,
\fontsize{9.5}{11}\selectfont\textbf{Tasfia Nuzhat Ornee}$^{1\dagger}$,
\fontsize{9.5}{11}\selectfont\textbf{Sanjeda Sara Jennifer}$^{1}$
\\
\fontsize{9.5}{11}\selectfont\textbf{Umesh Chandra Biswas}$^{2}$,
\fontsize{9.5}{11}\selectfont\textbf{Shubhashis Roy Dipta}$^{3}$,
\fontsize{9.5}{11}\selectfont\textbf{Rajib Rana}$^{4}$,
\fontsize{9.5}{11}\selectfont\textbf{Niloofar Yousefi}$^{1}$
\\[0.6em]
{\small $^{1}$University of Central Florida}
\qquad
{\small $^{2}$Mississippi State University}
\\
{\small $^{3}$University of Maryland, Baltimore County}
\qquad
{\small $^{4}$University of Southern Queensland}
\\[0.7em]
{\small $\dagger$ Equal contribution}
\\[0.6em]
{\small\texttt{\textbf{mdelias.hossain@ucf.edu}}}
\\[0.8em]
\linkbadge{https://eliashossain001.github.io/tas/}%
  {\linkicon{\textcolor{teal!65!black}{\faGlobe}}}{Project}%
\hspace{1.8em}%
\linkbadge{https://huggingface.co/datasets/EliasHossain/ptc-benchmark}%
  {\linkicon[8pt]{\HFsmiley}}{Dataset}%
\hspace{1.8em}%
\linkbadge{https://github.com/eliashossain001/temporal-attractor-steering}%
  {\linkicon{\faGithub}}{Code}%
\\[1.2em]
}

\begin{document}
\maketitle

\begin{abstract}
Language models are trained on text spanning many years, so a model can encode both an outdated fact and the newer fact that superseded it. When it answers with the outdated one, is the newer fact missing, or merely dispreferred? We define \emph{Parametric Temporal Conflict} (\PTC{}): the newer fact is present and elicitable, yet the default forward pass favors the older one. On a deterministically verified $8{,}746$-record benchmark of single-holder Wikidata position-holder transitions, we characterize \PTC{} in four open-weight base LMs across three families (Qwen-2.5-1.5B/7B, Mistral-7B-v0.3, Llama-3.1-8B). Three findings indicate a representational preference rather than a knowledge gap: a date-prefix prompt recovers the newer fact in $61$--$81\%$ of \PTC{} cases; single-layer activation patching flips the answer in $72$--$85\%$, inside a model-specific upper-layer causal region; and a single-direction residual-stream edit flips it substantially more often than a norm-matched random direction of equal magnitude ($29$--$62\%$ vs.\ $18$--$31\%$). All recovery figures are measured on \emph{oracle-identified} instances, because automatic conflict detection remains unsolved (AUROC $0.47$--$0.66$), and the steering edit we use is an instrument rather than a proposed method. We release the benchmark, code, and statistics.
\end{abstract}

\section{Introduction}
\label{sec:intro}

Facts change, and language models are trained on text spanning many years. A model may therefore encode both an earlier fact and its later replacement: an older CEO and a newer one, an earlier head of state and a later one. Asked who currently holds such a position, the model often answers with the outdated fact. The standard remedies all treat this as \emph{missing information}: retrieval injects the newer fact from an external store, while knowledge editing and continual learning write it into the weights. Each assumes the model does not already know the answer.

That assumption is sometimes false. Adding a temporal cue such as \emph{``As of 2024, who is\dots''} frequently elicits the newer fact from the very same weights \citep{vu2024freshllms,zhao2024set,herel2024time}, indicating it was encoded all along. The failure is then not one of knowledge but of \emph{preference}: two competing facts coexist in parametric memory and the default forward pass surfaces the wrong one. The distinction matters, because a preference and a gap call for different remedies, and a benchmark that scores only temporal correctness cannot tell them apart.

We name and formalize this phenomenon as \emph{Parametric Temporal Conflict} (\PTC{}, \cref{def:ptc}): the newer fact is recoverable from the model, yet the default response favors the outdated one. To separate genuine conflict from simple absence, we pair the definition with a knowledge-recovery filter that admits a record only when the model demonstrably can produce the newer fact. Prior work does not target this setting. Temporal benchmarks \citep{chen2021dataset,zhang2021situatedqa,vu2024freshllms,chu2024timebench,uddin2025unseentimeqa,zhu2025evolvebench} score temporal correctness without asking whether a wrong answer means the knowledge is \emph{missing} or merely \emph{dispreferred}, and continual learning \citep{dhingra2022time,zhao2022can,kim2024carpe,fierro2024mulan} updates weights without a test-time means of choosing between coexisting versions.

Our thesis is that the outdated answer is a \emph{localized, direction-specific} property of the residual stream rather than a knowledge gap. We investigate four questions: \textbf{(RQ1)} is the newer fact absent or merely dispreferred? \textbf{(RQ2)} is the dispreference localized to identifiable layers? \textbf{(RQ3)} is a specific direction in activation space causally implicated in it? and \textbf{(RQ4)} does it persist across the evaluated model sizes and families?

RQ2 and RQ3 are causal questions: answering them requires intervening on the model, not merely observing it. We therefore analyze \PTC{} with two causal interventions. Activation patching asks \emph{where} a temporal-cued state can change the answer. A single-direction residual-stream edit, \textbf{\TAS{}} (Temporal Attractor Steering), asks \emph{whether a single direction is causally implicated in the preference}: if editing along the conflict direction flips the answer substantially more often than a norm-matched random direction does, that direction is implicated in shifting it.

\TAS{} is an analysis instrument, not a proposed system, and we state plainly what we do \emph{not} claim. It confers no deployable advantage: a one-line date-prefix prompt recovers more \PTC{} cases (\cref{tab:baselines}), \TAS{} only ties a matched ITI baseline on held-out data (\cref{tab:tas-vs-iti}), and its detector-gated variant is a negative result (\S\,\emph{Can \PTC{} Be Identified Automatically?}). None of this weakens the mechanistic findings, and the distinction is the one this paper turns on. Prompting shows \emph{that} the newer fact can be recovered. The causal edit shows \emph{where} the preference is represented, and that a \emph{single direction} is causally implicated in shifting it. Prompting can reveal neither. We make four contributions:

\begin{itemize}
    \item \textbf{A new phenomenon: Parametric Temporal Conflict (RQ1).}
    We identify and formalize \PTC{} (\cref{def:ptc}), a failure mode in which
    a superseding fact is encoded but dispreferred, and pair it with a
    knowledge-recovery filter that separates conflict from absence. A
    date-prefix prompt recovers the newer fact in $61$--$81\%$ of \PTC{} cases
    (\cref{tab:baselines}): the knowledge is present, not missing.

    \item \textbf{A verified multi-model benchmark (RQ4).}
    We release an $8{,}746$-record deterministically verified benchmark over
    five single-holder Wikidata position-holder relations with a released
    stratified audit sample, and characterize \PTC{} across four open-weight base
    LMs in three families, with bootstrap confidence intervals and paired
    tests throughout.

    \item \textbf{Mechanistic localization (RQ2).}
    Single-layer activation patching flips the outdated answer on $72$--$85\%$
    of cases within a model-specific upper-layer causal region
    (\cref{tab:locator}), causally implicating an identifiable band of layers
    rather than a diffuse computation.

    \item \textbf{Direction-specific representation evidence (RQ3).}
    Holding magnitude fixed and varying only the direction, a
    single-direction edit recovers $29$--$62\%$ of oracle-identified \PTC{}
    cases where a norm-matched random direction recovers only $18$--$31\%$,
    and the same direction stays effective at other layers
    (\cref{tab:layer-localization}). This causally implicates the direction,
    rather than a unique layer, in mediating the preference.
\end{itemize}

Filtered \PTC{} rates increase across our four models ($0.041$ to $0.103$), but those models differ in family, tokenizer, and cutoff as well as size, so we report this as an observed trend, not a controlled scaling result. Together these results characterize outdated-answer failures, in the models and relations we evaluate, as a localized, direction-specific property of parametric memory rather than an absence of knowledge.

\section{Preliminaries and Problem Formulation}
\label{sec:ptc}


We use three symbols throughout: $\theta$ is a pretrained autoregressive LM,
$\Ptheta(y\mid x)$ its probability for completion $y$ given prompt $x$, and
$h_\ell(x)$ its residual-stream representation at layer $\ell$. A
\emph{temporal fact quadruple}
$\mathcal{Q}=(q,a_{\mathrm{old}},a_{\mathrm{new}},t_{\mathrm{update}})$ pairs a
query $q$ with an answer $a_{\mathrm{old}}$ correct before
$t_{\mathrm{update}}$ and a superseding answer $a_{\mathrm{new}}$ correct
after it.

\begin{definition}[Parametric Temporal Conflict]
\label{def:ptc}
An LLM $\theta$ exhibits \PTC{} on $\mathcal{Q}$ if
\begin{align}
\Ptheta(a_{\mathrm{old}} \mid q) &> \Ptheta(a_{\mathrm{new}} \mid q), \\
\Ptheta(a_{\mathrm{new}} \mid q, c_{\mathrm{temp}}) &\gg
\Ptheta(a_{\mathrm{old}} \mid q, c_{\mathrm{temp}}),
\end{align}
where $c_{\mathrm{temp}}$ is a temporal cue and both answers are verified
as parametrically plausible completions.
\end{definition}

\paragraph{Knowledge-recovery filter.}
To ensure that \PTC{} reflects conflict rather than knowledge absence, as
benchmarks certify a property before probing it \citep{nazi2026omni}, we
retain a record only if
$\overline{\log P}_{\theta}(a_{\mathrm{new}}\mid q,c_{\mathrm{temp}})
\ge \tau_{\mathrm{rec}}$, with $\tau_{\mathrm{rec}}=-3$ applied to the
length-normalized answer log-probability (our pre-specified code default,
fixed before all downstream analysis and not tuned for Recovery). Records
below this threshold are marked \emph{knowledge-absent for that model} and
excluded from filtered \PTC{} rates, which we report as \PTC{} count over
retained records. We report both raw and filtered rates throughout
(\cref{fig:phase1-headline}). A sweep over $\tau_{\mathrm{rec}}\in\{-2.5,
-3.0,-3.5\}$ leaves every qualitative conclusion unchanged (the four-model
ordering is preserved at Spearman $\rho=1$); absolute rates vary smoothly
with coverage. Full sensitivity analysis, counts, and confidence intervals
are in Appendix~G.

\paragraph{Frequency-recency hypothesis.}
Older facts may form stronger attractors through greater cumulative
pre-training exposure \citep{kim2024carpe,fierro2024mulan}; we test this only
indirectly, by varying model scale and family, and defer within-model
checkpoint analysis to future work. That temporal signals are separable in
weight space \citep{nylund2024time} motivates analyzing \PTC{} at the
activation level.

\section{Causal Analysis Method}
\label{sec:tas}

To ask \emph{where} the outdated preference lives and \emph{whether one
direction moves it}, observation is not enough: both are causal questions,
so we must intervene on the model. We use two interventions on the residual
stream, both instruments for the experiments below rather than proposed
systems. \emph{Activation patching} asks \emph{where} a
temporal-cued state can causally change the answer. A \emph{single-direction
edit}, \TAS{} (Temporal Attractor Steering), asks \emph{whether a single
direction is causally implicated in the preference}. Full algorithms are in Appendix~B and
implementation details in Appendix~O.

\paragraph{Locating the preference.}
On held-out verified \PTC{} tuples we compute, for each layer $\ell$, the
answer-flip rate $\mathrm{AFR}(\ell)$ induced by patching $h_\ell$ from the
temporal-prompt forward pass into the standard-prompt forward pass at the
last prompt token. The peak conflict-critical layer is $\ell^{*} =
\argmax_\ell \mathrm{AFR}(\ell)$. We also report the contiguous plateau of
layers whose $\mathrm{AFR}$ falls within $0.05$ of peak, which is what
distinguishes a causal \emph{region} from a unique layer.

\paragraph{Testing directionality.}
At $\ell^{*}$ we form the direction
\[
\Delta = \mathrm{mean}_q\bigl[h_{\ell^*}(q,c_{\mathrm{temp}}) - h_{\ell^*}(q)\bigr],
\]
the mean shift the temporal cue induces, averaged over verified \PTC{}
instances, and intervene as
\begin{equation}
\label{eq:tas-update}
h'_{\ell^*}(q) = h_{\ell^*}(q) + \alpha \cdot c \cdot \Delta,
\end{equation}
with strength $\alpha>0$ and detector score $c$. Contrasting this edit with
a norm-matched random direction is our test for RQ3. We average $\Delta$
per relation; global and per-domain variants are statistically comparable
on held-out data (Appendix~K), and $\alpha$ is selected on validation
(Appendices~D and~E).

\paragraph{Detection.}
A linear probe on $h_{\ell^*}(q)$ scores whether a query exhibits \PTC{},
gating the edit at a threshold $\tau$. All learning and evaluation use one
\emph{subject-disjoint} split ($60/15/10/15$ by subject, stratified by
relation), so no subject, and hence no near-duplicate transition, crosses
partitions: the probe is fit on train, calibrated on calibration,
thresholded on validation, and reported on test. We include the gate only as
a diagnostic. Under this protocol the detector is near chance, so it is a
negative result rather than a working system, and our mechanistic claims
instead rest on the always-on evaluation over verified \PTC{} records. Full
protocol, leakage audit, and operating points are in Appendix~C.

\paragraph{Geometric motivation for a bounded edit.}
A local sufficient condition gives a geometric picture of when the edit
should flip the preference. Let the \emph{margin} be the amount by which the
model prefers $a_{\mathrm{new}}$ over $a_{\mathrm{old}}$ at a given state.
\PTC{} means the margin is negative under the plain prompt and positive
under the temporal cue, so if it increases along $\Delta$, a bounded step
carries it past zero. This is a motivation, not a guarantee: the condition is
local and sufficient rather than necessary, holds only where a first-order
approximation is valid, and presumes nothing about detector reliability or a
privileged layer. It is compatible with the partial Recovery we observe
($0.32$--$0.54$), which it neither predicts nor explains. Assumptions and
proofs are in Appendix~A.

\section{Benchmark and Experimental Setup}
\label{sec:setup}

\paragraph{Benchmark.}
The benchmark is built in four steps: source, verify, filter, render.
\emph{(1) Source.} We mine superseding position-holder updates from Wikidata
by SPARQL over five single-valued position-holder relations, taking
consecutive $(a_{\mathrm{old}}, a_{\mathrm{new}})$ transitions within each
subject. \emph{(2) Verify.} A transition is admitted only if it passes four
deterministic checks: \emph{abutting validity windows} (a genuine hand-off,
not a vacancy), \emph{minimum tenure} (so the outdated fact could establish
in pre-training text), \emph{single-holder and vacancy guards}, and
\emph{canonical English-label resolution}. $t_{\mathrm{update}}$ is the new
statement's start, restricted to $[2010,2024]$.

\emph{(3) Filter.} A verified record is admitted \emph{for a given model}
only if it also passes that model's knowledge-recovery filter; because this
step is model-dependent, raw and filtered \PTC{} rates are reported as a
pair throughout (\cref{fig:phase1-headline}). \emph{(4) Render.} Each of the
resulting $8{,}746$ records pairs its
$(a_{\mathrm{old}}, a_{\mathrm{new}}, t_{\mathrm{update}})$ triple with a
standard prompt (\emph{``Who is the $\langle$role$\rangle$ of
$\langle$subject$\rangle$?''}) and a temporal prompt (\emph{``As of
$\langle t \rangle$, who was the $\langle$role$\rangle$ of
$\langle$subject$\rangle$?''}), with $t$ inside the new fact's validity
window. Here ``verified'' means every record passes the deterministic checks,
not that any record was hand-checked; to enable independent verification we
release a stratified $200$-record audit sample with annotation guidelines. Exact
queries, thresholds, the audit protocol, and rejection breakdowns are in
Appendix~H.

\paragraph{Relation distribution.}
The benchmark spans five relations across three domains and is unbalanced,
reflecting each position-holder population at source: head of state (P35,
$n{=}183$) and CEO (P169, $n{=}208$) are smallest; head of government (P6),
head coach (P286), and chairperson (P488) largest. The smallest two carry
the largest per-record \PTC{} signal, so we report relation-level results
rather than aggregate rates alone (Appendix~I).

\paragraph{Models.}
We evaluate four open-weight base (non-instruct) LMs across three
families. \textbf{Qwen-2.5-1.5B}/\textbf{7B} share architecture, data, and
tokenizer, isolating parameter count; \textbf{Mistral-7B-v0.3} and
\textbf{Llama-3.1-8B} add two families at the matched $7$--$8$B class,
a limited matched-size cross-family comparison, not a separation of scale
from family: corpus, tokenizer, architecture, and cutoff remain
uncontrolled. We exclude instruction-tuned variants
because chat alignment confounds the parametric-memory signal
\citep{herel2024time}.

\paragraph{Operational metrics.}
Every metric compares answers by $s(a,x)$, the \emph{length-normalized}
log-probability of $a$ given $x$ (the per-token mean of $\log \Ptheta$, which
keeps answers of different lengths comparable). The first four differ mainly
in their \emph{denominator}, which is what to watch when reading the tables:
\begin{compactitem}
    \item \textbf{\OPR{}} (outdated-preference rate): of \emph{all} records,
    the fraction with $s(a_{\mathrm{old}},q)>s(a_{\mathrm{new}},q)$.
    \item \textbf{Raw \PTC{}}: of \emph{all} records, the fraction meeting
    both conditions of \cref{def:ptc}.
    \item \textbf{Filtered \PTC{}}: the same count over \emph{only records
    passing the knowledge-recovery filter}. This is our headline rate, as it
    excludes records the model cannot recover.
    \item \textbf{Recovery}: of records \emph{already known} to be \PTC{},
    the fraction flipping to $a_{\mathrm{new}}$ under an intervention
    (temporal cue, prompt baseline, or \TAS{}).
    \item \textbf{\EG{}} (elicitation gap): mean $s(a_{\mathrm{new}},q,c_{\mathrm{temp}})-s(a_{\mathrm{new}},q)$.
    \item \textbf{\PA{}} (preservation accuracy): fraction of matched non-conflict records whose argmax over $\{a_{\mathrm{old}},a_{\mathrm{new}}\}$ is unchanged.
\end{compactitem}
The full scoring equation, metric definitions, and ITI construction are in
Appendix~D.

\paragraph{Statistical protocol.}
All confidence intervals are $95\%$ percentile bootstraps ($B{=}10{,}000$,
fixed seed) computed by one seeded routine from the released per-instance
decisions, so every interval is reproducible rather than transcribed.
Paired comparisons use an exact McNemar test; the edit shifts the
outdated-answer preference at $p<10^{-9}$ on every model, establishing a
causal effect (its size relative to prompting is in \cref{tab:baselines}).
\section{Results}
\label{sec:results}

\subsection{Is the Newer Fact Absent or Dispreferred? (RQ1)}
\label{sec:results-screening}

Before asking why the newer fact loses, we must show it is present. Screening
under both prompts and keeping only records passing the knowledge-recovery
filter isolates cases where the model can produce the newer fact under a
temporal cue yet prefers the older one. Three patterns emerge
(\cref{fig:phase1-headline}). First, \OPR{} is nearly constant across the
$4$ evaluated models ($0.530$--$0.544$). Second, \EG{} is positive on each
of them ($+0.090$ to $+0.180$ nats), indicating that the temporal cue moves
probability mass toward $a_{\mathrm{new}}$ in every model. Third,
filtered \PTC{} increases across the evaluated model set
($0.041\!<\!0.071\!<\!0.085\!<\!0.103$; $95\%$ bootstrap CIs $[.033,.049]$,
$[.063,.080]$, $[.078,.093]$, $[.096,.111]$). Only the endpoints are clearly
separated (adjacent Qwen-7B and Mistral intervals overlap and span $2$
families), and the models differ in tokenizer, corpus, and cutoff as well as
size, so we read this as an observed trend, not a controlled effect of
capacity. The filter is essential here: raw \PTC{} partly reflects coverage
differences (Qwen-1.5B retains $28.2\%$ vs.\ Llama-3.1-8B's $66.1\%$), and
only the filtered rate separates the $4$ models.

\begin{figure*}[t]
    \centering
    \includegraphics[width=0.72\textwidth]{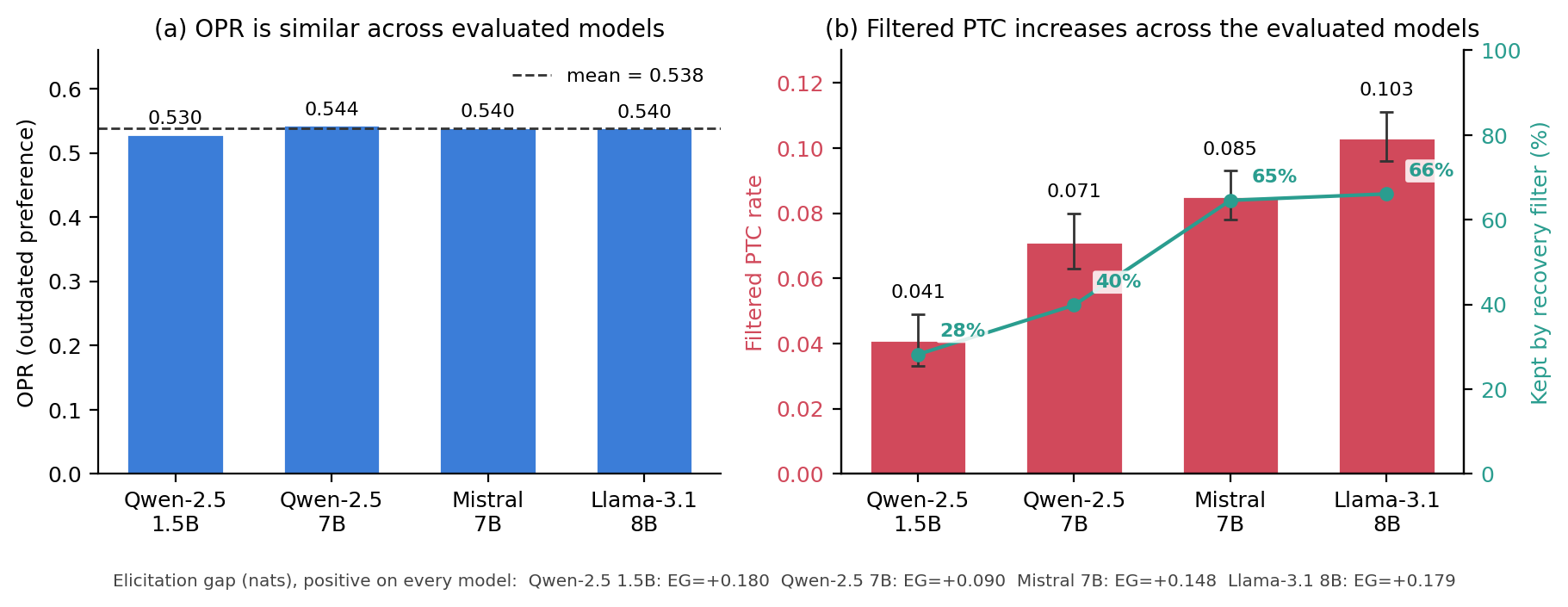}
    \caption{Phase~1 screening on the $8{,}746$-record benchmark.
    \textbf{(a)} The outdated-preference rate (\OPR{}) is nearly constant
    across the four evaluated models (all within $[0.530, 0.544]$).
    \textbf{(b)} The filtered \PTC{} rate \emph{increases across the
    evaluated model set} ($0.041 \to 0.103$); the models differ in family
    and cutoff as well as size, so this is an observed trend, not a
    controlled scaling result. \emph{Kept} is the
    fraction passing the knowledge-recovery filter
    ($\overline{\log P}(a_{\mathrm{new}}\mid q, c_{\mathrm{temp}}) \ge -3$;
    $28.2$--$66.1\%$). The elicitation gap is positive on every model
    ($\EG{} \in [+0.090, +0.180]$ nats).}
    \label{fig:phase1-headline}
\end{figure*}

That the filtered rate is lowest on the smallest model is at least
inconsistent with \PTC{} being a small-model artefact, which motivates the
mechanistic analysis below. Having established that the newer fact is
recoverable, we next ask where this preference is represented.

\subsection{Which Facts Conflict, and When?}
\label{sec:results-relations}

Where \PTC{} concentrates is informative about its origin, though these
analyses are observational, so we read them as associations, not causes.
Per-relation \PTC{} is uneven: head of state (P35) is strongest in every
model ($0.066$--$0.235$, $2$--$3\times$ the next relation), CEO (P169) shows
the next-largest increase, and the three high-volume relations contribute
most records but moderate per-record signal.

Bucketing by $t_{\mathrm{update}}$ year gives an inverted-U peaking at
$[2020, 2021]$ for the three $7$--$8$B models. Llama-3.1-8B alone retains a
substantial $2022$--$2024$ rate, consistent with its later cutoff, which
suggests a pre-training-recency association rather than a representational
one. The politics~$>$~corporate~$>$~sports ordering tracks the relation
distribution (P35 and P6 are both political), so we do not read it as
domain-specific structure. Full breakdowns are in Appendix~I.

\subsection{Where Is the Preference Represented? (RQ2)}
\label{sec:results-locator}

If the outdated answer is a representational preference, we should be able to
say \emph{where} conflict-relevant information becomes causally effective. We
ask, layer by layer, whether transplanting a temporal-cued state suffices to
change the answer (\cref{tab:locator}). Single-layer single-token activation
patching, drawn from the temporal-prompt forward pass and injected into the
standard-prompt forward pass at the last prompt token, flips the answer
argmax on $72.3$--$85.1\%$ of instances at the \emph{peak} conflict-critical
layer $\ell^{*}$. Patching identifies a family-specific \emph{upper-layer
causal region} rather than a unique layer: the peak sits inside a plateau of
contiguous near-peak layers. Recomputing AFR on held-out test subjects alone
reproduces the full-data peaks ($0.86$/$0.87$/$0.82$ for
Qwen-7B/Mistral/Llama; Qwen-1.5B $0.60$ at $n{=}10$), so the localization is
not an artefact of pooling splits.

\begin{table}[t]
\centering
\small
\setlength{\tabcolsep}{4pt}
\renewcommand{\arraystretch}{1.0}
\begin{tabular}{lccccc}
\toprule
\hdrrow
\hdr{Model} & \hdr{$L$} & \hdr{$\ell^*$} &
\hdr{Peak AFR} & \hdr{Plateau} & \hdr{Range} \\
\midrule
Qwen-2.5-1.5B   & 28 & 23 & 0.723 & 6  & 22--27 \\
\rowcolor{tablezebra}
Qwen-2.5-7B     & 28 & 23 & \cellcolor{tablehighlight}\textbf{0.851} & 6  & 22--27 \\
Mistral-7B-v0.3 & 32 & 30 & 0.824 & \cellcolor{tablehighlight}\textbf{14} & 18--31 \\
\rowcolor{tablezebra}
Llama-3.1-8B    & 32 & 31 & 0.816 & 9  & 23--31 \\
\bottomrule
\end{tabular}
\caption{Stage~2 layer localisation (full-data patching diagnostic; the
held-out test AFR reproduces these peaks). $L$: model depth;
$\ell^{*}$: \emph{peak} conflict-critical layer; \emph{Peak AFR}:
answer-flip rate at $\ell^{*}$ for a single-token activation patch from
the temporal-prompt into the standard-prompt forward pass;
\emph{Plateau}: contiguous layers with AFR within $0.05$ of peak (width
stable under $0.03$/$0.10$ too); \emph{Range}: layer indices of that
plateau. The two Qwens share a tight $6$-layer mid-network plateau,
Mistral's is unusually wide ($14$ layers), and Llama's ($9$) sits at the
final decoder layers.}
\label{tab:locator}
\end{table}

The important column in \cref{tab:locator} is \emph{Plateau}: every model has
several contiguous near-peak layers, so patching implicates a causal
\emph{region}, not one privileged layer.

\paragraph{Patch-localization is not edit-localization.}
Where a fixed learned edit remains effective is a different question from
where a temporal-cued state can causally alter the answer, and the two
profiles diverge (\cref{tab:layer-localization}). Applying the same
train-only $\Delta_{\ell^*}$ at an early off-peak layer (patching AFR near
zero, $0.01$--$0.09$, outside every plateau) recovers \emph{at least as
many} held-out test cases as steering at $\ell^*$ on all four models, and
\emph{significantly more} on Mistral ($0.451$ vs.\ $0.324$; paired McNemar
$p{=}0.02$). Across evaluated layers, held-out Recovery is
\emph{negatively} correlated with patching AFR (Spearman $\rho=-0.67$ to
$-0.98$) but \emph{positively} with the geometric coherence of the temporal
shift ($\rho=0.89$--$0.98$): coherence is highest at early layers
($\cos\approx0.95$) and lowest at $\ell^{*}$ ($\cos\approx0.45$). This is
consistent with a fixed edit being \emph{portable} across the layers we
evaluate, while $\ell^{*}$ is where the patch \emph{reads out} most reliably
and selectively (highest PA). We therefore claim direction-specificity, not
layer-uniqueness; cheap importance signals likewise mistrack causal effect
across tokens \citep{jiang2026rock}. Curves, geometry, and paired tests are
in Appendices~F
and~J.

\begin{table}[t]
\centering
\small
\setlength{\tabcolsep}{4pt}
\renewcommand{\arraystretch}{1.0}
\resizebox{\columnwidth}{!}{%
\begin{tabular}{lcccccc}
\toprule
\hdrrow
\hdr{Model} & \hdr{$\ell^*$} & \hdr{Peak AFR} & \hdr{plateau} &
\hdr{off-peak} & \hdr{$\ell^*$ steer} & \hdr{off-peak steer} \\
 & & & (width) & (AFR) & Rec. & Rec. \\
\midrule
Qwen-2.5-1.5B   & 23 & 0.723 & 22--27 (6)  & 2 (0.09) & 0.469 & 0.594 \\
Qwen-2.5-7B     & 23 & 0.851 & 22--27 (6)  & 2 (0.05) & 0.538 & 0.667 \\
Mistral-7B-v0.3 & 30 & 0.824 & 18--31 (14) & 3 (0.01) & 0.324 & 0.451 \\
Llama-3.1-8B    & 31 & 0.816 & 23--31 (9)  & 3 (0.03) & 0.356 & 0.356 \\
\bottomrule
\end{tabular}}
\caption{Patch- vs.\ edit-localization. \emph{Peak AFR} and \emph{plateau}
are full-data patching diagnostics (\cref{tab:locator}); \emph{off-peak}
is the early control layer, its (near-zero) patching AFR in parentheses.
\emph{$\ell^*$ steer} / \emph{off-peak steer} are held-out test Recovery
from applying the same train-only $\Delta_{\ell^*}$ at $\ell^{*}$ vs.\ at
the off-peak layer (validation-selected $\alpha$; portability experiment,
not a re-selection of $\ell^{*}$). Off-peak steering matches or exceeds
$\ell^{*}$ steering everywhere despite near-zero off-peak patching AFR.}
\label{tab:layer-localization}
\end{table}

\subsection{Can a Single Direction Move the Preference? (RQ3)}
\label{sec:results-e2e}

Having located \emph{where} the temporal-cued state most influences the
answer, we ask whether a
single direction is implicated in it. This is our primary held-out evidence: the
direction is built on \emph{train} only, $\alpha$ selected on
\emph{validation}, and the edit evaluated once on untouched \emph{test}
subjects. Applied \emph{always-on}, it moves $0.32$--$0.54$ of verified
\PTC{} cases to the newer fact while leaving $0.79$--$0.85$ of clean
non-conflict answers intact (\cref{tab:tas-main}). \emph{Always-on} means
every steered record is already known to be a \PTC{} instance, so this is an
oracle-gated diagnostic: it measures how much of the preference one
direction accounts for, and upper-bounds any pipeline that must first find
the conflicts itself (\cref{tab:detector-gated}). Why the effect is largest
on Qwen-2.5-7B is analysed in Appendix~L.

\begin{table}[t]
\centering
\small
\setlength{\tabcolsep}{5pt}
\renewcommand{\arraystretch}{1.0}
\begin{tabular}{lcccc}
\toprule
\hdrrow
\hdr{Model} & \hdr{$n$} & \hdr{$\alpha^{*}$} &
\hdr{Recovery} & \hdr{PA (clean)} \\
\midrule
Qwen-2.5-1.5B   & 32 & 2 & 0.469\,{\scriptsize[.31,.63]} & 0.820 \\
Qwen-2.5-7B     & 39 & 2 & \cellcolor{tablehighlight}\textbf{0.538}\,{\scriptsize[.38,.69]} & 0.795 \\
Mistral-7B-v0.3 & 71 & 4 & 0.324\,{\scriptsize[.23,.44]} & 0.850 \\
Llama-3.1-8B    & 87 & 6 & 0.356\,{\scriptsize[.25,.46]} & 0.785 \\
\bottomrule
\end{tabular}
\caption{\textbf{Always-on \TAS{}} (V2 per-relation $\Delta$) on the
held-out subject-disjoint test split. Direction built on \emph{train}
only; $\alpha^{*}$ selected on \emph{validation}; evaluated once on the
$n$ test verified-\PTC{} records and the $200$ clean controls. Recovery
carries a $95\%$ percentile-bootstrap CI. The detector-gated pipeline and
corrected detector metrics are in \cref{tab:detector-gated}, and the
paired comparison with ITI in \cref{tab:tas-vs-iti}.}
\label{tab:tas-main}
\end{table}

\cref{tab:tas-main} shows one direction accounts for roughly a third to a
half of the preference while leaving most clean answers intact; the
random-direction comparison that makes this evidence for
\emph{direction-specificity} comes below.

\subsection{Can \PTC{} Be Identified Automatically?}
\label{sec:results-tau}

Every recovery number so far assumes an oracle told us which queries are in
conflict. Can the model find them itself? This experiment tests that, and
the answer is a negative result. Under the subject-disjoint
protocol the detector is a weak discriminator,
and its quality \emph{varies across models}: held-out AUROC is only
$0.47$--$0.66$ (near chance for Qwen-2.5-1.5B, \cref{tab:detector-gated})
and AUPRC $0.03$--$0.10$ at the true low prevalence ($2$--$7\%$); good
calibration reflects predicting the base rate, not separating conflicts. At
$\tau{=}0.15$ the gate fires on $0.5$--$10.9\%$ of queries yet catches just
$3$--$23\%$ of true conflicts, so detector-gated Recovery collapses to
$0.03$--$0.08$
against an \emph{always-on} (oracle-gated) $0.32$--$0.54$, and the high
preservation ($0.97$--$1.00$) partly reflects this low firing rate rather
than strong discrimination. We therefore treat the gated pipeline as a
negative result and the always-on verified-\PTC{} evaluation as the
mechanistic analysis: the edit has a measurable causal effect on
oracle-identified conflicts, but conflict detection remains unresolved, as
does prospective self-assessment generally \citep{nazi2026triage}. Full
operating points, curves, and score distributions are in Appendix~C.

\begin{table}[t]
\centering
\small
\setlength{\tabcolsep}{2pt}
\renewcommand{\arraystretch}{1.0}
\begin{tabular}{@{}lccccc@{}}
\toprule
\hdrrow
\hdr{Model} & \hdr{AUROC} & \hdr{AUPRC} & \hdr{FPR} &
\hdr{Gated} & \hdr{Always-} \\
\hdrrow
& & & & \hdr{Rec.} & \hdr{on Rec.} \\
\midrule
Qwen-2.5-1.5B   & 0.467 & 0.028 & 0.005 & 0.031 & 0.469 \\
Qwen-2.5-7B     & 0.571 & 0.056 & 0.025 & 0.077 & 0.538 \\
Mistral-7B-v0.3 & 0.657 & 0.092 & 0.103 & 0.056 & 0.324 \\
Llama-3.1-8B    & 0.598 & 0.097 & 0.058 & 0.069 & 0.356 \\
\bottomrule
\end{tabular}
\caption{Corrected detector-gated \TAS{} on the held-out test split.
\emph{AUROC/AUPRC}: corrected detector on test (train-fit,
calibration-calibrated); AUPRC at the true test base rate. \emph{FPR}:
false-positive rate at $\tau{=}0.15$. \emph{Gated}: end-to-end Recovery of
the detect$\to$steer pipeline at the most permissive validation threshold
($\tau{=}0.15$); \emph{Always-on}: Recovery when every verified-\PTC{}
record is steered (the mechanistic upper bound, \cref{tab:tas-vs-iti}).
The gated pipeline recovers almost nothing because the corrected detector
is near chance; full per-threshold operating points with $95\%$ CIs are in
Appendix~C.}
\label{tab:detector-gated}
\end{table}

\subsection{Does the Direction Need Verified Conflicts? (RQ3)}
\label{sec:results-iti}

Does verifying conflicts do any work, or would a population-level
behavioural contrast find the same direction? We test this against an
ITI~\citep{li2023inference} baseline that shares $\ell^{*}$, the
single-direction parameterisation, and the $\alpha$ sweep, so only the
construction of $\Delta$ varies: $\Delta_{\mathrm{ITI}}$ is the prefers-new
minus prefers-old mean of $h_{\ell^{*}}$ under the standard prompt. That
contrast is itself \PTC{}-relevant, making this a strong test of
\emph{verified-conflict} vs.\ population-level construction (the shared
operating-point rule is in Appendix~D).

\begin{table*}[t]
\centering
\small
\setlength{\tabcolsep}{5pt}
\renewcommand{\arraystretch}{1.0}
\begin{tabular}{lc|cccc|ccc}
\toprule
\hdrrow
& & \multicolumn{4}{c}{\hdr{Recovery (verified-\PTC{} test)}}
& \multicolumn{3}{c}{\hdr{PA (clean-control test)}} \\
\hdrrow
\hdr{Model} & \hdr{$n$} &
\hdr{TAS} & \hdr{ITI} & \hdr{T/I disc.} & \hdr{$p_{\mathrm{Holm}}$} &
\hdr{TAS} & \hdr{ITI} & \hdr{$p_{\mathrm{Holm}}$} \\
\midrule
Qwen-2.5-1.5B   & 32 & 0.469 & 0.406 & 5/3   & 1.00 & 0.820 & 0.895 & \textbf{0.006} \\
Qwen-2.5-7B     & 39 & 0.538 & 0.538 & 2/2   & 1.00 & 0.795 & 0.845 & 0.23 \\
Mistral-7B-v0.3 & 71 & 0.324 & 0.380 & 6/10  & 1.00 & 0.850 & 0.830 & 0.56 \\
Llama-3.1-8B    & 87 & 0.356 & 0.287 & 19/13 & 1.00 & 0.785 & 0.835 & 0.24 \\
\bottomrule
\end{tabular}
\caption{\textbf{Held-out paired \TAS{} vs.\ ITI}, subject-disjoint split.
Both directions built on \emph{train} only, $\alpha$ selected independently
on \emph{validation}, each evaluated \emph{once} on the identical test
verified-\PTC{} records ($n$) and $200$ clean controls; only $\Delta$
construction differs. \emph{T/I disc.}: McNemar discordant pairs;
$p_{\mathrm{Holm}}$: exact two-sided McNemar $p$ after Holm correction.
\textbf{No Recovery difference is significant}; ITI preserves better on
Qwen-2.5-1.5B. Oracle full-data comparison in Appendix~E.}
\label{tab:tas-vs-iti}
\end{table*}

Under the held-out protocol (\cref{tab:tas-vs-iti}), the verified-conflict
$\Delta$ shows \emph{no significant Recovery advantage} over the
behavioural-contrast $\Delta$: it leads numerically on two models, ties on
one, trails on one, and every paired McNemar test is non-significant after
Holm correction ($p_{\mathrm{Holm}}=1.00$ throughout). The large Qwen
advantages in the earlier oracle, full-data comparison ($+0.142$, $+0.152$)
do not survive held-out evaluation: they reflected test-set-optimal $\alpha$
and directions built from the whole benchmark (Appendix~E). On preservation,
ITI is the more conservative method ($0.895$ vs.\ $0.820$ on Qwen-2.5-1.5B;
$p_{\mathrm{Holm}}=0.006$). We read this as a model-dependent,
non-significant difference in $\Delta$ construction rather than superiority
of either: verifying conflicts does not reliably beat a population-level
contrast at matched held-out cost.

\subsection{Is the Effect Direction-Specific? (RQ3)}
\label{sec:results-baselines}

Our central causal claim is that a \emph{specific} direction, not generic
perturbation, is causally implicated in shifting the preference. The decisive test
fixes the intervention magnitude and varies only \emph{what} direction we
move along; a norm-matched random direction is the control this claim must
survive. The same experiment positions the intervention against cheap
prompting (\cref{tab:baselines}). The prompt baselines prepend a minimal
date cue (\emph{``In $\langle t\rangle$: \dots''}) or an instruction
(\emph{``Answer with the current, most up-to-date fact.\dots''}) to the
standard question. The controls hold magnitude fixed at $\alpha^{*}$ and
change only what or where we steer: a random unit direction rescaled to
$\lVert\Delta\rVert$ at $\ell^{*}$, and the genuine $\ell^{*}$ direction
applied at an early null layer.

\begin{table*}[t]
\centering
\small
\setlength{\tabcolsep}{5pt}
\renewcommand{\arraystretch}{1.0}
\begin{tabular}{lccccc|cc}
\toprule
\hdrrow
& & \multicolumn{4}{c}{\hdr{Recovery}} & \multicolumn{2}{c}{\hdr{Steering controls}} \\
\hdrrow
\hdr{Model} & \hdr{$n_{\PTC{}}$} &
\hdr{Date-prefix} & \hdr{Instruction} & \hdr{\TAS{} (oracle)} &
\hdr{Standard} & \hdr{Random-dir} & \hdr{Off-peak} \\
\midrule
Qwen-2.5-1.5B   & 101 & 0.614\,{\scriptsize[.51,.70]} & 0.416 & 0.515 & 0.000 & 0.257\,{\scriptsize[.18,.35]} & 0.446 \\
Qwen-2.5-7B     & 248 & 0.782\,{\scriptsize[.73,.83]} & 0.278 & 0.617 & 0.000 & 0.177\,{\scriptsize[.13,.23]} & 0.448 \\
Mistral-7B-v0.3 & 481 & 0.794\,{\scriptsize[.76,.83]} & 0.154 & 0.293 & 0.000 & 0.241\,{\scriptsize[.20,.28]} & 0.412 \\
Llama-3.1-8B    & 596 & 0.805\,{\scriptsize[.77,.84]} & 0.193 & 0.382 & 0.000 & 0.312\,{\scriptsize[.28,.35]} & 0.456 \\
\bottomrule
\end{tabular}
\caption{Prompt baselines and steering controls on the identical
oracle-identified \PTC{} subset ($n_{\PTC{}}$ per model), same
length-normalised log-prob protocol. \emph{Standard} Recovery is $0$ by
construction. Brackets are $95\%$ percentile-bootstrap CIs
($B{=}10{,}000$). A date-prefix prompt recovers \emph{more} than \TAS{} on
every model; the norm-matched random-direction control collapses well
below it, so recovery depends on the conflict direction, not generic
perturbation. \emph{Off-peak} applies the same $\Delta$ at an early layer.
This is a full-data oracle diagnostic; the held-out portability analysis is
\cref{tab:layer-localization}.}
\label{tab:baselines}
\end{table*}

Two conclusions follow, and both sharpen rather than inflate the claim.
First, \emph{a date-prefix prompt out-recovers \TAS{} on all four models}
($0.61$--$0.81$ vs.\ $0.29$--$0.62$) at comparable or better preservation
(PA $0.93$--$0.98$ vs.\ $0.85$--$0.94$; both oracle diagnostics on the same
full-data subset). \TAS{} is therefore \emph{not} a replacement for
prompting; its role is to show causally that the preference can be shifted
by a targeted edit, which prompting neither localises nor explains. Second,
with magnitude held fixed, a norm-matched random vector drops Recovery to
$0.18$--$0.31$, so the effect depends on the \emph{direction}, while the
true $\Delta$ stays effective at an early off-peak layer. The intervention
is direction-specific, not tied to a unique layer.

\subsection{Do Models Conflict on the Same Facts? (RQ4)}
\label{sec:results-overlap}

Comparing which records each model flags is exploratory evidence about
whether \PTC{}-instance agreement is associated more with architectural
lineage or with apparent training-data recency. On
the $8{,}734$ records common to all four runs, the \PTC{}-positive
sets are largely disjoint (\cref{fig:overlap}): only $6$ records are
flagged by all four models (union $1{,}287$), and the
Mistral~$\cap$~Llama overlap ($193$) exceeds the two-Qwen overlap ($38$)
by more than $5\times$ despite the two Qwens sharing architecture and
training data. Among the pairs we can compare, \PTC{}-instance agreement is
thus more closely associated with shared pre-training recency than with
architectural lineage; this is an observational comparison over four models,
so suggestive rather than conclusive. Cross-family transfer is in Appendix~N.

\begin{figure}[t]
    \centering
    \includegraphics[width=\columnwidth]{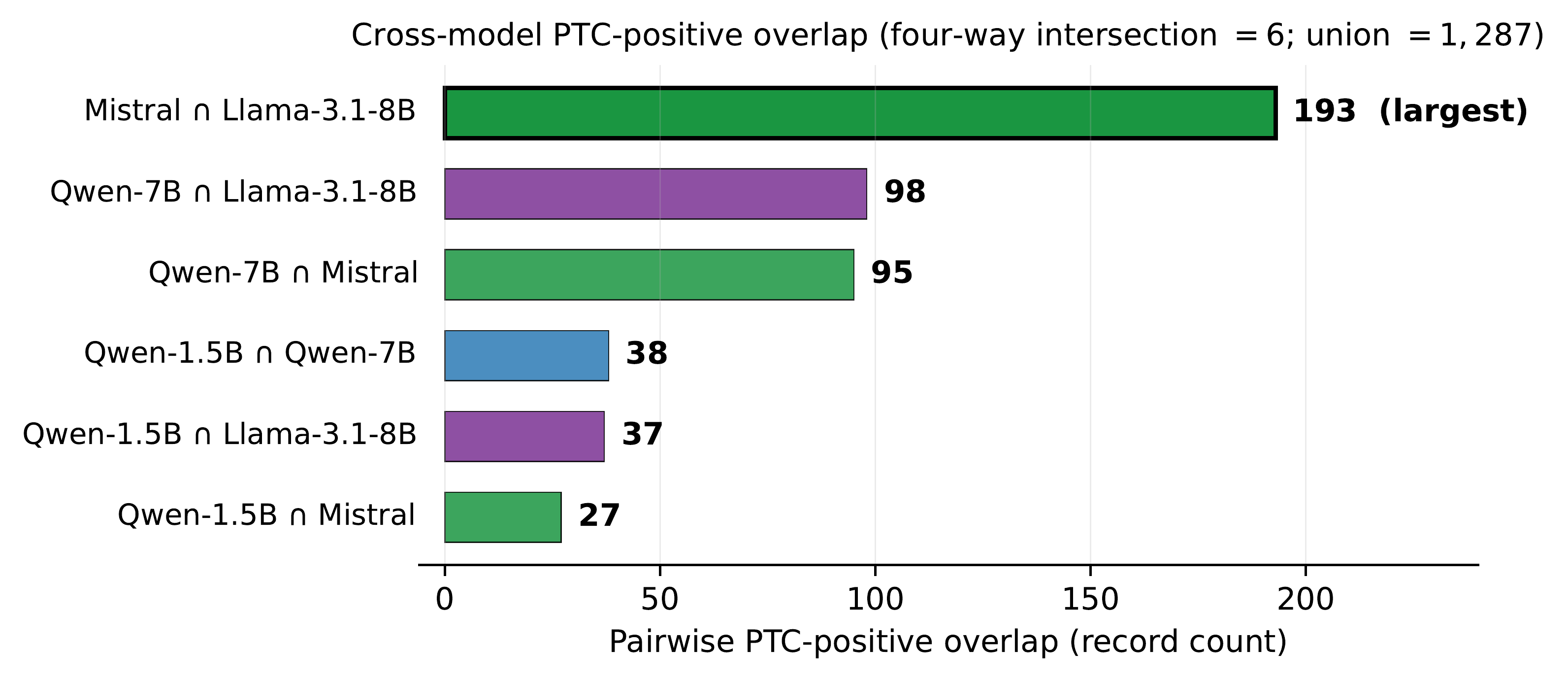}
    \caption{Pairwise and four-way \PTC{}-positive overlap on the
    $8{,}734$ records common to all four runs. The Mistral $\cap$
    Llama-3.1-8B intersection ($193$) is the largest pairwise overlap and
    substantially exceeds the two-Qwen overlap ($38$), even though the two
    Qwens share architecture and training data.}
    \label{fig:overlap}
\end{figure}

\paragraph{Free-generation robustness.}
Under unrestricted greedy decoding, base-model output is frequently
dominated by definitional or non-entity continuations, and no intervention
consistently produced the newer entity. We therefore treat candidate-level
scoring as the primary mechanistic evaluation, and report free generation in
Appendix~M.

\section{Related Work}
\label{sec:related}

\paragraph{Temporal QA and knowledge updating.}
LMs degrade on time-anchored recall
\citep{chen2021dataset,zhang2021situatedqa,kasai2023realtime} and temporal
reasoning \citep{chu2024timebench,tan2024towards,zhu2023question,
zhu2025evolvebench,zhang2024mrag,uddin2025unseentimeqa,fatemi2025test}.
Retrieval and year-conditioning \citep{vu2024freshllms,zhao2024set,
herel2024time} and continual learning \citep{dhingra2022time,zhao2022can,
kim2024carpe} only partially close the gap, and unlearning
\citep{sarwar-etal-2026-multimodal} removes rather than reprioritises.
None isolates the
\emph{parametric coexistence} targeted by \PTC{}.

\paragraph{Knowledge conflicts and steering.}
Temporal misalignment is the primary driver of intra-memory conflict
\citep{xu2024knowledge}, and hidden states are known to encode
temporal or mutability structure
\citep{fierro2024mulan,dey2026temporal,marjanovic2024dynamicqa,
bajpai2024temporally}, though none of this work targets outdated
\emph{preference}. \citet{nylund2024time} structure temporal signals in
\emph{fine-tuned} weight space; we ask the analogous question of the
\emph{pretrained} residual stream. Closest to us,
\citet{pham2026knowledge} concurrently locate intra-memory conflicts by
patching synthetic biographies, but provide no recovery filter and no
verified real-world evaluation. Existing steering methods either intervene
unconditionally, without separating absence from conflict
\citep{govindan2025temporal}, or target different conflicts entirely
\citep{li2025taming,kang2026model,li2026costom}, or reweight attention by
uncertainty rather than conflict \citep{hossain2026uat}.
ITI~\citep{li2023inference} is our matched conflict-agnostic baseline, and
ROME/MEMIT~\citep{meng2022locating,meng2022mass} edit weights via causal
tracing \citep{geva2021transformer,hase2023does} rather than at query time.
Against these, our contribution is
the phenomenon itself: we characterize \PTC{} on \emph{real superseding
facts} rather than synthetic patches, and localise it with an edit built
from \emph{verified per-instance conflicts}.

\section{Conclusion}
\label{sec:conclusion}

Parametric Temporal Conflict exposes a previously uncharacterized failure
mode in parametric knowledge retrieval: among outdated answers that pass our
recoverability criterion, the newer fact is present and merely dispreferred
rather than absent. We separate what our experiments settle from what they
merely favour.

\paragraph{Established.}
Under our benchmark definition, in the four evaluated models, \PTC{} exists
and is measurable: where the newer fact is verifiably elicitable, the model
still prefers the outdated one, and a date-prefix prompt recovers it in
$61$--$81\%$ of those cases. Two causal interventions localize it.
Single-layer patching flips $72$--$85\%$ of cases within a model-specific
upper-layer region. With magnitude held fixed, a single-direction edit moves
$29$--$62\%$ of oracle-identified conflicts against $18$--$31\%$ for a
norm-matched random direction, implicating the direction rather than generic
perturbation.

\paragraph{Supported but not proven.}
The filtered-\PTC{} trend and its association with pre-training recency are
consistent with capacity and data recency mattering, but four models
spanning three families cannot separate those factors. That \PTC{} extends
beyond single-holder position-holder relations is plausible but untested.

\paragraph{Open.}
Practical conflict detection is unresolved (our detector is near chance
under subject-disjoint evaluation, which is why every recovery figure above
is oracle-conditioned), as is whether such an intervention can be made
both selective and competitive.

The line between engineering utility and scientific evidence is therefore
deliberate. Our intervention loses to a date-prefix prompt, ties a matched
ITI baseline, and cannot be gated reliably: real negative results about
\emph{deployment} that leave the mechanistic findings untouched, because
the causal claims rest on internally controlled contrasts, not on the edit
being the best available fix. A probe need not outperform a prompt to reveal
where a preference lives.

\section*{Limitations and Ethical Considerations}

Because the benchmark is conditioned on verified superseding transitions
rather than sampled user queries, our rates are not estimates of real-world
\PTC{} prevalence. Beyond the scope and detection limits above, our analysis
needs residual-stream access (open-weight, not API-only models), and held-out
\PTC{} counts are small for the smaller models ($n{=}32$ for Qwen-2.5-1.5B),
leaving per-model comparisons imprecise. Ethically, activation edits alter
representations without provenance, so the mechanism that corrects an
outdated fact could install a false belief leaving no trace; we treat the
edit strictly as an analysis instrument (auditability in \cref{app:complexity-analysis}).

\section*{Author Contributions}

Elias Hossain, Sourav Saha, and Tasfia Nuzhat Ornee contributed equally.
They jointly developed the method and the experimental pipeline, debugged
and improved the implementation, ran the evaluations, and wrote the
technical content of the paper. Sanjeda Sara Jennifer carried out the
literature review. Shubhashis Roy Dipta reviewed the manuscript,
identified issues in the analysis and presentation, and contributed
revisions that improved the work. Umesh Chandra Biswas reviewed the
manuscript. Niloofar Yousefi and Rajib Rana supervised the project and
provided feedback throughout.

\bibliography{ref,roydipta}

\begin{thebibliography}{37}
\providecommand{\natexlab}[1]{#1}

\bibitem[{Bajpai et~al.(2024)Bajpai, Goyal, Anwer, and Chakraborty}]{bajpai2024temporally}
Ashutosh Bajpai, Aaryan Goyal, Atif Anwer, and Tanmoy Chakraborty. 2024.
\newblock Temporally consistent factuality probing for large language models.
\newblock In \emph{Proceedings of the 2024 Conference on Empirical Methods in Natural Language Processing}, pages 15864--15881.

\bibitem[{Chen et~al.(2021)Chen, Wang, and Wang}]{chen2021dataset}
Wenhu Chen, Xinyi Wang, and William~Yang Wang. 2021.
\newblock \href {https://arxiv.org/abs/2108.06314} {A dataset for answering time-sensitive questions}.
\newblock \emph{CoRR}, abs/2108.06314.

\bibitem[{Chu et~al.(2024)Chu, Chen, Chen, Yu, Wang, Liu, and Qin}]{chu2024timebench}
Zheng Chu, Jingchang Chen, Qianglong Chen, Weijiang Yu, Haotian Wang, Ming Liu, and Bing Qin. 2024.
\newblock Timebench: A comprehensive evaluation of temporal reasoning abilities in large language models.
\newblock In \emph{Proceedings of the 62nd Annual Meeting of the Association for Computational Linguistics (Volume 1: Long Papers)}, pages 1204--1228.

\bibitem[{Dey et~al.(2026)Dey, Ounis, McDonald, and Moshfeghi}]{dey2026temporal}
Ritajit Dey, Iadh Ounis, Graham McDonald, and Yashar Moshfeghi. 2026.
\newblock Temporal fact conflicts in llms: reproducibility insights from unifying dynamicqa and mulan.
\newblock \emph{arXiv preprint arXiv:2603.15892}.

\bibitem[{Dhingra et~al.(2022)Dhingra, Cole, Eisenschlos, Gillick, Eisenstein, and Cohen}]{dhingra2022time}
Bhuwan Dhingra, Jeremy~R Cole, Julian~Martin Eisenschlos, Daniel Gillick, Jacob Eisenstein, and William~W Cohen. 2022.
\newblock Time-aware language models as temporal knowledge bases.
\newblock \emph{Transactions of the Association for Computational Linguistics}, 10:257--273.

\bibitem[{Fatemi et~al.(2025)Fatemi, Kazemi, Tsitsulin, Malkan, Yim, Palowitch, Seo, Halcrow, and Perozzi}]{fatemi2025test}
Bahare Fatemi, Seyed~Mehran Kazemi, Anton Tsitsulin, Karishma Malkan, Jinyeong Yim, John Palowitch, Sungyong Seo, Jonathan Halcrow, and Bryan Perozzi. 2025.
\newblock Test of time: A benchmark for evaluating llms on temporal reasoning.
\newblock In \emph{International Conference on Learning Representations}, volume 2025, pages 94426--94447.

\bibitem[{Fierro et~al.(2024)Fierro, Garneau, Bugliarello, Kementchedjhieva, and S{\o}gaard}]{fierro2024mulan}
Constanza Fierro, Nicolas Garneau, Emanuele Bugliarello, Yova Kementchedjhieva, and Anders S{\o}gaard. 2024.
\newblock Mulan: A study of fact mutability in language models.
\newblock In \emph{Proceedings of the 2024 Conference of the North American Chapter of the Association for Computational Linguistics: Human Language Technologies (Volume 2: Short Papers)}, pages 762--771.

\bibitem[{Geva et~al.(2021)Geva, Schuster, Berant, and Levy}]{geva2021transformer}
Mor Geva, Roei Schuster, Jonathan Berant, and Omer Levy. 2021.
\newblock Transformer feed-forward layers are key-value memories.
\newblock In \emph{Proceedings of the 2021 Conference on Empirical Methods in Natural Language Processing}, pages 5484--5495.

\bibitem[{Govindan et~al.(2025)Govindan, Pagnucco, and Song}]{govindan2025temporal}
Sanjay Govindan, Maurice Pagnucco, and Yang Song. 2025.
\newblock Temporal alignment of time sensitive facts with activation engineering.
\newblock \emph{arXiv preprint arXiv:2505.14158}.

\bibitem[{Hase et~al.(2023)Hase, Bansal, Kim, and Ghandeharioun}]{hase2023does}
Peter Hase, Mohit Bansal, Been Kim, and Asma Ghandeharioun. 2023.
\newblock Does localization inform editing? surprising differences in causality-based localization vs. knowledge editing in language models.
\newblock \emph{Advances in Neural Information Processing Systems}, 36:17643--17668.

\bibitem[{Herel et~al.(2024)Herel, Bartek, Jirak, and Mikolov}]{herel2024time}
David Herel, Vojtech Bartek, Jiri Jirak, and Tomas Mikolov. 2024.
\newblock Time awareness in large language models: benchmarking fact recall across time.
\newblock \emph{arXiv preprint arXiv:2409.13338}.

\bibitem[{Hossain et~al.(2026)Hossain, Roy~Dipta, Neupane, Rana, Shwartz-Ziv, Garibay, and Yousefi}]{hossain2026uat}
Elias Hossain, Shubhashis Roy~Dipta, Subash Neupane, Rajib Rana, Ravid Shwartz-Ziv, Ivan Garibay, and Niloofar Yousefi. 2026.
\newblock {UAT-LITE}: Inference-time uncertainty-aware attention for pretrained transformers.
\newblock \emph{arXiv preprint arXiv:2602.02952}.

\bibitem[{Jiang et~al.(2026)Jiang, Li, Roy~Dipta, Li, and Yang}]{jiang2026rock}
Yuxuan Jiang, Runchao Li, Shubhashis Roy~Dipta, Dawei Li, and Zhao Yang. 2026.
\newblock Cornerstones or stumbling blocks? deciphering the rock tokens in on-policy distillation.
\newblock \emph{arXiv preprint arXiv:2605.09253}.

\bibitem[{Kang et~al.(2026)Kang, Shi, and Chen}]{kang2026model}
Xinyue Kang, Diwei Shi, and Li~Chen. 2026.
\newblock Model whisper: Steering vectors unlock large language models’ potential in test-time.
\newblock In \emph{Proceedings of the AAAI Conference on Artificial Intelligence}, volume~40, pages 31392--31400.

\bibitem[{Kasai et~al.(2024)Kasai, Sakaguchi, Takahashi, Bras, Asai, Yu, Radev, Smith, Choi, and Inui}]{kasai2023realtime}
Jungo Kasai, Keisuke Sakaguchi, Yoichi Takahashi, Ronan~Le Bras, Akari Asai, Xinyan Yu, Dragomir Radev, Noah~A. Smith, Yejin Choi, and Kentaro Inui. 2024.
\newblock \href {https://arxiv.org/abs/2207.13332} {Realtime qa: What's the answer right now?}
\newblock \emph{Preprint}, arXiv:2207.13332.

\bibitem[{Kim et~al.(2024)Kim, Yoon, Ye, Bae, Ho, Hwang, and Yun}]{kim2024carpe}
Yujin Kim, Jaehong Yoon, Seonghyeon Ye, Sangmin Bae, Namgyu Ho, Sung~Ju Hwang, and Se-Young Yun. 2024.
\newblock Carpe diem: On the evaluation of world knowledge in lifelong language models.
\newblock In \emph{Proceedings of the 2024 Conference of the North American Chapter of the Association for Computational Linguistics: Human Language Technologies (Volume 1: Long Papers)}, pages 5401--5415.

\bibitem[{Li et~al.(2025)Li, Chen, and Tong}]{li2025taming}
Gaotang Li, Yuzhong Chen, and Hanghang Tong. 2025.
\newblock Taming knowledge conflicts in language models.
\newblock \emph{arXiv preprint arXiv:2503.10996}.

\bibitem[{Li et~al.(2023)Li, Patel, Vi{\'e}gas, Pfister, and Wattenberg}]{li2023inference}
Kenneth Li, Oam Patel, Fernanda Vi{\'e}gas, Hanspeter Pfister, and Martin Wattenberg. 2023.
\newblock Inference-time intervention: Eliciting truthful answers from a language model.
\newblock \emph{Advances in Neural Information Processing Systems}, 36:41451--41530.

\bibitem[{Li et~al.(2026)Li, Shi, and Deng}]{li2026costom}
Mengfan Li, Xuanhua Shi, and Yang Deng. 2026.
\newblock Costom: Causal-oriented steering for intrinsic theory-of-mind alignment in large language models.
\newblock In \emph{Proceedings of the 64th Annual Meeting of the Association for Computational Linguistics (Volume 1: Long Papers)}, pages 9302--9317.

\bibitem[{Marjanovi{\'c} et~al.(2024)Marjanovi{\'c}, Yu, Atanasova, Maistro, Lioma, and Augenstein}]{marjanovic2024dynamicqa}
Sara~Vera Marjanovi{\'c}, Haeun Yu, Pepa Atanasova, Maria Maistro, Christina Lioma, and Isabelle Augenstein. 2024.
\newblock Dynamicqa: Tracing internal knowledge conflicts in language models.
\newblock In \emph{Findings of the Association for Computational Linguistics: EMNLP 2024}, pages 14346--14360.

\bibitem[{Meng et~al.(2022{\natexlab{a}})Meng, Bau, Andonian, and Belinkov}]{meng2022locating}
Kevin Meng, David Bau, Alex Andonian, and Yonatan Belinkov. 2022{\natexlab{a}}.
\newblock Locating and editing factual associations in gpt.
\newblock \emph{Advances in neural information processing systems}, 35:17359--17372.

\bibitem[{Meng et~al.(2022{\natexlab{b}})Meng, Sharma, Andonian, Belinkov, and Bau}]{meng2022mass}
Kevin Meng, Arnab~Sen Sharma, Alex Andonian, Yonatan Belinkov, and David Bau. 2022{\natexlab{b}}.
\newblock Mass-editing memory in a transformer.
\newblock \emph{arXiv preprint arXiv:2210.07229}.

\bibitem[{Nazi and Roy~Dipta(2026)}]{nazi2026triage}
Zabir~Al Nazi and Shubhashis Roy~Dipta. 2026.
\newblock {TRIAGE}: Evaluating prospective metacognitive control in {LLMs} under resource constraints.
\newblock \emph{arXiv preprint arXiv:2605.13414}.

\bibitem[{Nazi et~al.(2026)Nazi, Roy~Dipta, and Parvez}]{nazi2026omni}
Zabir~Al Nazi, Shubhashis Roy~Dipta, and Md~Rizwan Parvez. 2026.
\newblock Omni-modal dissonance benchmark: Systematically breaking modality consensus to probe robustness and calibrated abstention.
\newblock \emph{arXiv preprint arXiv:2603.27187}.

\bibitem[{Nylund et~al.(2024)Nylund, Gururangan, and Smith}]{nylund2024time}
Kai Nylund, Suchin Gururangan, and Noah~A. Smith. 2024.
\newblock Time is encoded in the weights of finetuned language models.
\newblock In \emph{Proceedings of the 62nd Annual Meeting of the Association for Computational Linguistics (Volume 1: Long Papers)}, pages 2571--2587.

\bibitem[{Pham et~al.(2026)Pham, Borkakoty, and Hou}]{pham2026knowledge}
Minh~Vu Pham, Hsuvas Borkakoty, and Yufang Hou. 2026.
\newblock Where knowledge collides: A mechanistic study of intra-memory knowledge conflict in language models.
\newblock \emph{arXiv preprint arXiv:2601.09445}.

\bibitem[{Sarwar et~al.(2026)Sarwar, Roy~Dipta, Liu, and Patil}]{sarwar-etal-2026-multimodal}
Nobin Sarwar, Shubhashis Roy~Dipta, Zheyuan Liu, and Vaidehi Patil. 2026.
\newblock \href {https://doi.org/10.18653/v1/2026.findings-acl.1379} {Multimodal unlearning across vision, language, video, and audio: Survey of methods, datasets, and benchmarks}.
\newblock In \emph{Findings of the {A}ssociation for {C}omputational {L}inguistics: {ACL} 2026}, pages 27702--27730, San Diego, California, United States. Association for Computational Linguistics.

\bibitem[{Tan et~al.(2024)Tan, Ng, and Bing}]{tan2024towards}
Qingyu Tan, Hwee~Tou Ng, and Lidong Bing. 2024.
\newblock Towards robust temporal reasoning of large language models via a multi-hop qa dataset and pseudo-instruction tuning.
\newblock In \emph{Findings of the Association for Computational Linguistics: ACL 2024}, pages 6272--6286.

\bibitem[{Uddin et~al.(2025)Uddin, Saeidi, Handa, Seth, Son, Blanco, Corman, and Baral}]{uddin2025unseentimeqa}
Md~Nayem Uddin, Amir Saeidi, Divij Handa, Agastya Seth, Tran~Cao Son, Eduardo Blanco, Steven Corman, and Chitta Baral. 2025.
\newblock Unseentimeqa: Time-sensitive question-answering beyond llms’ memorization.
\newblock In \emph{Proceedings of the 63rd Annual Meeting of the Association for Computational Linguistics (Volume 1: Long Papers)}, pages 1873--1913.

\bibitem[{Vu et~al.(2024)Vu, Iyyer, Wang, Constant, Wei, Wei, Tar, Sung, Zhou, Le, and Luong}]{vu2024freshllms}
Tu~Vu, Mohit Iyyer, Xuezhi Wang, Noah Constant, Jerry Wei, Jason Wei, Chris Tar, Yun-Hsuan Sung, Denny Zhou, Quoc Le, and Thang Luong. 2024.
\newblock \href {https://doi.org/10.18653/v1/2024.findings-acl.813} {{FreshLLMs}: Refreshing large language models with search engine augmentation}.
\newblock In \emph{Findings of the Association for Computational Linguistics: ACL 2024}, pages 13697--13720, Bangkok, Thailand. Association for Computational Linguistics.

\bibitem[{Xu et~al.(2024)Xu, Qi, Guo, Wang, Wang, Zhang, and Xu}]{xu2024knowledge}
Rongwu Xu, Zehan Qi, Zhijiang Guo, Cunxiang Wang, Hongru Wang, Yue Zhang, and Wei Xu. 2024.
\newblock Knowledge conflicts for llms: A survey.
\newblock In \emph{Proceedings of the 2024 Conference on Empirical Methods in Natural Language Processing}, pages 8541--8565.

\bibitem[{Zhang and Choi(2021)}]{zhang2021situatedqa}
Michael J.~Q. Zhang and Eunsol Choi. 2021.
\newblock \href {https://arxiv.org/abs/2109.06157} {Situatedqa: Incorporating extra-linguistic contexts into {QA}}.
\newblock \emph{CoRR}, abs/2109.06157.

\bibitem[{Zhang et~al.(2024)Zhang, Xue, Zhang, Wu, Luu, and Zhao}]{zhang2024mrag}
Siyue Zhang, Yuxiang Xue, Yiming Zhang, Xiaobao Wu, Anh~Tuan Luu, and Chen Zhao. 2024.
\newblock Mrag: A modular retrieval framework for time-sensitive question answering.
\newblock \emph{arXiv preprint arXiv:2412.15540}.

\bibitem[{Zhao et~al.(2024)Zhao, Brumbaugh, Wang, Hajishirzi, and Smith}]{zhao2024set}
Bowen Zhao, Zander Brumbaugh, Yizhong Wang, Hannaneh Hajishirzi, and Noah~A Smith. 2024.
\newblock Set the clock: Temporal alignment of pretrained language models.
\newblock In \emph{Findings of the Association for Computational Linguistics: ACL 2024}, pages 15015--15040.

\bibitem[{Zhao et~al.(2022)Zhao, Zhao, Xu, Zhang, and Jin}]{zhao2022can}
Ruilin Zhao, Feng Zhao, Guandong Xu, Sixiao Zhang, and Hai Jin. 2022.
\newblock Can language models serve as temporal knowledge bases?
\newblock In \emph{Findings of the Association for Computational Linguistics: EMNLP 2022}, pages 2024--2037.

\bibitem[{Zhu et~al.(2023)Zhu, Yang, Chen, Li, Lou, and Yang}]{zhu2023question}
Xinyu Zhu, Cheng Yang, Bei Chen, Siheng Li, Jian-Guang Lou, and Yujiu Yang. 2023.
\newblock Question answering as programming for solving time-sensitive questions.
\newblock In \emph{Proceedings of the 2023 Conference on Empirical Methods in Natural Language Processing}, pages 12775--12790.

\bibitem[{Zhu et~al.(2025)Zhu, Liao, Chen, Wang, Guan, Wang, and Wang}]{zhu2025evolvebench}
Zhiyuan Zhu, Yusheng Liao, Zhe Chen, Yuhao Wang, Yunfeng Guan, Yanfeng Wang, and Yu~Wang. 2025.
\newblock Evolvebench: A comprehensive benchmark for assessing temporal awareness in llms on evolving knowledge.
\newblock In \emph{Proceedings of the 63rd Annual Meeting of the Association for Computational Linguistics (Volume 1: Long Papers)}, pages 16173--16188.

\end{thebibliography}

\appendix
\crefalias{section}{appendix}
\crefalias{subsection}{appendix}
\onecolumn
\section*{Appendix}

This supplementary material provides the mathematical foundations,
algorithmic details, extended empirical results, ablations, and
reproducibility information that support the main paper. To help readers
navigate, the appendices are grouped thematically rather than listed flat.

\paragraph{Method foundations.}
\begin{itemize}\setlength\itemsep{1pt}
    \item \textbf{\cref{app:mathematical-details}}: standing
    assumptions, theorem statements, and local-margin proof outlines
    for \ours.
    \item \textbf{\cref{app:algorithm}}: the inference-time
    algorithm with per-stage rationale.
    \item \textbf{\cref{app:baseline-metrics}}: baselines
    (standard / temporal / RAG / ITI / contrastive decoding), metric
    definitions, and the ITI construction.
\end{itemize}

\paragraph{Benchmark, screening, and detection protocol.}
\begin{itemize}\setlength\itemsep{1pt}
    \item \textbf{\cref{app:splits}}: the subject-disjoint
    train / validation / test split and corrected detector protocol:
    calibration, test-set discrimination, operating points
    (\cref{tab:detector-ops}), and false-alarm analysis.
    \item \textbf{\cref{app:filter}}: knowledge-recovery filter
    mechanics and per-model dropout (visualised in
    \cref{fig:filter-capacity}), with the $\tau_{\mathrm{rec}}$
    sensitivity sweep (\cref{tab:tau-rec-sweep}).
    \item \textbf{\cref{app:benchmark}}: benchmark construction,
    SPARQL extraction, prompt templates, admission conditions, and the
    manual-audit protocol.
    \item \textbf{\cref{app:phase1-extended}}: extended Phase~1
    screening: prob-space \EG{}, per-relation breakdown
    (\cref{fig:relation-breakdown}), per-domain table.
\end{itemize}

\paragraph{Localization, variants, and end-to-end diagnostics.}
\begin{itemize}\setlength\itemsep{1pt}
    \item \textbf{\cref{app:paired}}: paired held-out
    \TAS{}-vs-ITI evaluation under the corrected protocol
    (\cref{tab:paired-construction}), with the oracle full-data
    diagnostic (\cref{tab:oracle-iti-diag}).
    \item \textbf{\cref{app:localization}}: layer localization by
    activation patching vs.\ direct editing
    (\cref{fig:layer-localization}).
    \item \textbf{\cref{app:locator-details}}: per-layer AFR
    profiles (\cref{fig:afr-profile}) and plateau geometry.
    \item \textbf{\cref{app:variants}}: V1\,/\,V2\,/\,V3
    steering-vector ablation.
    \item \textbf{\cref{app:tas-extended}}: detector PR curves
    (\cref{fig:detector-pr}), oracle $\alpha$-sweep
    (\cref{fig:oracle-alpha}), $\tau$-Pareto frontier
    (\cref{tab:tau-rec-sweep}), false-alarm diagnostics.
    \item \textbf{\cref{app:freegen}}: free-generation evaluation:
    protocol, alias resolution, main results
    (\cref{tab:fg-categories}), and generation-vs-scoring
    disagreement (\cref{tab:fg-disagreement}).
    \item \textbf{\cref{app:appendix_cross_model}}: cross-family
    generalisation at matched scale.
\end{itemize}

\paragraph{Reproducibility and ethics.}
\begin{itemize}\setlength\itemsep{1pt}
    \item \textbf{\cref{app:complexity-analysis}}: hardware,
    compute, storage, and reproducibility provisions.
    \item \textbf{\cref{app:ai-usage}}: AI usage statement.
\end{itemize}

\begin{insightblue}{Appendix at a glance}
Five new figures distil the appendix tables into single-glance views:
per-layer AFR (\cref{fig:afr-profile}), detector PR curves
(\cref{fig:detector-pr}), $\tau$-Pareto frontier
(\cref{tab:tau-rec-sweep}), oracle $\alpha$-sweep
(\cref{fig:oracle-alpha}), per-relation \PTC{} rate
(\cref{fig:relation-breakdown}), and the knowledge-recovery filter
trajectory (\cref{fig:filter-capacity}). Every figure is rendered
directly from the released per-model JSON outputs; no aggregated number
in this appendix is independent of the released data.
\end{insightblue}

\newpage

\section{Mathematical Foundations of \TAS{}}
\label{app:mathematical-details}

\noindent\emph{
These results establish local sufficient conditions for \TAS{}.
They are not intended as global guarantees for arbitrary transformer
dynamics.
}

\begin{insightblue}{Geometric intuition behind \TAS{}}
\TAS{} treats temporally outdated and temporally recovered behaviours
as two locally separable residual-stream trajectories at a single
intermediate layer $\ell^{*}$. The temporal-cue forward pass exposes
the direction along which the newer-fact trajectory diverges from the
outdated-preference trajectory; \TAS{} adds a scaled multiple of that
direction back into the standard-prompt forward pass so the residual
state crosses the local decision boundary between the two basins,
without modifying any model parameter.
\end{insightblue}


\subsection{Standing Assumptions}

\begin{assumption}[Parametric Co-encoding]
\label{ass:coencoding}

For a verified \PTC{} quadruple
$\mathcal{Q}=(q,a_{\mathrm{old}},a_{\mathrm{new}},t_{\mathrm{update}})$,
both $a_{\mathrm{old}}$ and $a_{\mathrm{new}}$ are encoded in
parametric memory.

Formally, there exists a prompting variant $c$ such that
\[
\overline{\log P}_{\theta}(a_{\mathrm{new}}\mid q,c)
\;\ge\;
\tau_{\mathrm{rec}},
\]
where scores are measured using length-normalized log-probability
(i.e., the per-token mean of $\log P_{\theta}$ over the tokens of
$a_{\mathrm{new}}$).

In our benchmark,
$c=c_{\mathrm{temp}}$
and
$\tau_{\mathrm{rec}}=-3$.
\end{assumption}

\begin{assumption}[Local Separability]
\label{ass:separability}

At a chosen intervention layer $\ell^*$ (a peak within the family-specific
causal plateau; the result requires only that Local Separability hold at
the selected layer, not that $\ell^*$ be unique), outdated-preference and
newer-fact trajectories are locally separable within the conflict region.

Formally, there exists a unit vector
$u\in\mathbb{R}^d$
and margin
$\delta>0$
such that
\[
\Bigl\langle
h_{\ell^*}(q,c_{\mathrm{temp}})
-
h_{\ell^*}(q),
u
\Bigr\rangle
\ge \delta
\]
for every verified \PTC{} instance $q$.

Geometrically, this assumption states that temporal elicitation moves
the residual trajectory along a locally identifiable direction in
representation space.
\end{assumption}

\begin{assumption}[Bounded Steering]
\label{ass:bounded}

For an appropriate steering scale $\alpha$,
the perturbation
$\alpha c \Delta$
modifies the conflict-relevant residual-stream trajectory without
causing catastrophic distortion of downstream computation.
\end{assumption}


\subsection{Theoretical Results}

We first state a structural proposition characterizing sufficient
conditions for \PTC{}, followed by the two principal \TAS{}
results.

\begin{proposition}[Sufficient Condition for \PTC{}]
\label{prop:ptc-sufficient}

Assume Parametric Co-encoding.
If there exists a layer $\ell^*$ such that
\[
h_{\ell^*}(q)
\]
lies in a region favoring
$a_{\mathrm{old}}$
while
\[
h_{\ell^*}(q,c_{\mathrm{temp}})
\]
lies in a region favoring
$a_{\mathrm{new}}$,
then $\theta$ exhibits \PTC{} on $\mathcal{Q}$ as defined in
the PTC definition in the main paper.
\end{proposition}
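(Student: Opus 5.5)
The plan is to make the informal phrase ``a region favoring $a$'' precise, and then read off the two inequalities of \cref{def:ptc} directly. The proposition is essentially definitional once the readout map is fixed, so most of the work is in choosing definitions that make the implication honest.

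First I will write the forward pass as a composition. For a prompt $x$, let $F_{\ell^*}^{x}$ be the map that takes a residual state $h$ at layer $\ell^*$ (last prompt token) to the model's log-probabilities over answer continuations. This map runs layers $\ell^*{+}1,\dots,L$ and the unembedding, with the other token positions' states held at their values under $x$. I will then define the margin
\[
m_x(h) = s_x(a_{\mathrm{new}};h) - s_x(a_{\mathrm{old}};h),
\]
where $s$ is the length-normalized log-probability used throughout. With this, $h$ lies in a region favoring $a_{\mathrm{old}}$ (under context $x$) iff $m_x(h)<0$. It lies in a region favoring $a_{\mathrm{new}}$ with strength $\gamma>0$ iff $m_x(h)\ge\gamma$. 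This $\gamma$ is the quantitative content I will attach to the symbol $\gg$ in the second condition of \cref{def:ptc}, i.e.\ $\gg$ is read as ``exceeds by a log-margin of at least $\gamma$''.

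Second, I will check that the model's actual scores are exactly these margins evaluated at the realized states. The unpatched forward pass on $q$ satisfies $F_{\ell^*}^{q}(h_{\ell^*}(q)) = \log\Ptheta(\cdot\mid q)$, and similarly with $c_{\mathrm{temp}}$. Applying the hypothesis at $h_{\ell^*}(q)$ gives $s(a_{\mathrm{old}},q)>s(a_{\mathrm{new}},q)$, which is the first condition. Applying it at $h_{\ell^*}(q,c_{\mathrm{temp}})$ gives $s(a_{\mathrm{new}},q,c_{\mathrm{temp}})-s(a_{\mathrm{old}},q,c_{\mathrm{temp}})\ge\gamma$, which is the second condition at the chosen strength. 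Finally, \cref{ass:coencoding} supplies the side requirement that $a_{\mathrm{new}}$ is a parametrically plausible completion, since its score under $c_{\mathrm{temp}}$ is at least $\tau_{\mathrm{rec}}$. For $a_{\mathrm{old}}$, plausibility follows because its score under $q$ exceeds that of $a_{\mathrm{new}}$, which is itself bounded below after accounting for the non-negative elicitation gap, or else it is taken as part of the verified-quadruple hypothesis.

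The main obstacle is the context dependence of the region. The map $F^{x}_{\ell^*}$ differs between $x=q$ and $x=(q,c_{\mathrm{temp}})$, because attention in later layers reads other positions. So ``a region'' cannot be a single subset of $\mathbb{R}^d$ shared by both prompts unless one assumes the last-token state mediates the answer. My first attempt is to define each region relative to its own prompt, as above, which makes the argument airtight but the statement close to tautological. If a prompt-independent region is wanted, I would add a mediation hypothesis: the patching results (high AFR in the plateau) suggest $F^{q}_{\ell^*}\approx F^{q,c_{\mathrm{temp}}}_{\ell^*}$ near these states. I would then absorb the approximation error into $\gamma$, requiring $\gamma$ to exceed the discrepancy bound.
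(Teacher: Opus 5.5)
This is essentially the paper's own argument: its proof sketch reads the first condition of \cref{def:ptc} off the standard-prompt state lying in the $a_{\mathrm{old}}$-favoring region, and the second off the temporal-cue state lying in the $a_{\mathrm{new}}$-favoring region. Your prompt-relative margin $m_x$, with $\gg$ read as a margin of at least $\gamma$, simply makes that near-tautological unfolding explicit. Drop one side remark: a non-negative elicitation gap gives $s(a_{\mathrm{new}},q)\le s(a_{\mathrm{new}},q,c_{\mathrm{temp}})$, which is an upper bound rather than a lower bound, and the reported \EG{} is only a benchmark-level mean in any case. Plausibility of $a_{\mathrm{old}}$ must therefore come from the verified-quadruple hypothesis, as in your fallback; the paper's sketch does not address this point.
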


\begin{proof}[Proof sketch]
The standard-prompt trajectory yields outdated preference,
satisfying the first \PTC{} condition.
The temporally elicited trajectory yields current-fact recovery,
satisfying the second condition.
Together these establish the defining two-part structure of \PTC{}.
\end{proof}

\setcounter{theorem}{0}

\noindent The main paper states a shortened, self-contained form of the
following theorem (TAS Correctness, local sufficient form); the statement
below is the complete version and is mathematically identical.

\begin{theorem}[\TAS{} Correctness]
\label{thm:tas-full}

Under Assumptions
\ref{ass:coencoding}--\ref{ass:bounded},
there exists a finite steering scale
$\alpha^*>0$
such that, for any verified \PTC{} instance in the local separability
regime,
applying the \TAS{} update from
the main-paper TAS update
at layer $\ell^*$
yields
\[
\Ptheta(a_{\mathrm{new}}\mid q,\TAS)
>
\Ptheta(a_{\mathrm{old}}\mid q,\TAS).
\]
\end{theorem}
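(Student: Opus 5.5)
The plan is a local first-order argument on a margin function defined at layer $\ell^*$. For a verified \PTC{} instance, write $M(h)$ for the downstream margin $\log\Ptheta(a_{\mathrm{new}}\mid\cdot)-\log\Ptheta(a_{\mathrm{old}}\mid\cdot)$ obtained by running layers above $\ell^*$ from residual state $h$ at the last prompt token. The first condition of \cref{def:ptc} gives $M(h_{\ell^*}(q))<0$. The second condition, which is realizable because of Parametric Co-encoding (\cref{ass:coencoding}), gives $M(h_{\ell^*}(q,c_{\mathrm{temp}}))>0$; here I treat the temporal-cued state as a reference point in the same representation space, exactly as in the patching experiment. The goal is a scalar $\alpha^*$ such that $M(h_{\ell^*}(q)+\alpha^* c\,\Delta)>0$ for every instance in the regime.

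\textbf{Step 1.} Relate $\Delta$ to the separating direction. By Local Separability (\cref{ass:separability}), every per-instance shift $d_q=h_{\ell^*}(q,c_{\mathrm{temp}})-h_{\ell^*}(q)$ satisfies $\langle d_q,u\rangle\ge\delta$. Averaging gives $\langle\Delta,u\rangle\ge\delta>0$, so $\Delta$ has a uniformly positive component along $u$.

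\textbf{Step 2.} Make precise what ``locally favoring'' means, in the spirit of \cref{prop:ptc-sufficient}. Take the local separability regime to be the instances for which $M$ is differentiable on a neighbourhood $U_q$ containing the segment $h_{\ell^*}(q)+s\Delta$, $s\in[0,\bar s]$, with the directional derivative bounded below, $\langle\nabla M,\Delta\rangle\ge\gamma>0$ on $U_q$. This is the formal content of ``the margin increases along $\Delta$'' from the main text. Bounded Steering (\cref{ass:bounded}) is what guarantees that the edited state stays inside $U_q$ and that downstream layers are not pushed off-distribution.

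\textbf{Step 3.} Apply the mean value theorem along the segment: $M(h+s\Delta)\ge M(h)+s\gamma$. Choosing $s>|M(h_{\ell^*}(q))|/\gamma$ therefore makes the margin positive. Uniformity over the finitely many verified instances lets us set $\alpha^* c=\max_q |M_q|/\gamma+\epsilon$, capped below $\bar s$. Since the detector score satisfies $c>0$ in the always-on setting, dividing by $c$ gives a finite $\alpha^*$. Positivity of $M$ is exactly the claimed inequality $\Ptheta(a_{\mathrm{new}}\mid q,\TAS)>\Ptheta(a_{\mathrm{old}}\mid q,\TAS)$. Because scores are length-normalized, I would run the argument on whichever normalization the paper uses and note that the sign comparison is the same.

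\textbf{Main obstacle.} The hard step is Step 2. Separability along $u$ says nothing by itself about $\nabla M$, and $u$ need not align with $\Delta$. The needed bridge is that the region favoring $a_{\mathrm{new}}$ is a half-space-like set $\{\langle h,u\rangle>b\}$ locally. I would try to derive this by linearizing $M$ and taking $u\propto\nabla M$. Failing that, I would fold it explicitly into the definition of the local separability regime, stating it as a hypothesis. I would also make sure the argument requires $\bar s$, the extent of the valid neighbourhood, to exceed the needed step, since that is the only way the bounded-steering assumption enters.
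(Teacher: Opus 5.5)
This is the paper's own argument: margin $m=s_{\mathrm{new}}-s_{\mathrm{old}}$ at $\ell^*$, $m(h)<0<m(h_{\mathrm{tmp}})$ from the two \PTC{} conditions, a first-order expansion along $\alpha c\Delta$, and $\alpha>-m(h)/(c\,\nabla m(h)^{\top}\Delta)$. The Step~2 obstacle you flag is genuine, since the paper simply asserts that Local Separability ``ensures'' $\nabla m(h)^{\top}\Delta>0$, which does not follow because \cref{ass:separability} constrains hidden-state shifts rather than the margin's gradient; so building $\langle\nabla M,\Delta\rangle\ge\gamma$ and a uniform radius $\bar s>\max_q\lvert M_q\rvert/\gamma$ into the definition of the regime is the right repair, and your maximum over the finitely many instances even delivers the $\exists\alpha^*\,\forall q$ order that the paper's per-instance choice of $\alpha$ does not, but keep $M$ as the raw log-ratio matching \cref{def:ptc} and the stated conclusion, since raw and length-normalized margins can have opposite signs when the two answers differ in token count (the paper's outline makes exactly this slide from length-normalized scores to $\Ptheta$).
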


\begin{proof}[Proof outline with local margin derivation]
Let $s_{\mathrm{new}}(h)$ and $s_{\mathrm{old}}(h)$ denote the
length-normalized log-probability scores (as defined in
the main-paper scoring equation) assigned to
$a_{\mathrm{new}}$ and $a_{\mathrm{old}}$ when the layer-$\ell^*$
residual stream is $h$, and define the local \emph{margin}
\[
m(h) \;=\; s_{\mathrm{new}}(h) - s_{\mathrm{old}}(h).
\]
For a verified \PTC{} instance with hidden state $h := h_{\ell^*}(q)$
and temporally elicited hidden state
$h_{\mathrm{tmp}} := h_{\ell^*}(q,c_{\mathrm{temp}})$, the \PTC{}
definition together with Parametric Co-encoding gives
$m(h) < 0$ and $m(h_{\mathrm{tmp}}) > 0$.

Apply the \TAS{} update
$h' = h + \alpha c \Delta$
(the main-paper TAS update). Under a local linearization of $m$ around $h$,
\[
m(h') \;\approx\; m(h) \;+\; \alpha c\,\nabla m(h)^{\!\top} \Delta.
\]
Local Separability (\cref{ass:separability}) ensures that $\Delta$
shares the separating direction $u$ with $h_{\mathrm{tmp}} - h$ in
expectation, so $\nabla m(h)^{\!\top} \Delta > 0$ in the local
neighborhood. For detector-positive queries
($c > \tau$ and therefore $c > 0$), choosing any
\[
\alpha \;>\; \frac{-\,m(h)}{c \,\nabla m(h)^{\!\top} \Delta} \;>\; 0
\]
yields $m(h') > 0$, hence
$\Ptheta(a_{\mathrm{new}}\mid q,\TAS) > \Ptheta(a_{\mathrm{old}}\mid q,\TAS)$.
Bounded Steering (\cref{ass:bounded}) guarantees that this transition
occurs within a regime where downstream computation is not
catastrophically distorted, so the sign of the local margin is
preserved through the remaining $L - \ell^*$ layers.

This is a \emph{local} sufficient condition that holds in a
neighborhood of $h$ where the linearization is valid; it is not a
global guarantee over arbitrary transformer dynamics. The existence of
$\alpha^*$ is established empirically by the oracle $\alpha$ sweep
(the main-paper end-to-end results).
\end{proof}

Empirically, this local transition corresponds to the observed
reversal from outdated-preference behavior to newer-fact recovery in
verified \PTC{} instances.

\begin{theorem}[No-Op Preservation]
\label{thm:preserve-full}

For any query satisfying
$c\le\tau$,
\TAS{} applies no perturbation.
Consequently, the output distribution is identical to that of the
unsteered model.
\end{theorem}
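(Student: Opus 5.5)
The argument is a direct unfolding of the definition of the gated edit; no analytic estimate is needed. I would first make the gate explicit. The main-paper update, Eq.~\eqref{eq:tas-update}, writes $h'_{\ell^*}(q) = h_{\ell^*}(q) + \alpha\, c\, \Delta$, but in the detection protocol the edit is applied only when the detector fires. So I would formalise the operational update as
\[
h'_{\ell^*}(q) \;=\; h_{\ell^*}(q) \;+\; \mathbf{1}[c>\tau]\,\alpha\, c\,\Delta .
\]
This is the reading under which the gating threshold $\tau$ in the Detection paragraph has any effect. For a query with $c\le\tau$ the indicator vanishes, so $h'_{\ell^*}(q)=h_{\ell^*}(q)$ exactly. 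This does not depend on $\alpha$, on $\Delta$, or on any of Assumptions~\ref{ass:coencoding}--\ref{ass:bounded}.

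Second, I would propagate this equality through the remaining computation. The forward pass from layer $\ell^*$ to the logits is a deterministic function $F_{\ell^*\to L}$ of the residual stream at $\ell^*$, together with the cached lower-layer states at earlier positions, which the edit never touches. The edit modifies only position-$T$ activations at $\ell^*$, and here it modifies nothing. Hence every downstream activation coincides with that of the unsteered pass. The next-token distribution at every decoding step is then identical, including later steps that attend back to the position-$T$ state. It follows that $\Ptheta(\cdot\mid q,\TAS)=\Ptheta(\cdot\mid q)$, and in particular every scoring quantity $s(a,q)$ is unchanged.

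The only real obstacle is that the statement is false under a literal reading of Eq.~\eqref{eq:tas-update}. If $c$ is the raw detector score and the update is always applied, then $0<c\le\tau$ still produces a nonzero perturbation $\alpha c\Delta$. I would therefore state explicitly that the theorem concerns the gated update, that is, that $c$ enters as $c\cdot\mathbf{1}[c>\tau]$, equivalently that the effective coefficient is $0$ below threshold. I would note that this gating is exactly what the detector-gated pipeline implements. I would also remark that under the always-on diagnostic the gate is bypassed, so the theorem makes no claim about that setting. The preservation gap measured there (PA below $1$) is consistent with this, and is distinct from the near-perfect gated PA reported in \cref{tab:detector-gated}.
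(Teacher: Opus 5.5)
Your proposal is correct and takes the same route as the paper's proof sketch: below threshold the steering term vanishes, so the forward pass reduces exactly to the unmodified model's computation. Your explicit indicator $\mathbf{1}[c>\tau]$ spells out the if/else gate of Algorithm~\ref{alg:tas}, which the paper's sketch uses implicitly when it says the term ``vanishes'' under the main-paper update. You are also right that the literal always-applied form of that update would not give a no-op for $0<c\le\tau$.
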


\begin{proof}[Proof sketch]
From the main-paper TAS update,
the steering term vanishes whenever
$c\le\tau$.
The forward pass therefore reduces exactly to the original model
computation.
\end{proof}

\noindent\textbf{Limitation of scope.}
These results are intentionally local:
they establish sufficient conditions for successful steering in the
verified \PTC{} regime, rather than global guarantees over arbitrary
transformer dynamics.

\begin{insightteal}{From the theorem to a testable condition}
Local Separability (\cref{ass:separability}) is witnessed by the
Stage~2 locator: a successful single-layer single-token activation
patch is exactly the linear separator the assumption requires
(\cref{fig:afr-profile}). The existence of $\alpha^{*}$ in
\cref{thm:tas-full} is witnessed by the oracle $\alpha$-sweep on the
V2 vector (\cref{fig:oracle-alpha}). The No-Op Preservation theorem
(\cref{thm:preserve-full}) is operationalised by preservation accuracy
on matched non-conflict records (the main-paper threshold-sweep table).
\end{insightteal}

\section{Algorithm}
\label{app:algorithm}

\TAS{} is built around a single design principle.

\begin{insightblue}{Intervening only on positive evidence}
\TAS{} is a \emph{conditional} residual intervention: the steering
signal is applied only when a calibrated detector identifies the
current trajectory as a likely \PTC{} regime. Detector-negative
queries therefore receive a bit-identical forward pass to the
unmodified model (\cref{thm:preserve-full}), and detector-positive
queries pay only a single residual-stream addition at layer
$\ell^{*}$.
\end{insightblue}

All Stage~2 and Stage~3 components (the conflict-critical layer
$\ell^{*}$, the steering vector $\Delta$, and the calibrated detector
$s(\cdot)$) are precomputed once per model family. The deployment-time
forward pass therefore differs from the unmodified model only at a
single residual-stream location, and only on queries the detector
flags. The full procedure is given in \cref{alg:tas}.

\begin{algorithm}[H]
\caption{\TAS{}: \textsc{Temporal Attractor Steering}
(inference-time, single residual-stream addition at $\ell^{*}$,
gated by a per-query detector $s(\cdot)$ trained once per model
family).}
\label{alg:tas}
\begin{algorithmic}[1]
\Require
    Query $q$; model $\theta$ with $L$ layers; conflict-critical layer
    $\ell^{*}\!\in\!\{1,\dots,L\}$; steering vector $\Delta\!\in\!\mathbb{R}^{d}$
    (V2 per-relation; \cref{app:variants}); detector $s\!:\!\mathbb{R}^{d}\!\to\![0,1]$;
    threshold $\tau$; scale $\alpha\!>\!0$.
\Ensure
    Token distribution $P_{\theta}^{\TAS}(\,\cdot\mid q)$.
\Statex
\State $h_{\ell^{*}}(q) \gets \textsc{Forward}_{\theta}^{\,1:\ell^{*}}(q)$
       \hfill \textcolor{gray!75}{\small // run forward pass up to $\ell^{*}$}
\State $c \gets s\!\bigl(h_{\ell^{*}}(q)\bigr)$
       \hfill \textcolor{gray!75}{\small // calibrated conflict score $\in[0,1]$}
\If{$c > \tau$}
    \State $h_{\ell^{*}}(q) \gets h_{\ell^{*}}(q) + \alpha\,c\,\Delta$
       \hfill \textcolor{gray!75}{\small // confidence-scaled steering update (the main-paper TAS update)}
\Else
    \State \textbf{no-op}
       \hfill \textcolor{gray!75}{\small // identical to unmodified $\theta$ (\cref{thm:preserve-full})}
\EndIf
\State $\mathit{logits} \gets \textsc{Forward}_{\theta}^{\,\ell^{*}\!+1:L}\!\bigl(h_{\ell^{*}}(q)\bigr)$
       \hfill \textcolor{gray!75}{\small // resume forward pass to logits}
\State \Return $\mathrm{softmax}(\mathit{logits})$
\end{algorithmic}
\end{algorithm}

\paragraph{Per-line design rationale.}
\begin{itemize}\setlength\itemsep{2pt}
    \item \textbf{Detector $s(\cdot)$ (line 2).} A calibrated linear
    probe on $h_{\ell^{*}}$ that prevents unnecessary intervention on
    non-conflict queries; this conditional gate is what allows
    preservation accuracy to remain in $[0.85, 0.99]$ across models
    (the main-paper threshold-sweep table).
    \item \textbf{Steering layer $\ell^{*}$ (line 1).} Identified via
    activation patching (the main-paper locator table, \cref{fig:afr-profile})
    because temporal preference reversals emerge sharply at specific
    intermediate locations rather than uniformly across depth.
    \item \textbf{Steering vector $\Delta$ (line 4).} Constructed from
    activation differences $h_{\ell^{*}}(q, c_{\mathrm{temp}}) -
    h_{\ell^{*}}(q)$ averaged within each Wikidata relation.
    Relation-conditioned averaging (V2) isolates the shared geometric
    structure of temporal recovery within a relation family
    (\cref{app:variants}).
    \item \textbf{Confidence-scaled magnitude $\alpha\,c\,\Delta$ (line~4).}
    Detector confidence modulates the perturbation strength so the
    largest interventions land on the most-confident conflicts; this
    is what enables the favourable Recovery~/~PA trade-off in
    \cref{fig:oracle-alpha}.
\end{itemize}

\begin{insightteal}{\TAS{} as a single residual-stream addition}
On a detector-positive query, the only operation \TAS{} performs
beyond the unmodified forward pass is a single rank-zero residual
addition at one layer. There is no decoding-time loop, no
beam-rescoring, no retrieval call, and no extra forward pass over
$\theta$.
\end{insightteal}

\section{Subject-Disjoint Split and Corrected Detector Protocol}
\label{app:splits}

\paragraph{Split protocol.}
All detector learning, direction construction, operating-point selection,
and final evaluation use one versioned, subject-disjoint split
(a single versioned split, seed $20260712$). The grouping unit is the
Wikidata subject: every record of a subject is assigned to exactly one of
\emph{train} ($60\%$), \emph{validation} ($15\%$), \emph{calibration}
($10\%$), \emph{test} ($15\%$). Subjects that share a normalized surface
label (homonyms, e.g.\ distinct places both labeled ``Fântânele'') are
merged into one grouping component, so neither a subject nor an identical
prompt string can cross a partition. Subjects are stratified by their
rarest relation so the small relations (P35, P169) spread evenly, then
allocated deterministically by largest remainder. Roles are disjoint by
construction: \emph{train} fits the probe and constructs $\Delta_{\TAS}$
and $\Delta_{\mathrm{ITI}}$; \emph{validation} selects $\alpha$ and
operating points; \emph{calibration} fits the isotonic calibrator and
detector thresholds; \emph{test} reports all final metrics.

\begin{table}[htbp]
\centering\small
\setlength{\tabcolsep}{4pt}
\begin{tabular}{lccccc}
\toprule
\hdrrow
\hdr{Split} & \hdr{Records} & \hdr{Subjects} &
\multicolumn{3}{c}{\hdr{PTC positives (per model)}} \\
\hdrrow
 & & & \hdr{Q1.5B} & \hdr{Mist.} & \hdr{Llama} \\
\midrule
Train       & 5240 & 3270 & 123 & 311 & 399 \\
Validation  & 1306 & 822  & 35  & 83  & 91  \\
Calibration & 863  & 541  & 19  & 59  & 65  \\
Test        & 1337 & 818  & 32  & 71  & 87  \\
\bottomrule
\end{tabular}
\caption{Subject-disjoint split counts and per-model PTC-positive counts
(Qwen-2.5-7B counts $189/58/32/39$ for train/val/calib/test are omitted
for space). Every held-out partition has $\ge 19$ positives for every
model. A leakage audit confirms zero cross-split overlap of record IDs,
subject QIDs, normalized subject labels, subject--relation pairs, and
exact/normalized prompts; answer-entity QIDs may recur across unrelated
subjects and are reported descriptively only.}
\label{tab:split-counts}
\end{table}

\paragraph{Corrected detector metrics.}
Refitting the probe on \emph{train}, calibrating on \emph{calibration},
and reporting on \emph{test} (\cref{tab:detector-corrected}) yields
materially lower separability than the legacy instance-level split, which
had let records of one subject cross folds \emph{and} fit the isotonic
calibrator on the same held-out set used to report AUPRC/AUROC. Because
AUROC is base-rate invariant, the AUROC drop ($-0.03$ to $-0.13$) isolates
the leakage effect; the corrected AUPRC is additionally lower because it is
computed at the true test base rate (PTC prevalence $2$--$7\%$) rather than
on a subsampled, artificially balanced test set. The effect is largest for
the smallest model (Qwen-2.5-1.5B, corrected test AUROC $0.47$, at chance),
indicating its detector does not generalize to unseen subjects.

\begin{table}[htbp]
\centering\small
\setlength{\tabcolsep}{4pt}
\begin{tabular}{lcccc}
\toprule
\hdrrow
\hdr{Model} & \hdr{AUROC} & \hdr{AUROC} & \hdr{AUPRC} & \hdr{Brier} \\
\hdrrow
 & \hdr{(corrected)} & \hdr{(legacy)} & \hdr{(true base)} & \\
\midrule
Qwen-2.5-1.5B   & 0.467 & 0.601 & 0.028 & 0.024 \\
Qwen-2.5-7B     & 0.571 & 0.628 & 0.056 & 0.029 \\
Mistral-7B-v0.3 & 0.657 & 0.687 & 0.092 & 0.053 \\
Llama-3.1-8B    & 0.598 & 0.626 & 0.097 & 0.061 \\
\bottomrule
\end{tabular}
\caption{Detector metrics on \emph{test} under the corrected
subject-disjoint protocol vs.\ the legacy leaked instance-level split.
Probe fit on train, isotonic calibration on calibration, thresholds on
validation, metrics on test. AUROC (base-rate invariant) falls on every
model; corrected AUPRC uses the deployment-real test base rate. These
supersede the leaked detector AUPRC values in the main-paper end-to-end
table, whose detector-gated operating points are regenerated on this split
in the paired evaluation.}
\label{tab:detector-corrected}
\end{table}

\subsection{Calibration Procedure}
\label{app:detector-calib}
The detector is an $\ell_2$-regularized logistic probe
(\texttt{sklearn}, $C{=}1$, \texttt{class\_weight=balanced}) on the
layer-$\ell^{*}$ residual stream, fit on \emph{train} only; features are
standardized with train statistics and the scaler is folded back into a
raw-space linear probe. An isotonic regressor fit on \emph{calibration}
only maps the raw score to $p(\PTC)\in[0,1]$. Seeds are fixed
($20260712$ for the split; probe and calibrator are deterministic given
data). Calibration positives are the binding constraint: Qwen-2.5-1.5B has
only $19$ calibration positives, so its isotonic fit is high-variance and
its calibrated estimates are reported \emph{descriptively}; the other
models have $32$--$65$ (\cref{tab:split-counts}). Because $\Delta$ and the
probe use \emph{train} only, validation/calibration could be enlarged by
reallocating subjects from train (not from test) without contaminating
direction construction; we did not, to preserve train positives, and note
that repeated subject-disjoint cross-validation would be more appropriate
for detector diagnostics than a single small calibration fold. Validation
and test are never merged.

\subsection{Test-Set Discrimination}
\label{app:detector-discrim}
Precision-recall, ROC, and calibrated-score distributions on the
\emph{held-out} test subjects (\cref{fig:detector-corrected-pr,fig:detector-scoredist})
confirm the corrected assessment: separability is weak and model-dependent,
and the PTC and non-conflict score distributions overlap heavily.

\begin{figure}[htbp]
\centering
\includegraphics[width=0.49\columnwidth]{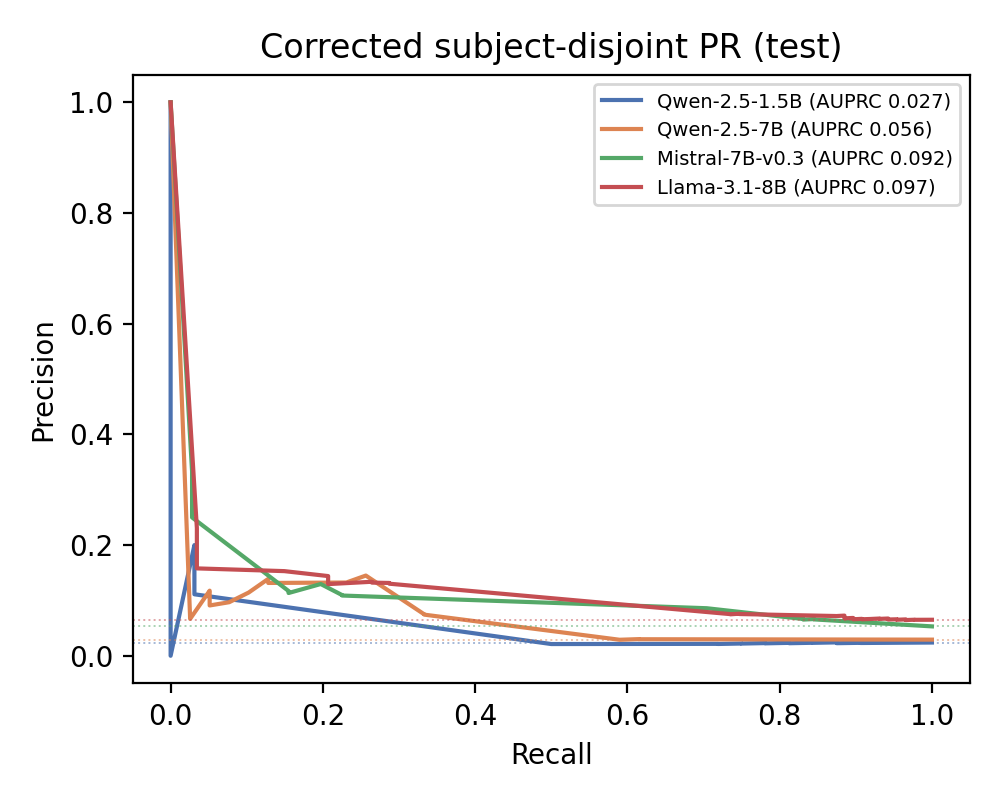}\hfill
\includegraphics[width=0.49\columnwidth]{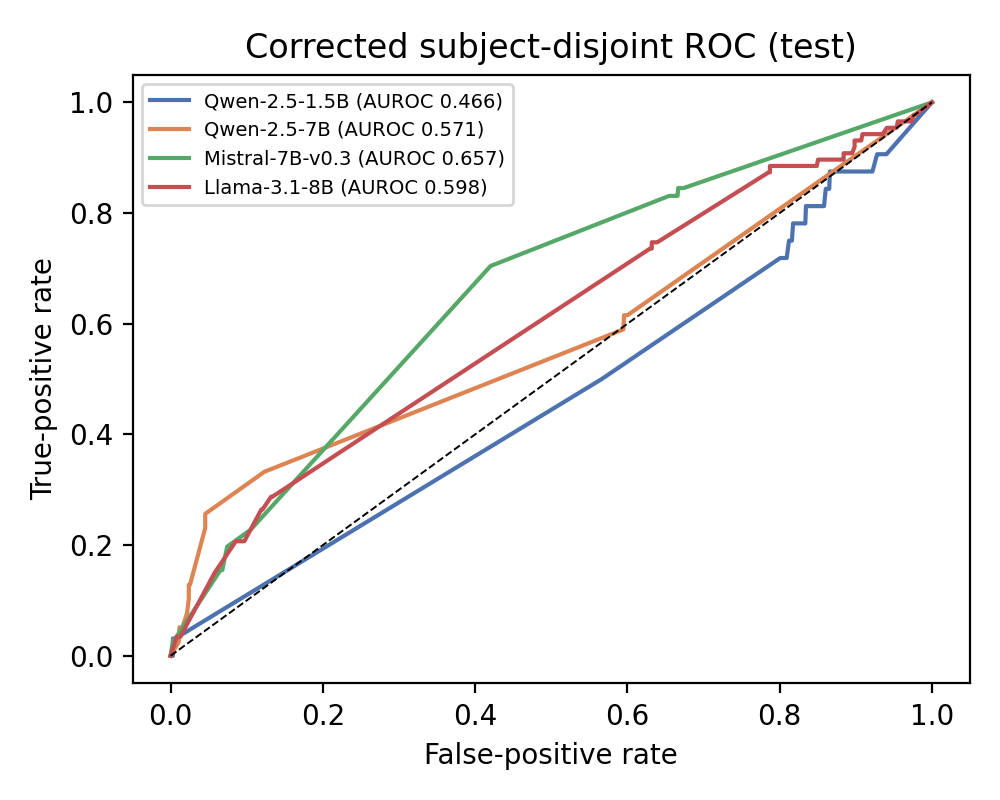}
\caption{Corrected subject-disjoint precision-recall (left) and ROC (right)
curves on held-out test subjects. Dotted lines mark each model's PTC base
rate (PR) and chance (ROC). These \emph{replace} the legacy leaked-split PR
curves (\cref{fig:detector-pr}, retained only as a historical diagnostic).}
\label{fig:detector-corrected-pr}
\end{figure}

\begin{figure}[htbp]
\centering
\includegraphics[width=0.98\columnwidth]{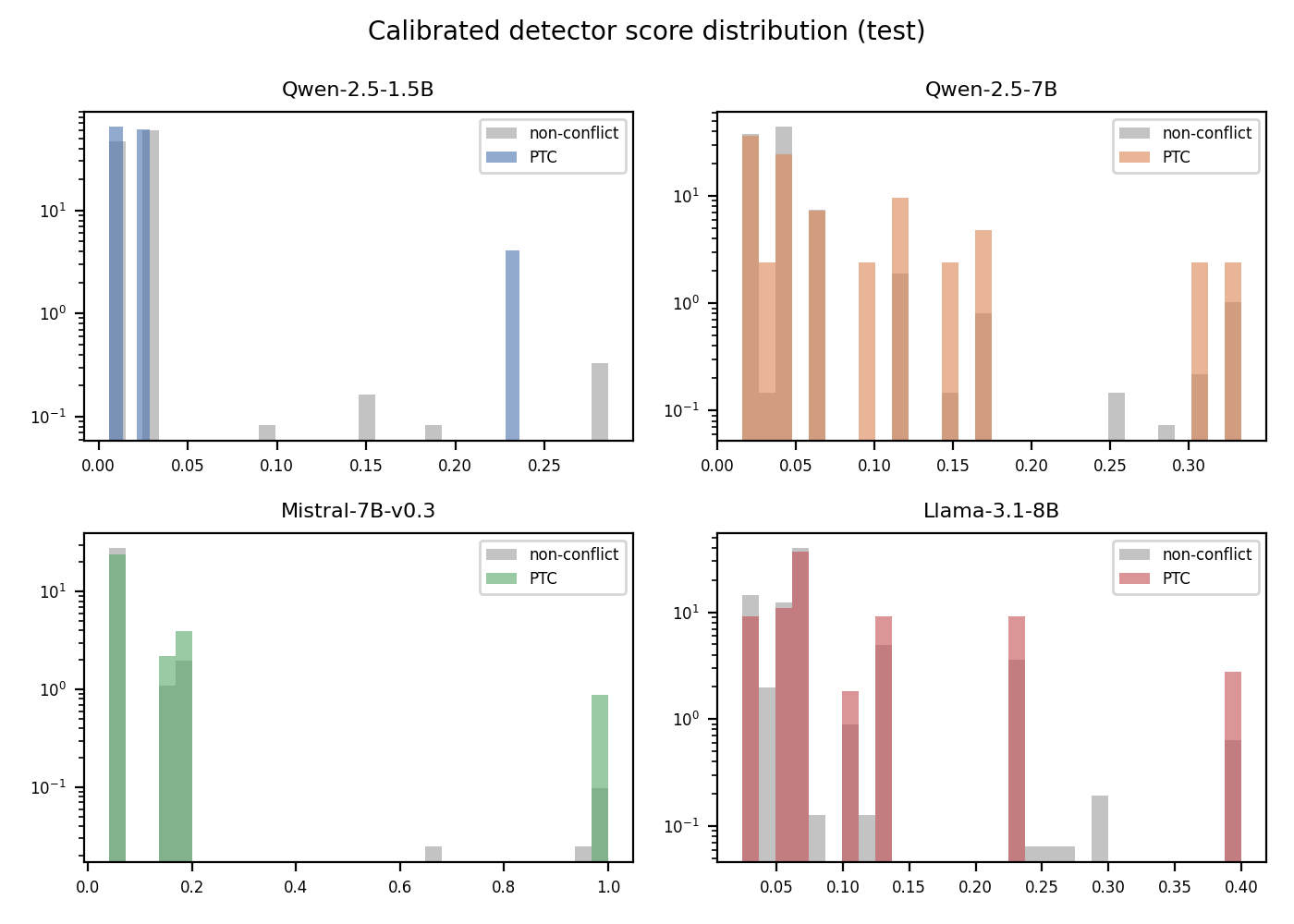}
\caption{Calibrated detector-score distributions on test (log density):
verified \PTC{} (colour) vs.\ non-conflict (grey). The distributions
overlap heavily for every model, which is why no threshold cleanly
separates conflicts.}
\label{fig:detector-scoredist}
\end{figure}

\subsection{Calibration Quality}
\label{app:detector-calibquality}
Reliability curves (\cref{fig:detector-reliability}) and the low Brier
($0.024$--$0.061$) and ECE ($<0.02$) in \cref{tab:detector-corrected}
indicate the probabilities are \emph{well calibrated} to the low \PTC{}
base rate. Calibration, however, does not imply \emph{ranking}: a probe
that outputs the base rate for every query is perfectly calibrated yet
useless for gating. The corrected AUROC ($0.47$--$0.66$) is the honest
measure of separability, and it is near chance for Qwen-2.5-1.5B.

\begin{figure}[htbp]
\centering
\includegraphics[width=0.62\columnwidth]{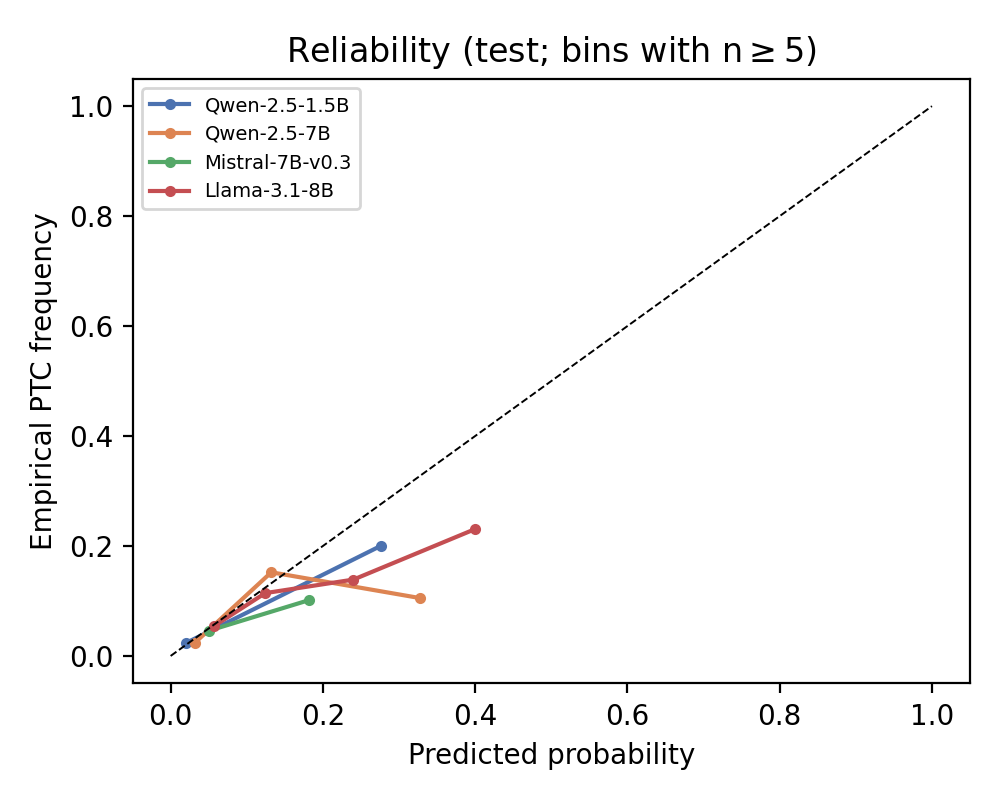}
\caption{Reliability (test; bins with $n\ge5$). Well-calibrated
probabilities do not imply discriminative ranking.}
\label{fig:detector-reliability}
\end{figure}

\subsection{Operating-Point Analysis}
\label{app:detector-ops}
\cref{tab:detector-ops} reports the primary threshold $\tau{=}0.15$ with
full confusion counts and subject-level bootstrap $95\%$ CIs
($B{=}10{,}000$, seed $0$; subjects resampled since records are not
independent within a subject). Thresholds are selected on \emph{validation}
only. Two facts stand out. First, at $\tau{=}0.15$ the gate fires on
$0.5$--$10.9\%$ of queries at FPR $0.5$--$10.3\%$ while its TPR (fraction of
true \PTC{} gated) is only $3$--$23\%$. Second, no validation-selected
threshold yields a usable operating point: the validation-$F_1$-optimal
threshold gives test TPR $0.00$ for Qwen-2.5-1.5B (its best threshold
catches \emph{no} test conflict), $0.21$--$0.33$ for the others; a
conservative validation-FPR-$\le5\%$ threshold is attainable for all
models but only by further lowering TPR (threshold sweeps for TPR/FPR and
gated Recovery/preservation are plotted in \cref{fig:detector-threshold}).
Full TP/FP/TN/FN at every threshold and operating point are released with the code and data.

\begin{table}[htbp]
\centering\small
\setlength{\tabcolsep}{3pt}
\begin{tabular}{lcccccc}
\toprule
\hdrrow
\hdr{Model} & \hdr{TP/FP/FN/TN} & \hdr{TPR [95\% CI]} &
\hdr{FPR [95\% CI]} & \hdr{\%gated} & \hdr{gRec} & \hdr{PA} \\
\midrule
Qwen-2.5-1.5B   & 1/6/31/1299   & 0.03 [.00,.11] & 0.005 [.002,.009] & 0.5\%  & 0.031 & 1.00 \\
Qwen-2.5-7B     & 5/33/34/1265  & 0.13 [.03,.25] & 0.025 [.014,.040] & 2.8\%  & 0.077 & 0.97 \\
Mistral-7B-v0.3 & 16/130/55/1136& 0.23 [.11,.35] & 0.103 [.074,.135] & 10.9\% & 0.056 & 1.00 \\
Llama-3.1-8B    & 13/72/74/1178 & 0.15 [.07,.24] & 0.058 [.040,.077] & 6.4\%  & 0.069 & 0.98 \\
\bottomrule
\end{tabular}
\caption{Detector operating point at the primary $\tau{=}0.15$ on
held-out test subjects. TPR = fraction of true \PTC{} gated (\% PTC
gated); FPR = fraction of non-conflict queries falsely gated; \%gated =
all queries gated; gRec = gated-\TAS{} Recovery; PA = preservation. CIs are
subject-level bootstraps; Qwen-2.5-1.5B (32 test positives) is descriptive.
Validation-selected $F_1$/Youden and low-FPR thresholds are analysed in the
text; all are reported in the released operating-point tables.}
\label{tab:detector-ops}
\end{table}

\begin{figure}[htbp]
\centering
\includegraphics[width=0.49\columnwidth]{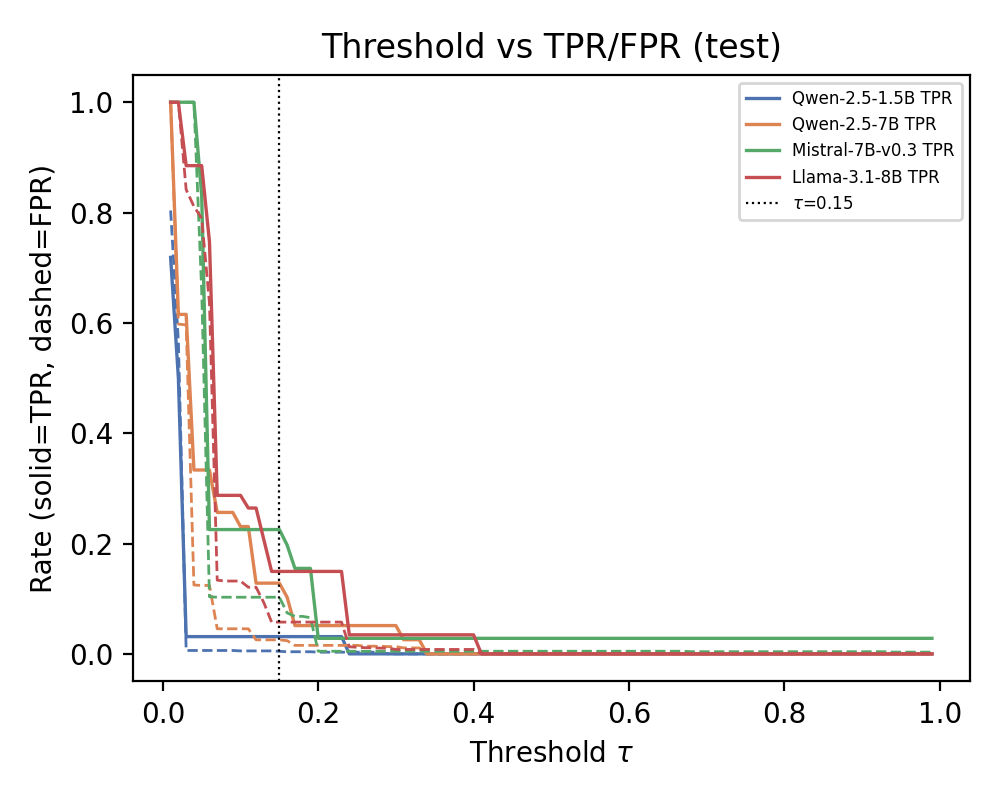}\hfill
\includegraphics[width=0.49\columnwidth]{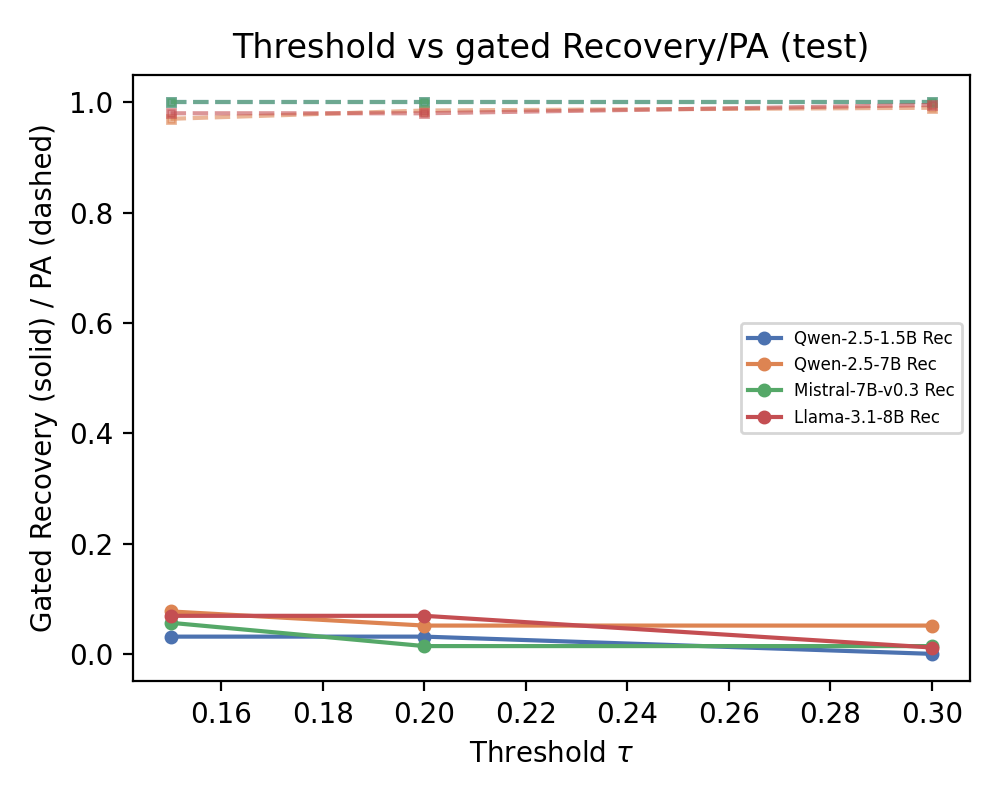}
\caption{Threshold sweeps on test. \emph{Left}: TPR (solid) and FPR
(dashed) vs.\ threshold $\tau$; the dotted line marks the primary
$\tau{=}0.15$. \emph{Right}: gated-\TAS{} Recovery (solid) and preservation
(dashed) at the deployed thresholds $\tau\in\{0.15,0.20,0.30\}$. Raising
$\tau$ trades already-low recall for fewer false alarms; no threshold makes
the gated pipeline competitive with always-on steering.}
\label{fig:detector-threshold}
\end{figure}

\subsection{False-Alarm Analysis}
\label{app:detector-falsealarm}
False positives (non-conflict queries gated at $\tau{=}0.15$) concentrate
on the \emph{rare} relations. For Mistral-7B-v0.3 the per-relation FPR is
$0.29$ for head of state (P35) and $0.26$ for CEO (P169) versus
$0.06$--$0.13$ for the three high-volume relations; Qwen-2.5-7B shows the
same pattern (P35 FPR $0.19$). Because P35/P169 carry the strongest
per-record \PTC{} signal, the probe over-fires on exactly the relations
where conflict is a priori most likely, inflating false alarms rather than
catching more true conflicts. Full per-relation, per-model false-positive
counts are in the released artifacts.

\subsection{Detector Limitations}
\label{app:detector-limits}
The detector is limited by (i) small positive counts (32--87 test
positives; 19--65 calibration positives), which widen operating-point CIs,
especially for Qwen-2.5-1.5B; (ii) near-chance separability for the
smallest model (AUROC $0.47$); and (iii) a low true base rate
($2$--$7\%$) that makes any fixed-threshold gate either miss most conflicts
or accept many false alarms. We therefore treat conflict \emph{detection}
under subject-disjoint evaluation as unresolved, and report the always-on
verified-\PTC{} evaluation as the mechanistic result.

\section{Baselines, Metrics, and Reporting Conventions}
\label{app:baseline-metrics}

This appendix collects the operational definitions of the metrics
reported throughout the paper and documents the baselines we compare
against. The exposition prioritizes \emph{reproducibility}: every
metric can be recomputed from the per-record screening output, and
the ITI baseline construction is given in enough detail that an
independent implementation should produce numerically identical
results to ours.

\subsection{Baseline Methods}
We compare against: \textbf{(i) Standard prompting} (no intervention);
\textbf{(ii) Temporal prompting} (\emph{``As of $\langle t \rangle$,
$\langle q\rangle$''}, a strong baseline since \PTC{} is defined
relative to it); \textbf{(iii) RAG}; \textbf{(iv) ITI}
\citep{li2023inference} as an activation-engineering comparator;
\textbf{(v) Contrastive decoding}. ROME / MEMIT
\citep{meng2022locating,meng2022mass} are discussed as conceptual
weight-editing comparators only (they modify model parameters rather
than resolve conflicts at query time).

\subsection{Metric Definitions}

\paragraph{\OPR{} ($\downarrow$).}
$\OPR = |\{q \in \mathcal{B} : \Ptheta(a_{\mathrm{old}}\mid q) > \Ptheta(a_{\mathrm{new}}\mid q)\}| / |\mathcal{B}|$.

\paragraph{Recovery ($\uparrow$).}
Fraction of records where the intervention (temporal cue or \TAS{})
makes $\overline{\log P}_\theta(a_{\mathrm{new}}\mid \cdot)
> \overline{\log P}_\theta(a_{\mathrm{old}}\mid \cdot)$.

\paragraph{\PTC{} rate ($\uparrow$).}
Conjunction of \OPR{}$=1$ under standard prompting AND Recovery$=1$
under temporal cue (the PTC definition in the main paper).

\paragraph{Filtered \PTC{} rate ($\uparrow$).}
\PTC{} rate computed only on records that pass the model-side
knowledge-recovery filter (\cref{app:filter}).

\paragraph{Elicitation Gap (\EG{}, $\uparrow$).}
$\EG = \mathbb{E}_q[\overline{\log P}(a_{\mathrm{new}}\mid q, \mathrm{intervention})
- \overline{\log P}(a_{\mathrm{new}}\mid q)]$. Reported in log-probability
and per-token probability.

\paragraph{\PA{} ($\uparrow$).}
Fraction of matched non-conflict records whose argmax over
$\{a_{\mathrm{old}}, a_{\mathrm{new}}\}$ is unchanged by the
intervention.

\paragraph{Detector \AUPRC{} / AUROC ($\uparrow$).}
Area under the precision--recall / ROC curve on a held-out test split
of the binary \PTC{} prediction~task.

\subsection{ITI Baseline Construction}
For each model we construct an ITI direction at the same $\ell^*$ as
\TAS{}, using a prefers-new vs.\ prefers-old contrast on the full
screening set under the standard prompt:
\[
\Delta_{\mathrm{ITI}} =
\mathrm{mean}\bigl(h_{\ell^*}\!(q) \mid \theta \text{ prefers } a_{\mathrm{new}}\bigr)
- \mathrm{mean}\bigl(h_{\ell^*}\!(q) \mid \theta \text{ prefers } a_{\mathrm{old}}\bigr).
\]
This is intentionally a strong baseline: the contrast pair is already
\PTC{}-relevant (not the generic truthful/deceptive contrast), so the
comparison isolates the value added by \TAS{}'s \emph{verified-conflict}
construction over a population-level \emph{behavioral} one. Each side
of the contrast is approximated by a stratified random sample of at
most $500$ screening records (seed $0$); the prefers-old side is the
verified \PTC{} subset itself, and the prefers-new side is sampled
from screening records where the model's argmax over $\{a_{\mathrm{old}},
a_{\mathrm{new}}\}$ under the standard prompt is $a_{\mathrm{new}}$.
Each sampled instance contributes one forward pass through the standard
prompt at $\ell^{*}$. We then sweep $\alpha \in \{0.5,1,2,3,4,6,8\}$
identically to \TAS{}, against the same $200$-instance non-conflict
control set (seed $0$), and select the operating point that maximises
$J = \mathrm{Recovery} - \lambda(1 - \mathrm{PA})$ with $\lambda = 1.0$.
Only the construction of $\Delta$ differs between \TAS{} and ITI: every
other setting ($\ell^{*}$, $\alpha$ grid, control set, $J$ objective) is
shared.

\section{Paired Held-Out TAS-vs-ITI Evaluation}
\label{app:paired}

\paragraph{Reproducible protocol.}
The paired comparison runs entirely on the authoritative subject-disjoint
split (\cref{app:splits}). Both directions are constructed on \emph{train}
only: $\Delta_{\TAS}$ is the V2 per-relation mean of
$h_{\ell^*}(q,c_{\mathrm{temp}})-h_{\ell^*}(q)$ over train filtered-\PTC{}
records; $\Delta_{\mathrm{ITI}}$ is the raw mean-difference of
standard-prompt $h_{\ell^*}$ between a prefers-new positive group
($\le\!500$, seed $0$) and the train verified-\PTC{} negative group, read
from the cached (benchmark-ordered) standard-prompt hidden states so ITI
requires no forward passes. Neither vector is unit-normalized or
norm-matched; the shared $\alpha$ grid $\{0.5,1,2,3,4,6,8\}$ explores
scale independently for each method, and $\alpha$ is selected on
\emph{validation} by $J=\mathrm{Recovery}-(1-\mathrm{PA})$ (ties broken
toward smaller $\alpha$). Each method is evaluated \emph{once} on the
identical test verified-\PTC{} records and the identical $200$ clean
controls; per-instance decisions are joined one-to-one by record id for
the exact McNemar tests. Directions and their train-record ids are
asserted train-only before evaluation. Construction counts, $\Delta$
norms, selected $\alpha$, and pinned model snapshot revisions are given in
\cref{tab:paired-construction}:

\begin{table}[htbp]
\centering\small\setlength{\tabcolsep}{3.5pt}
\begin{tabular}{lccccccl}
\toprule
\hdrrow
\hdr{Model} & \hdr{ITI$^{+}$} & \hdr{ITI$^{-}$} & \hdr{$\lVert\Delta_{\mathrm{ITI}}\rVert$} &
\hdr{TAS $n$} & \hdr{$\alpha_{\TAS}$} & \hdr{$\alpha_{\mathrm{ITI}}$} & \hdr{snapshot} \\
\midrule
Qwen-2.5-1.5B   & 500 & 123 & 19.9 & 64  & 2 & 6 & 8faed761 \\
Qwen-2.5-7B     & 500 & 189 & 52.3 & 149 & 2 & 8 & d1497293 \\
Mistral-7B-v0.3 & 500 & 311 & 5.9  & 290 & 4 & 8 & caa1feb0 \\
Llama-3.1-8B    & 500 & 399 & 12.8 & 373 & 6 & 6 & d04e592b \\
\bottomrule
\end{tabular}
\caption{Train-only direction construction and selected operating points.
ITI$^{+}$/ITI$^{-}$: positive (prefers-new) / negative (verified-\PTC{})
group sizes; TAS $n$: filtered-\PTC{} train records; snapshot: leading
Hugging Face commit (fp16, torch 2.11, transformers 5.5).}
\label{tab:paired-construction}
\end{table}

\paragraph{Paired significance.}
On held-out test data, no Recovery difference is significant after Holm
correction across the four models: discordant (TAS-wins/ITI-wins) and
raw$\to$Holm McNemar $p$ are $5/3$ ($0.73\to1$), $2/2$ ($1\to1$), $6/10$
($0.45\to1$), $19/13$ ($0.38\to1$) for Qwen-1.5B, Qwen-7B, Mistral,
Llama. For preservation, ITI preserves more clean answers, significantly
so on Qwen-2.5-1.5B (discordant $3/18$, $p_{\mathrm{raw}}=0.0015$,
$p_{\mathrm{Holm}}=0.006$) and numerically on three of four models. The
earlier oracle, full-data comparison (\cref{tab:oracle-iti-diag}) reported
large Qwen \TAS{} advantages ($+0.142$, $+0.152$); these do not survive
held-out, validation-selected evaluation and are retained only as a
diagnostic of construction under test-set-optimal $\alpha$.

\begin{table}[htbp]
\centering\small\setlength{\tabcolsep}{4pt}
\begin{tabular}{lcccc}
\toprule
\hdrrow
\hdr{Model} & \hdr{ITI Rec.} & \hdr{TAS Rec.} & \hdr{$\Delta$} & \hdr{note} \\
\midrule
Qwen-2.5-1.5B   & 0.373 & 0.515 & $+0.142$ & oracle \\
Qwen-2.5-7B     & 0.465 & 0.617 & $+0.152$ & oracle \\
Mistral-7B-v0.3 & 0.349 & 0.293 & $-0.056$ & oracle \\
Llama-3.1-8B    & 0.332 & 0.382 & $+0.050$ & oracle \\
\bottomrule
\end{tabular}
\caption{\textbf{Diagnostic only (superseded).} Oracle, full-data
\TAS{}-vs-ITI with test-set-optimal $\alpha$ and directions built from
the whole benchmark. Included to document why the held-out main-paper result differs: removing oracle $\alpha$ and full-data
directions, and evaluating on subject-disjoint held-out records, shrinks
the Qwen gaps to non-significance.}
\label{tab:oracle-iti-diag}
\end{table}

\section{Layer Localization: Patch vs.\ Edit}
\label{app:localization}

\paragraph{Split usage and reproduction.}
The per-layer answer-flip rate (AFR) is recomputed from the cached
per-instance patching outputs, mapped one-to-one onto the subject-disjoint
manifest (\cref{app:splits}); the full-data AFR peaks exactly reproduce
the main-paper locator table ($0.723/0.851/0.824/0.816$). Held-out
\emph{test} AFR reproduces these peaks ($0.60/0.86/0.87/0.82$ for
Qwen-1.5B/Qwen-7B/Mistral/Llama; Qwen-1.5B has only $n{=}10$ test
positives, so its localization estimate is descriptive). The
$0.05$-plateau widths ($6/6/14/9$) are unchanged under $0.03$ and $0.10$
thresholds, so ``Qwen tightest, Mistral broadest, Llama intermediate'' is
threshold-stable.

\paragraph{Three distinct analyses.}
We keep separate: \textbf{(A)} activation \emph{patching} at layer $\ell$
(donor $h_\ell(q,c_{\mathrm{temp}})$ into the standard run); \textbf{(B)}
\emph{portability} (applying the same numeric train-only
$\Delta_{\ell^*}$ at other layers); and \textbf{(C)} \emph{layer-specific}
train-only $\Delta_\ell=\mathrm{mean}[h_\ell(q,c_{\mathrm{temp}})-h_\ell(q)]$.
We never compare $\Delta_\ell$ and $\Delta_{\ell^*}$ by cosine (different
residual spaces); (B) is described only as a portability experiment.

\paragraph{Geometry and correlations (held-out test).}
The mean cosine of each test instance's temporal shift with the train-only
$\Delta_\ell$ is high at early layers ($0.91$--$0.96$) and low at $\ell^*$
($0.36$--$0.51$). Held-out steering Recovery correlates \emph{negatively}
with patching AFR across evaluated layers (Spearman
$\rho=-0.67,-0.82,-0.98,0.0$ for the four models) and \emph{positively}
with shift coherence ($\rho=0.89^{*},0.98^{**},0.98^{**},0.0$; ${}^{*}p<0.05$).
Llama's steering Recovery is nearly flat across layers, hence its zero
correlation. Paired McNemar (off-peak vs.\ $\ell^*$) is significant only
for Mistral, where early layers \emph{beat} $\ell^*$ ($p{=}0.02$); the
small test-PTC counts ($10$--$87$) limit power elsewhere. Full per-layer
AFR curves (with bootstrap bands), steering-by-layer Recovery/PA, and the
geometry table are released with the code and data; the multi-panel figure
is \cref{fig:layer-localization}.

\begin{figure}[htbp]
    \centering
    \includegraphics[width=\columnwidth]{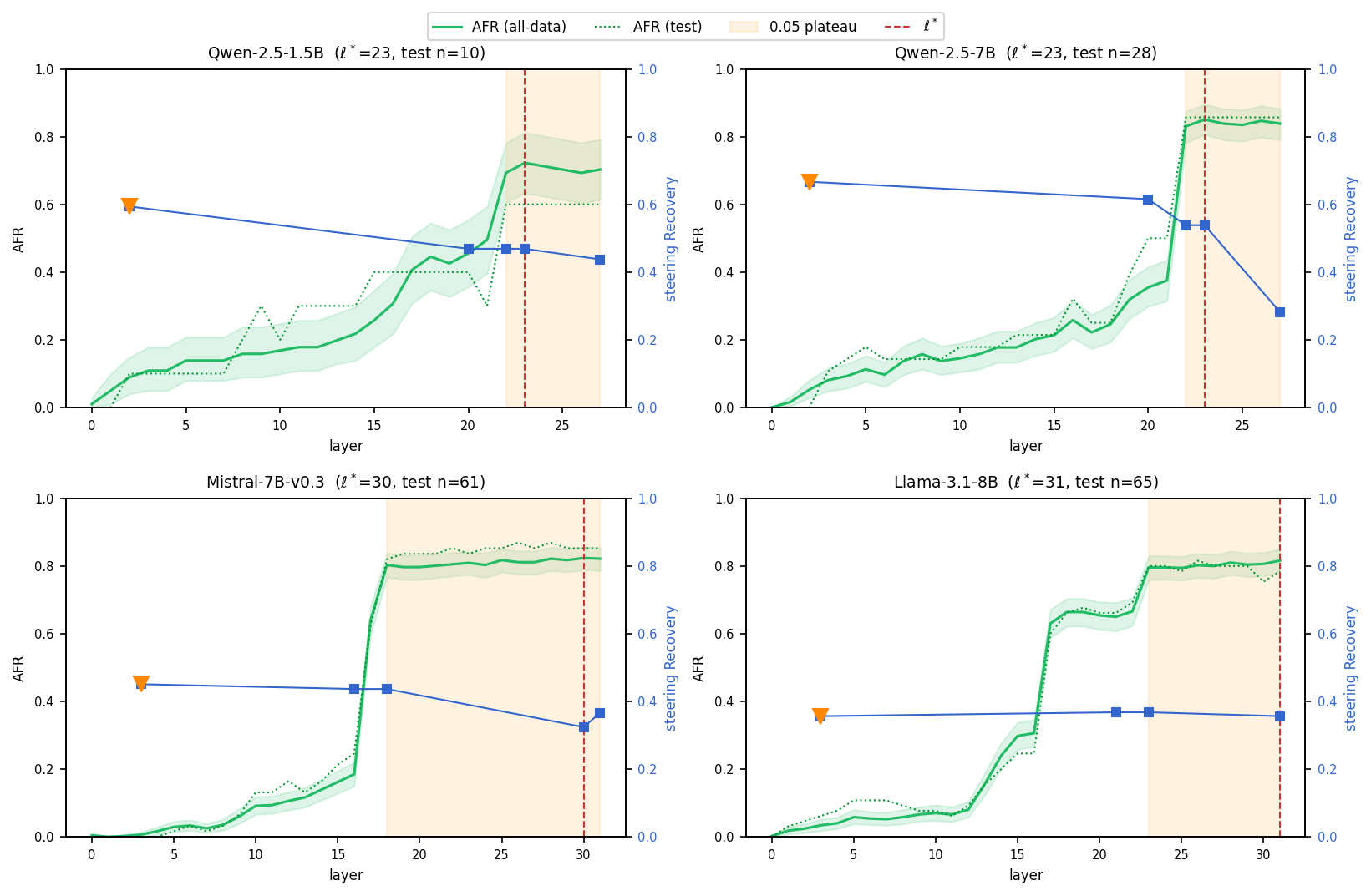}
    \caption{Per-model layer localization. Green: patching AFR by layer
    (all-data solid, test dotted) with bootstrap band; orange shading:
    $0.05$ plateau; red dashed: $\ell^*$; blue: held-out steering Recovery
    at evaluated layers; triangle: off-peak control. Steering Recovery is
    high at early layers where patching AFR is near zero, illustrating that
    patch-localization and edit-efficacy have different layer profiles.}
    \label{fig:layer-localization}
\end{figure}

\section{Knowledge-Recovery Filter}
\label{app:filter}

The knowledge-recovery filter is the key methodological device that
distinguishes \emph{true \PTC{}} from \emph{knowledge absence}. Without
this filter, the headline \PTC{} metric would silently aggregate two
failure modes that look identical under raw measurements but have
opposite implications for any test-time intervention.

\paragraph{Mechanics.} A record is retained for a given model only if
\[
\overline{\log P}_{\theta}(a_{\mathrm{new}} \mid q, c_{\mathrm{temp}})
\;\ge\; \tau_{\mathrm{rec}},
\]
that is, only if the model assigns at least $e^{\tau_{\mathrm{rec}}}$
per-token probability to the newer answer when given the temporal
elicitation cue. We use $\tau_{\mathrm{rec}}=-3$ throughout the paper,
corresponding to roughly a $5\%$ per-token probability floor on the
newer answer. Records that fail this threshold are flagged as
\emph{knowledge-absent for that model} and excluded from filtered
\PTC{}-rate computations (but retained in the raw \PTC{} computations
for honesty about the benchmark's composition).

\paragraph{Model-specificity.} Because the filter depends on the
model's actual probability assignment, the kept-record set is
\emph{different for each model}. A record may belong to the \PTC{}
regime for one model but the knowledge-absence regime for another. We
therefore always report raw and filtered \PTC{} as a pair
(\cref{fig:filter-capacity}), never collapsing them into a single number.

\paragraph{Reading the filter as a knowledge-breadth measure.} The
fraction of the benchmark each model retains as recoverable is itself
an indirect measure of that model's parametric knowledge breadth on the
benchmark's relation distribution: a larger or more recent model
retains a larger fraction. \cref{fig:filter-capacity} visualises the
joint trajectory of the four models in (kept-fraction, filtered \PTC{})
space.

\begin{figure}[htbp]
    \centering
    \includegraphics[width=0.78\textwidth]{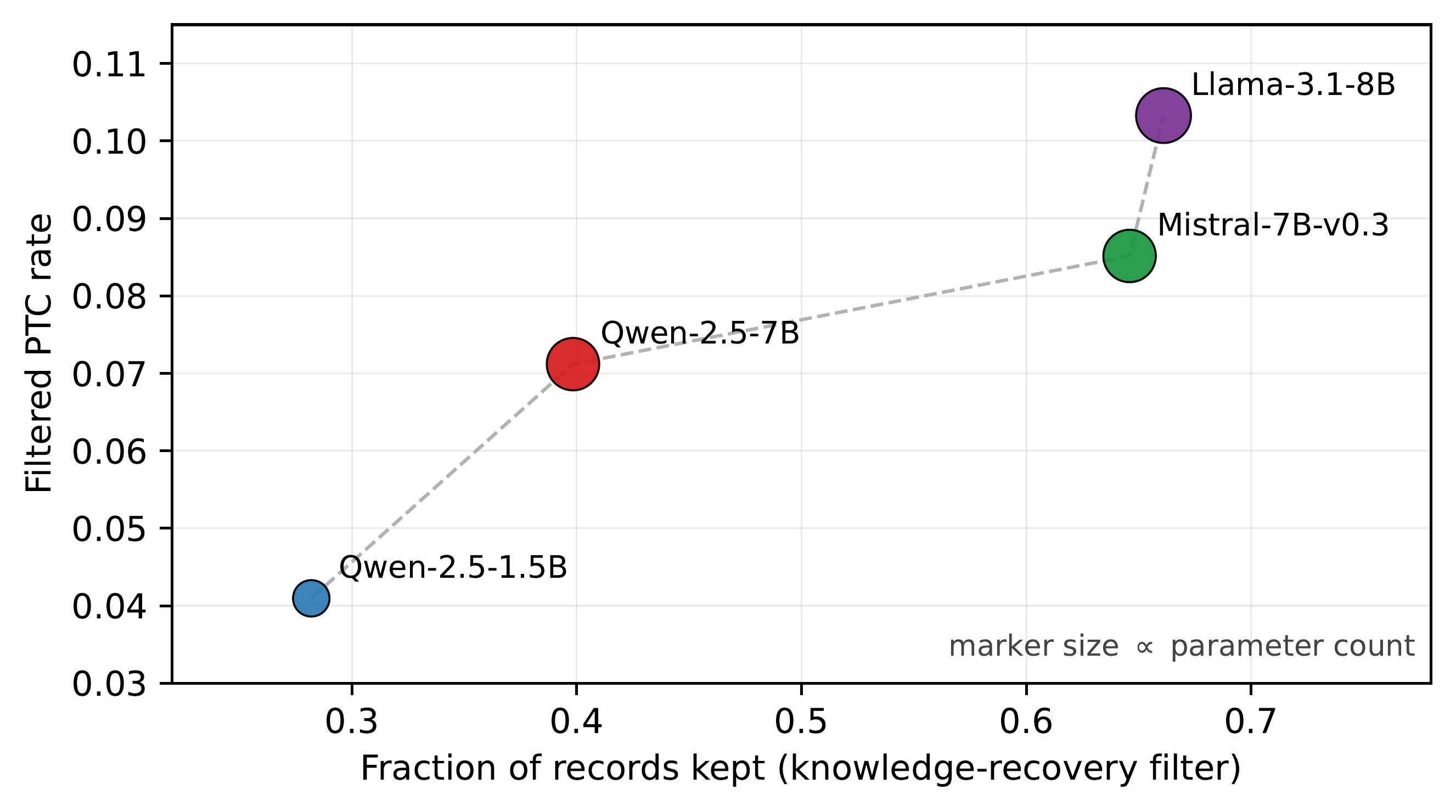}
    \caption{The knowledge-recovery filter across the evaluated models.
    Each point is a model placed at its measured
    (kept-fraction, filtered \PTC{}) coordinate, where
    \emph{kept-fraction} is the share of the $8{,}746$-record benchmark
    that passes
    $\overline{\log P}_{\theta}(a_{\mathrm{new}}\mid q, c_{\mathrm{temp}})
    \ge \tau_{\mathrm{rec}}$ at $\tau_{\mathrm{rec}}=-3$, and
    \emph{filtered \PTC{}} is the \PTC{} rate on that retained
    subset. Marker size scales with parameter count. Both axes climb
    together across the evaluated models: those retaining less of the
    benchmark as recoverable also accumulate a smaller
    outdated-preference margin on what they retain, while those that
    retain more accumulate a proportionally larger
    margin. Because the models differ in family, tokenizer, and cutoff
    as well as size, this is an observed trend rather than a controlled
    effect of capacity. It is the empirical pattern that motivates
    reporting filtered \PTC{} alongside raw \PTC{} throughout the
    paper. Numbers are recomputed directly from per-instance
    log-probabilities and reproduce the main-paper Phase~1 figure in the
    main body.}
    \label{fig:filter-capacity}
\end{figure}

\begin{insightgreen}{Increase of filtered \PTC{} across the evaluated models}
Kept fraction climbs from $28.2\%$ (Qwen-1.5B) to $66.1\%$
(Llama-3.1-8B) while filtered \PTC{} rises alongside it
$0.041 \to 0.071 \to 0.085 \to 0.103$. The two move together across
the four evaluated models, which differ in family, tokenizer, and
cutoff as well as size; this is an observed trend, not a controlled
scaling result.
\end{insightgreen}

\begin{insightorange}{Reporting raw and filtered \PTC{} together}
Because the filter depends on the model's own probability assignment,
the kept-record set differs across models. Reporting only \emph{raw}
\PTC{} silently aggregates true conflict with knowledge absence;
reporting only \emph{filtered} \PTC{} hides the benchmark-composition
shift across models. We therefore report both throughout the paper
(\cref{fig:filter-capacity}).
\end{insightorange}

\paragraph{Threshold rationale.}
$\tau_{\mathrm{rec}}$ is applied to the \emph{length-normalized} answer
log-probability $\overline{\log P}_\theta(a_{\mathrm{new}}\mid
q,c_{\mathrm{temp}})$ (mean per answer token), so $e^{\tau_{\mathrm{rec}}}$
is a per-token geometric-mean probability floor, \emph{not} a raw sequence
probability. A stricter (higher) floor increases confidence that the newer
fact is parametrically recoverable but reduces coverage; a looser floor
increases coverage but admits less plausible completions.
$\tau_{\mathrm{rec}}=-3$ is the code default fixed before any downstream
experiment (\texttt{recovery\_threshold} in the activation-cache stage) and
was not selected to maximize TAS Recovery.

\paragraph{Sensitivity to $\tau_{\mathrm{rec}}$.}
We regenerate the filter at $\tau_{\mathrm{rec}}\in\{-2,-2.5,-3,-3.5,-4\}$
from the cached phase-1 scores (\cref{tab:tau-rec-sweep}; every cell
reproduces exactly, and \cref{fig:tau-rec} plots it). The filtered-\PTC{}
rate (\PTC{} count over \emph{retained} records) shifts smoothly with the
threshold, but the conclusions are stable across the reviewer's
$\{-2.5,-3,-3.5\}$ window and beyond. First, the four-model ordering on
filtered \PTC{} (Qwen-1.5B $<$ Qwen-7B $<$ Mistral-7B $<$ Llama-8B) is
preserved at every $\tau_{\mathrm{rec}}\le-2.5$ (Spearman $\rho=1$ vs.\
$-3$); only at the most permissive $-2$ do Qwen-7B and Mistral swap, by
$<\!0.005$. We describe this as an \emph{empirical cross-model} ordering,
not controlled capacity scaling: only the within-Qwen pair
($1.5\text{B}<7\text{B}$) isolates parameter count, as the four models
differ in family, corpus, and cutoff. Second, filtered-\PTC{} sets are
highly stable (Jaccard $0.86$--$0.98$ vs.\ $-3$; the sets are nested by
construction). Third, holding the Bucket-5 train-only direction and
validation-selected $\alpha$ \emph{fixed} and varying only the evaluation
filter, held-out TAS Recovery on the filtered test-\PTC{} subset moves by
$\le0.09$ with overlapping CIs (\cref{tab:tau-rec-split}); this is a
fixed-intervention re-filtering, not a retrained sensitivity study (a
fully retrained sweep would rebuild the direction and re-select $\alpha$
per threshold, which we do not run). Relation ordering is also stable:
P35 is the strongest per-record relation for three of four models at every
threshold (P169 the runner-up for the $7$--$8$B models; on Qwen-1.5B P169
carries little signal). No conclusion in the paper hinges on the exact
value $-3$.

\begin{table}[htbp]
\centering\small\setlength{\tabcolsep}{5pt}
\begin{tabular}{llccccc}
\toprule
\hdrrow
\hdr{Model} & \hdr{$\tau_{\mathrm{rec}}$} & \hdr{train} & \hdr{val} & \hdr{calib} & \hdr{test} & \hdr{test kept \%} \\
\midrule
Qwen-2.5-1.5B & $-2.5$ & 56 & 13 & 8 & 10 & 17 \\
Qwen-2.5-1.5B & $-3.0$ & 64 & 18 & 9 & 10 & 26 \\
Qwen-2.5-1.5B & $-3.5$ & 73 & 20 & 10 & 15 & 40 \\
\midrule
Qwen-2.5-7B & $-2.5$ & 142 & 42 & 25 & 28 & 29 \\
Qwen-2.5-7B & $-3.0$ & 149 & 44 & 27 & 28 & 40 \\
Qwen-2.5-7B & $-3.5$ & 159 & 45 & 27 & 32 & 52 \\
\midrule
Mistral-7B-v0.3 & $-2.5$ & 277 & 73 & 53 & 59 & 52 \\
Mistral-7B-v0.3 & $-3.0$ & 289 & 76 & 55 & 61 & 63 \\
Mistral-7B-v0.3 & $-3.5$ & 298 & 78 & 56 & 66 & 75 \\
\midrule
Llama-3.1-8B & $-2.5$ & 358 & 83 & 57 & 81 & 55 \\
Llama-3.1-8B & $-3.0$ & 372 & 85 & 57 & 83 & 65 \\
Llama-3.1-8B & $-3.5$ & 378 & 87 & 60 & 85 & 73 \\
\bottomrule
\end{tabular}
\caption{Split-aware filtered-\PTC{} counts (is-\PTC{} $\wedge$
knowledge-present at $\tau_{\mathrm{rec}}$) by partition, and the test
kept fraction. Held-out test filtered-\PTC{} counts are small for
Qwen-2.5-1.5B ($\le15$), so its threshold-level estimates are descriptive;
the other three models retain $\ge28$ test positives throughout. Bucket-5
Recovery uses the (threshold-independent) is-\PTC{} test set as its
denominator; these filtered counts govern train-side direction
construction.}
\label{tab:tau-rec-split}
\end{table}

\begin{figure}[htbp]
    \centering
    \includegraphics[width=\columnwidth]{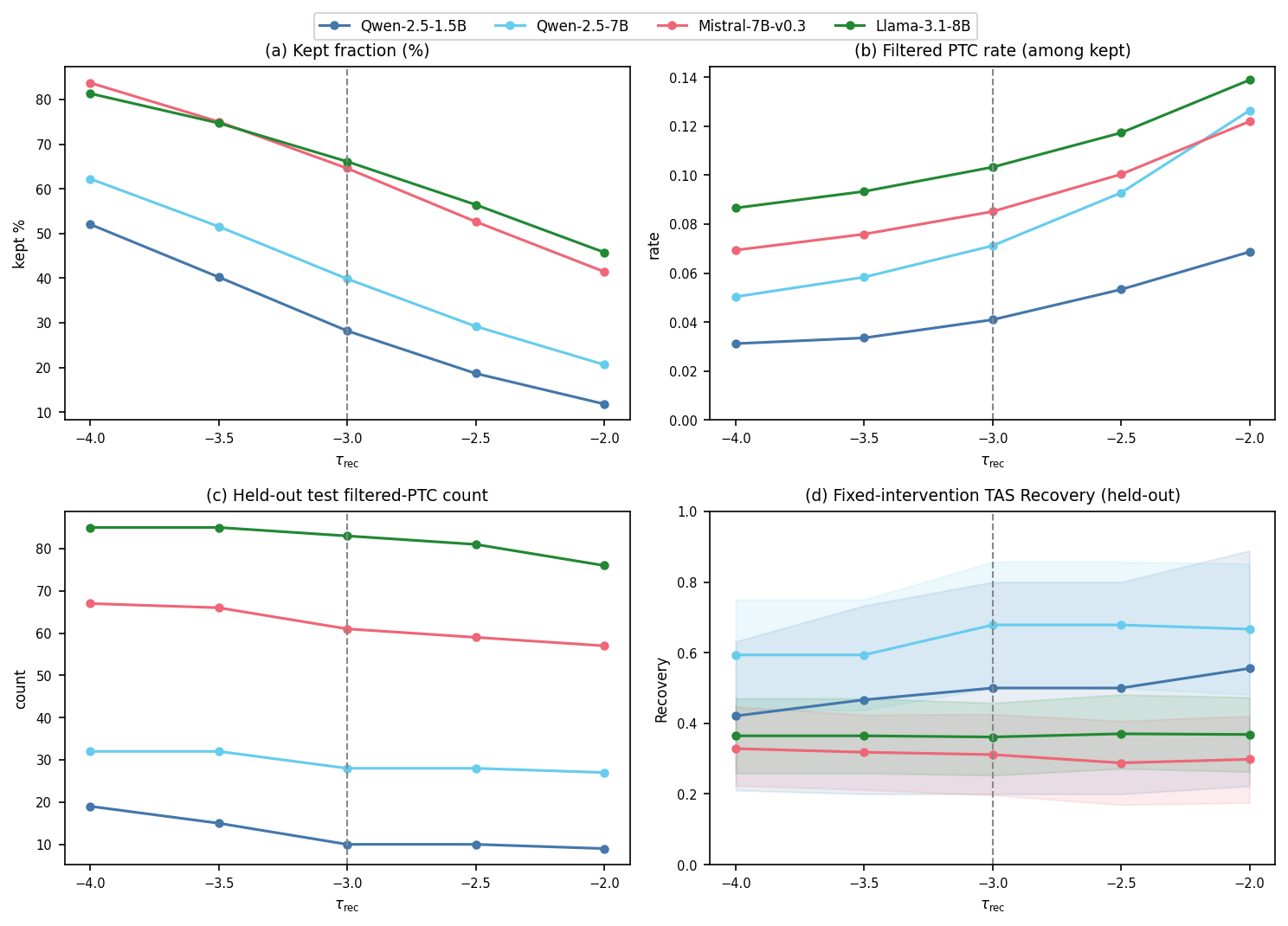}
    \caption{$\tau_{\mathrm{rec}}$ sensitivity (dashed line: primary
    $-3.0$): (a) kept fraction, (b) filtered-\PTC{} rate among kept, (c)
    held-out test filtered-\PTC{} count, (d) fixed-intervention held-out
    TAS Recovery with $95\%$ bootstrap bands. Axes are not truncated.}
    \label{fig:tau-rec}
\end{figure}

\begin{table}[htbp]
\centering
\rowcolors{2}{tablezebra}{white}
\small
\setlength{\tabcolsep}{6pt}
\renewcommand{\arraystretch}{1.25}
\begin{tabular}{lccccc}
\toprule
\hdrrow
\hdr{Model} & \hdr{$-2$} & \hdr{$-2.5$} & \hdr{$-3$} &
\hdr{$-3.5$} & \hdr{$-4$} \\
\midrule
\multicolumn{6}{l}{\emph{Filtered \PTC{} rate (\PTC{} among retained records)}} \\
Qwen-2.5-1.5B   & 0.069 & 0.053 & \cellcolor{tablehighlight}\textbf{0.041} & 0.034 & 0.031 \\
Qwen-2.5-7B     & 0.126 & 0.093 & \cellcolor{tablehighlight}\textbf{0.071} & 0.058 & 0.050 \\
Mistral-7B-v0.3 & 0.122 & 0.100 & \cellcolor{tablehighlight}\textbf{0.085} & 0.076 & 0.069 \\
Llama-3.1-8B    & 0.139 & 0.117 & \cellcolor{tablehighlight}\textbf{0.103} & 0.093 & 0.087 \\
\midrule
\multicolumn{6}{l}{\emph{Kept fraction (\% of $8{,}746$ benchmark passing the filter)}} \\
Qwen-2.5-1.5B   & 11.8 & 18.7 & \textbf{28.2} & 40.2 & 52.0 \\
Qwen-2.5-7B     & 20.6 & 29.2 & \textbf{39.8} & 51.5 & 62.2 \\
Mistral-7B-v0.3 & 41.4 & 52.6 & \textbf{64.6} & 75.0 & 83.7 \\
Llama-3.1-8B    & 45.8 & 56.4 & \textbf{66.1} & 74.7 & 81.3 \\
\bottomrule
\end{tabular}
\caption{Sensitivity of the knowledge-recovery filter to the recovery
threshold $\tau_{\mathrm{rec}}$ (column headers), on the full
$8{,}746$-record benchmark. \emph{Top panel:} filtered \PTC{} rate, the
\PTC{} rate among the records the filter retains. \emph{Bottom panel:}
kept fraction, the share of the benchmark passing
$\overline{\log P}_\theta(a_{\mathrm{new}}\mid q,c_{\mathrm{temp}})
\ge\tau_{\mathrm{rec}}$. The cream-highlighted column is the operating
point $\tau_{\mathrm{rec}}=-3$ used throughout the paper; its values
reproduce \cref{fig:filter-capacity}. The model ordering on filtered
\PTC{} is invariant for $\tau_{\mathrm{rec}}\le-2.5$, and filtered
\PTC{} and kept-fraction continue to increase together across the
evaluated models over the full range, so the filter's qualitative
conclusions do not depend on the exact threshold.}
\label{tab:tau-rec-sweep}
\end{table}

\section{Benchmark Construction and Verification}
\label{app:benchmark}

This appendix documents how the $8{,}746$-record benchmark was
constructed and verified. Every record traces back to specific Wikidata
statements, and every admission condition is a deterministic filter over
the resulting record. \emph{``Verified'' here means: every admitted record
passes deterministic Wikidata consistency checks}, not that any record was
hand-checked. A stratified $200$-record sample with annotation guidelines is
released for independent auditing; that annotation is not yet complete, so
we report no audit precision (\cref{app:bench-audit-results}).
Released artifacts: the SPARQL queries,
the per-record verification manifest and rejection logic, and the
manual-audit sample and annotation guide.

\subsection{Wikidata Sources and SPARQL Extraction}
\label{app:bench-sparql}
We mine five single-holder position-holder relations: head of state (P35),
CEO (P169), head coach (P286), chairperson (P488), and head of government
(P6). The exact, runnable extraction queries are released with the code and data: one per relation, plus a
parameterized template documenting endpoint, retrieval date, rank
and timestamp handling, and batching. Each query pulls every holder
statement with its value, start (P580) and end (P582) qualifiers, and rank,
against \texttt{query.wikidata.org}. Deprecated-rank statements are
excluded; preferred and normal ranks are both retained; missing P580/P582
are returned unbound and rejected downstream; English labels are requested
via the label service (items lacking an English label are rejected). The
raw candidate set is regenerable from these queries; the released repo
retains only the admitted records.

\subsection{Candidate Transition Construction}
\label{app:bench-candidates}
Within each subject, holder statements are ordered by start date and every
pair of \emph{consecutive} holders forms one candidate
$(a_{\mathrm{old}}, a_{\mathrm{new}}, t_{\mathrm{update}})$ transition, with
$t_{\mathrm{update}}=t_{\mathrm{new,start}}$; a subject with $k$ consecutive
holders yields up to $k{-}1$ candidates. The relation distribution is
intentionally unbalanced (\cref{fig:relation-dist}).

\begin{figure}[htbp]
    \centering
    \includegraphics[width=0.92\columnwidth]{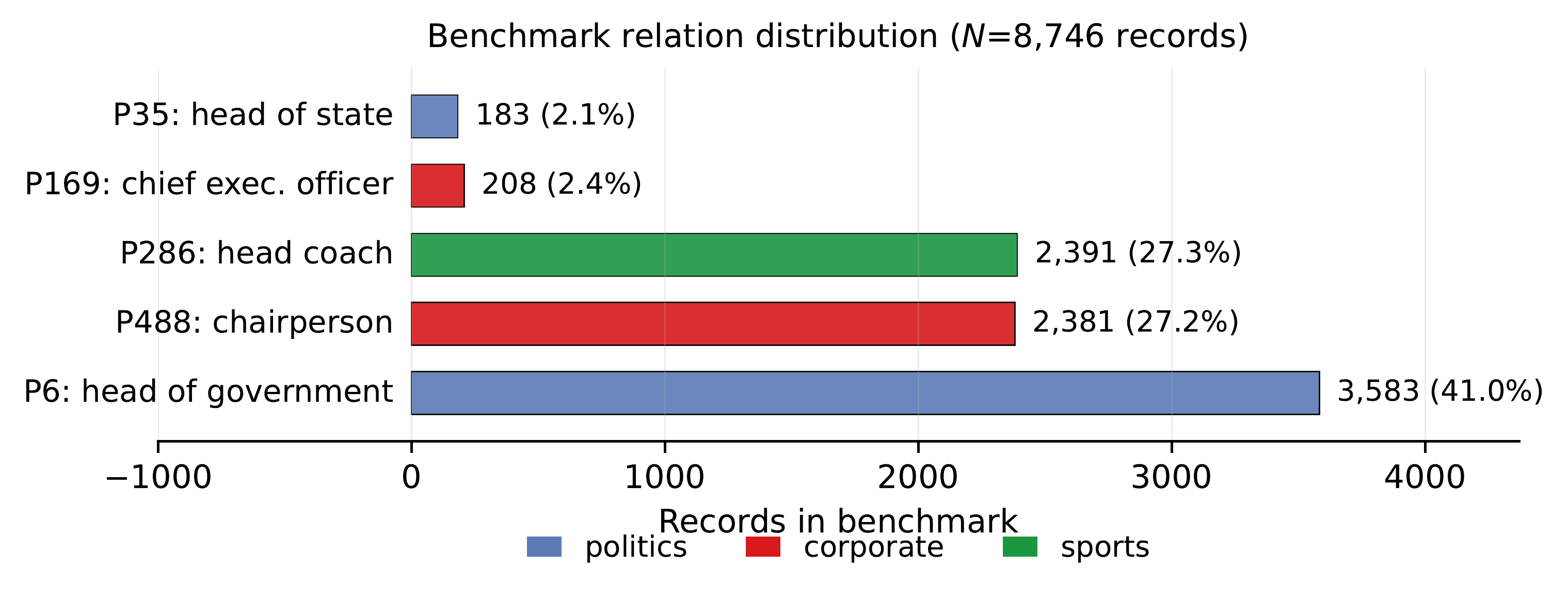}
    \caption{Benchmark relation distribution ($N{=}8{,}746$ records,
    five Wikidata relations, three domains). The two smallest relations,
    P35 (head of state, $n{=}183$) and P169 (CEO, $n{=}208$), carry the
    strongest per-record \PTC{} signal across all four models.}
    \label{fig:relation-dist}
\end{figure}

\paragraph{Temporal-era coverage.}
Bucketing by update year (\cref{fig:by-era}) gives an inverted-U with
$[2020,2021]$ as the peak for the three $7$--$8$B models; Llama-3.1-8B
uniquely retains a substantial $2022$--$2024$ rate, consistent with its
more recent training cutoff.

\begin{figure}[htbp]
    \centering
    \includegraphics[width=0.92\columnwidth]{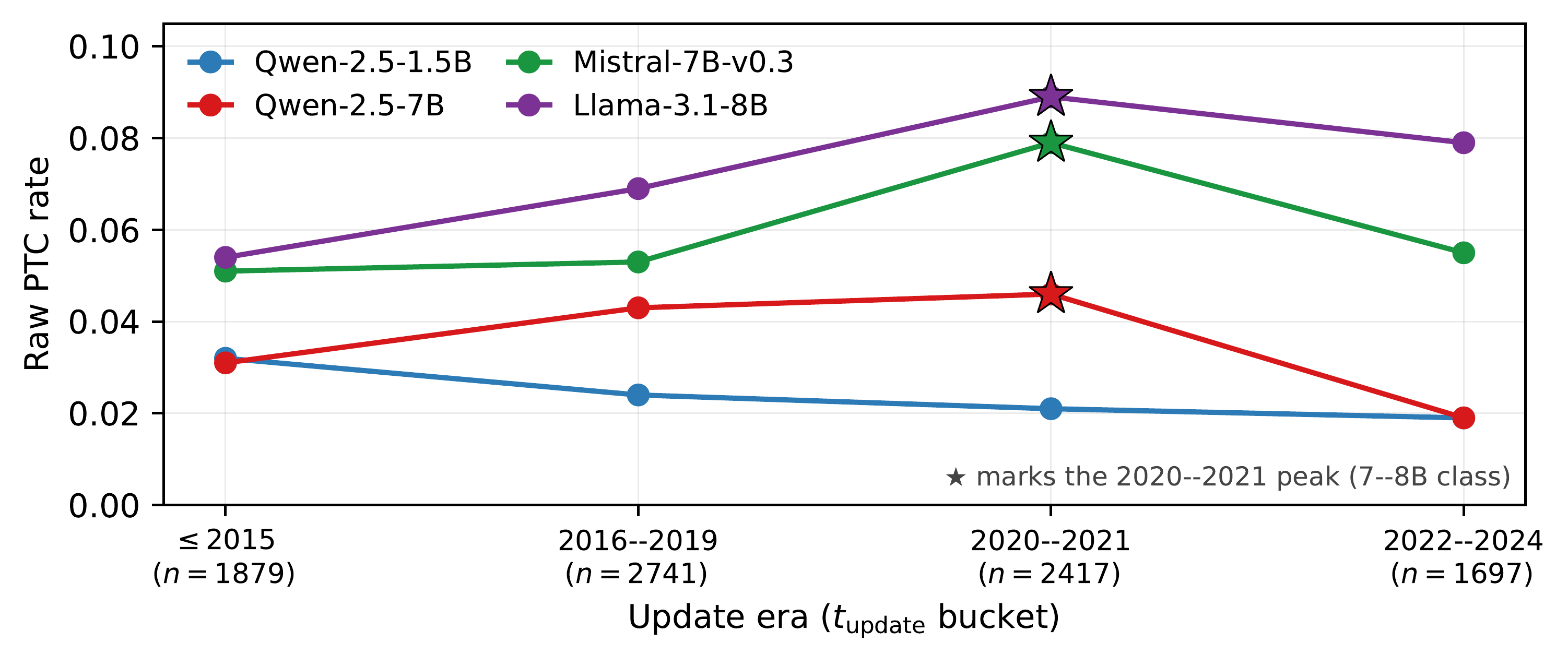}
    \caption{Per-era raw \PTC{} rate. The $2020$--$2021$ bucket is the
    peak for the three $7$--$8$B-class models (stars); Llama-3.1-8B
    uniquely retains a substantial $2022$--$2024$ rate, consistent with
    its more recent training cutoff.}
    \label{fig:by-era}
\end{figure}

\paragraph{Prompt construction.}
For each candidate $(q, a_{\mathrm{old}}, a_{\mathrm{new}},
t_{\mathrm{update}})$ tuple, we generate a standard prompt
(\emph{``Who is the $\langle$role$\rangle$ of
$\langle$subject$\rangle$?''}) and a temporal prompt
(\emph{``As of $\langle t\rangle$, who was the $\langle$role$\rangle$
of $\langle$subject$\rangle$?''}) where $t$ is chosen inside the new
fact's validity window. The temporal prompt uses past-tense
``who was'' for grammatical alignment with a specified past year.
\paragraph{Verification conditions.} A tuple is admitted only if all of
the following hold: (1) under standard prompting the model prefers
$a_{\mathrm{old}}$; (2) under temporal elicitation the model recovers
$a_{\mathrm{new}}$; (3) both answers are grammatically and factually
plausible completions; (4) the update is verified to be superseding
rather than additive; (5) knowledge-recoverability: the model satisfies
the recovery filter (\cref{app:filter}). Condition~(5) is model-specific
by construction; we report both \emph{raw} (all records) and
\emph{filtered} (model-recoverable) \PTC{} rates as a pair, never
collapsing them into a single number.

\subsection{Deterministic Admission Rules}
\label{app:bench-rules}
Independent of any model, a candidate transition is admitted to the
benchmark only if it passes all of these deterministic checks on the
Wikidata timestamps and labels:
\begin{compactitem}
\item \textbf{Abutting windows:} $|t_{\mathrm{old,end}}-t_{\mathrm{new,start}}|\le 60$
days (an approximate hand-off; overlaps up to $60$ days are permitted to
absorb Wikidata's date imprecision, larger overlaps/vacancies are rejected).
\item \textbf{Minimum old tenure:} $t_{\mathrm{old,end}}-t_{\mathrm{old,start}}\ge 180$
days, so the outdated fact had time to establish in pre-training text.
\item \textbf{Distinct holders:} $a_{\mathrm{old}}\neq a_{\mathrm{new}}$ (rejects
re-elections / re-appointments).
\item \textbf{Single-holder relation:} only P6/P35/P169/P286/P488, which admit
one holder at a time (rejects co-holder relations).
\item \textbf{Canonical English labels:} both answers have non-empty English
Wikidata labels (bare-QID values rejected).
\item \textbf{Year range:} $t_{\mathrm{update}}\in[2010,2024]$.
\item \textbf{Deduplication:} unique $(\text{relation},\text{subject},a_{\mathrm{old}},a_{\mathrm{new}},t_{\mathrm{update}})$.
\end{compactitem}
Recomputing all seven quantities from the stored timestamps for every
released record confirms that all $8{,}746$ admitted records satisfy every
rule (gap $\in[-60,60]$ d, tenure $\in[180,6209]$ d, $0$ same-holder, $0$
missing labels, $0$ duplicates; recorded in a verification manifest
regenerated by our verification pipeline).

\subsection{Rejection Breakdown}
\label{app:bench-rejection}
The pipeline assigns each rejected candidate a stable code:
MISSING\_ENGLISH\_LABEL, MISSING\_TIME\_BOUNDARY, OVERLAPPING\_HOLDERS,
LONG\_VACANCY, SHORT\_OLD\_TENURE, SAME\_HOLDER, COHOLDER\_CASE,
NON\_CONSECUTIVE\_TRANSITION, OUTSIDE\_YEAR\_RANGE, DUPLICATE\_RECORD,
ADDITIVE\_NOT\_SUPERSEDING, AMBIGUOUS\_WIKIDATA\_STATEMENT. The released
repository retains only the admitted set (over which every code has count
$0$ by construction); the full per-code candidate breakdown is regenerable
by running the released SPARQL (\cref{app:bench-sparql}) through the same
rule logic. \cref{tab:bench-examples} gives real Wikidata cases for the
principal rejection codes.

\subsection{Manual-Audit Protocol}
\label{app:bench-audit-protocol}
We draw a stratified random sample of $200$ admitted records (seed
$20260711$), with a per-relation floor of $20$ so the two rare relations
(P35, P169) can be inspected separately; the realized allocation is
P6:$61$, P35:$22$, P169:$22$, P286:$48$, P488:$47$. Every sampled
\texttt{record\_id} exists in the final benchmark (verified). The annotation
guide poses
yes/no/uncertain questions per record: does the subject match the intended
entity; was $a_{\mathrm{old}}$ valid before and $a_{\mathrm{new}}$ after the
transition; is this a true supersession (not additive/co-holder); are the
validity intervals consistent; are both English labels usable in the prompt;
is the generated question well formed; should the record remain. Annotators
record a label (VALID / INVALID / UNCERTAIN), the supporting Wikidata
statement ID or URL, an error category, and notes; UNCERTAIN is never
collapsed into VALID.

\subsection{Manual-Audit Results}
\label{app:bench-audit-results}
\emph{Pending.} The stratified $200$-record sample is released for
annotation; manual-audit precision and its $95\%$ confidence interval,
per-relation and per-era validity rates, and error-category counts will be
reported here once annotation is complete
(our released code computes the agreement statistic and its
Wilson/bootstrap CI from the annotated file). We deliberately report no
precision figure rather than a placeholder, and the main paper makes no
quantitative audit claim until the annotation is finished. This is a
single-audit design; no inter-annotator agreement is implied.

\subsection{Accepted and Rejected Examples}
\label{app:bench-examples-sec}
\begin{table}[htbp]
\centering\scriptsize\setlength{\tabcolsep}{3pt}
\begin{tabular}{@{}p{0.9cm}p{1.0cm}p{1.4cm}p{1.5cm}p{1.5cm}p{2.6cm}@{}}
\toprule
\hdrrow
\hdr{Status} & \hdr{Relation} & \hdr{Subject} & \hdr{Old $\to$ New} & \hdr{Time evidence} & \hdr{Reason} \\
\midrule
Accept & head of state & France (Q142) & Sarkozy (Q329) $\to$ Hollande (Q157) & old 2007-05-16$\to$2012-05-15; new from 2012-05-15 & abutting, tenure $1826$d, distinct \\
Accept & CEO & Deutsche Telekom (Q9396) & Obermann $\to$ H\"ottges & old $\to$2013-12-31; new from 2014-01-01 & abutting (gap $1$d), tenure $2604$d \\
Reject & head of gov. & 28th Canadian Ministry (Q220542) & Harper $\to$ Harper & three terms 2006/\allowbreak{}2008/\allowbreak{}2011 & \textbf{SAME\_HOLDER} (re-appointment) \\
Reject & CEO & Google Nest (Q2119882) & Fadell $\to$ N/A & end 2016-06-03, no start (P580) & \textbf{MISSING\_\allowbreak{}TIME\_\allowbreak{}BOUNDARY} \\
Reject & CEO & GNOME Foundation (Q1056660) & two ``current'' CEOs & since 2008 and since 2024, no end dates & \textbf{OVERLAPPING\_\allowbreak{}HOLDERS} / ambiguous \\
\bottomrule
\end{tabular}
\caption{Accepted (from the released benchmark) and rejected (real Wikidata
cases, retrieved live; QIDs shown) examples illustrating why each admission
rule is needed. The full list is released with the code and data.}
\label{tab:bench-examples}
\end{table}

\subsection{Remaining Benchmark Limitations}
\label{app:bench-limits}
(i) Wikidata timestamps are incomplete and sometimes imprecise (the
$60$-day abutment tolerance absorbs this but cannot fix missing dates); (ii)
entity labels can be ambiguous across languages (we require an English
label but do not disambiguate homonyms beyond the surface-form merge of
\cref{app:splits}); (iii) some real transitions are historically complex
(caretakers, interim holders) and are conservatively rejected rather than
modeled; (iv) the single-valued-relation assumption excludes multi-holder
offices by design; (v) the released audit sample covers $200$ records, not
the full benchmark, and its annotation is not yet complete, so no
manual-audit precision is available for any part of the benchmark.

\section{Extended Phase~1 Results}
\label{app:phase1-extended}

This appendix reports two extended Phase~1 views that the main paper's
headline tables compress: the \emph{full headline table} including the
elicitation gap in both log-probability and per-token-probability form
(\cref{tab:phase1-extended-full}), and the \emph{per-domain breakdown}
that aggregates relations into broad domains
(\cref{tab:phase1-by-domain-app}). The per-relation and per-era tables
are already in the main body
(\cref{fig:relation-breakdown} and the main-paper per-era figure); the per-domain view here completes
the three-way stratification.

\paragraph{Reading the elicitation gap.} The elicitation gap (EG)
measures how much probability mass the temporal cue moves toward
$a_{\mathrm{new}}$ on each record, in log-probability space and again
expressed as a per-token probability difference. A positive EG is the
\emph{direction-of-effect} signal that motivates \TAS{}: the temporal
cue moves the model in the right direction in every one of the four
models, even on the records where it fails to fully flip the
argmax. The log-probability gap is in a narrow $+0.09$ to $+0.18$ nat
band; the per-token probability gap ranges from $+0.015$ (Qwen-2.5-1.5B,
where the model's prior on $a_{\mathrm{new}}$ is low enough that the
cue's effect remains modest in absolute terms) to $+0.049$
(Llama-3.1-8B, where the model's better coverage means the same shift
translates to a larger absolute change).

\begin{table}[htbp]
\centering
\rowcolors{2}{tablezebra}{white}
\small
\setlength{\tabcolsep}{6pt}
\renewcommand{\arraystretch}{1.25}
\begin{tabular}{lccccc}
\toprule
\hdrrow
\hdr{Model} & \hdr{OPR} & \hdr{Recovery} &
\hdr{\PTC{}} & \hdr{EG (log-prob)} & \hdr{EG (prob)} \\
\midrule
Qwen-2.5-1.5B    & 0.530 & 0.475 & 0.024 & $+0.180$ & $+0.015$ \\
Qwen-2.5-7B      & 0.544 & 0.476 & 0.036 & $+0.090$ & $+0.018$ \\
Mistral-7B-v0.3  & 0.540 & 0.497 & 0.060 & $+0.148$ & $+0.040$ \\
Llama-3.1-8B     & 0.540 & 0.505 & \cellcolor{tablehighlight}\textbf{0.073} & $+0.179$ & \cellcolor{tablehighlight}$+0.049$ \\
\bottomrule
\end{tabular}
\caption{Extended Phase~1 screening on the full $8{,}746$-record
benchmark across all four models, with the elicitation gap (EG)
reported in both log-probability and per-token probability form;
the main body reports only the log-probability form. \emph{OPR}
is the fraction of records on which the model's argmax over
$\{a_{\mathrm{old}}, a_{\mathrm{new}}\}$ under the standard prompt
is $a_{\mathrm{old}}$; \emph{Recovery} is the fraction on which the
same argmax flips to $a_{\mathrm{new}}$ when the temporal cue
$c_{\mathrm{temp}}$ is added to the prompt; raw \emph{\PTC{}}
is the conjunction of OPR and Recovery on a per-instance basis
(the PTC definition in the main paper); \emph{EG~(log-prob)} is the mean
$\overline{\log P}(a_{\mathrm{new}}\mid q, c_{\mathrm{temp}})
- \overline{\log P}(a_{\mathrm{new}}\mid q)$ over the benchmark,
and \emph{EG~(prob)} is the same quantity exponentiated back to
per-token probability space. Cream-highlighted cells mark
Llama-3.1-8B's leading raw \PTC{} and per-token EG values, both
consistent with its more recent training cutoff. The narrow band
of OPR values ($0.530$--$0.544$) across all four models, despite
substantial differences in capacity, family, and training corpus, is
what we mean when we describe the outdated-preference rate as
``similar across evaluated models'' in the main body: even though larger models
score \emph{both} candidates more confidently in absolute terms,
the \emph{relative} preference for $a_{\mathrm{old}}$ over
$a_{\mathrm{new}}$ under the standard prompt is nearly the
same across the four models.}
\label{tab:phase1-extended-full}
\end{table}

\begin{figure}[htbp]
    \centering
    \includegraphics[width=0.92\textwidth]{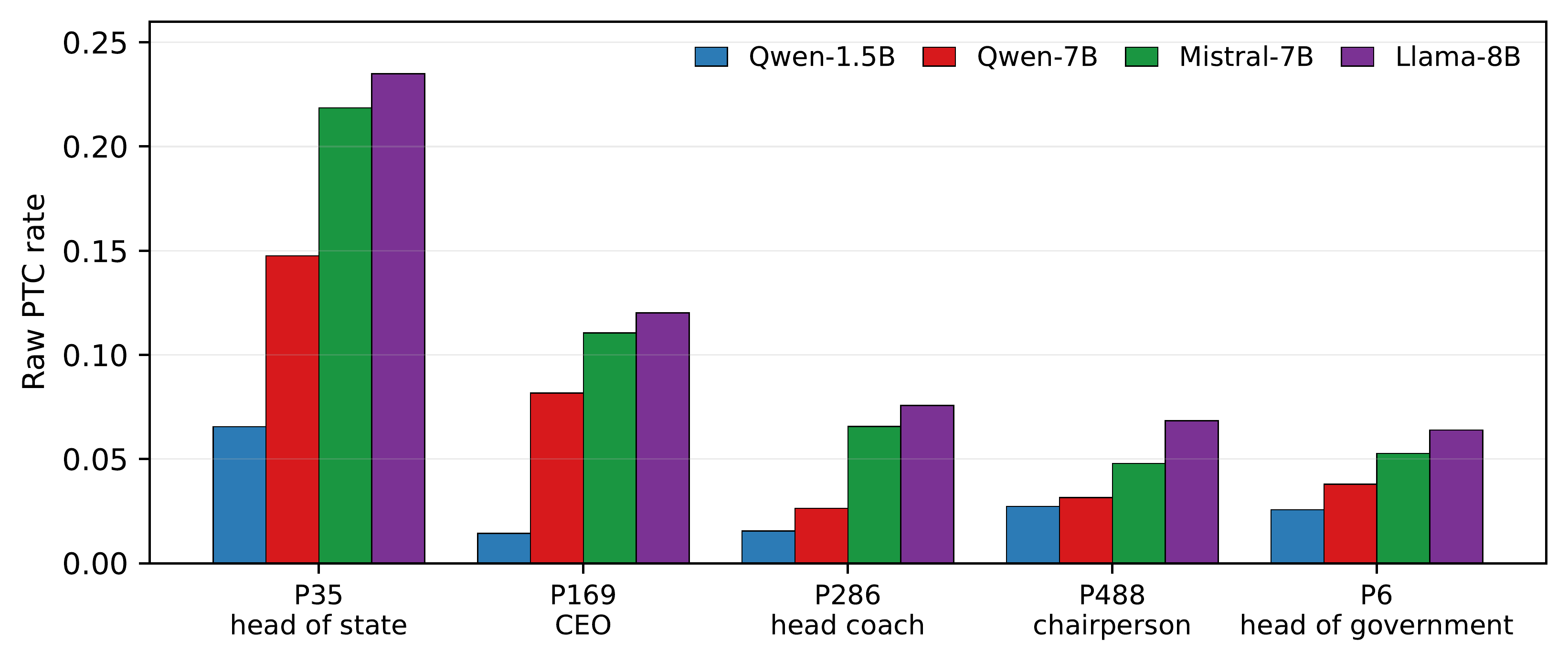}
    \caption{Per-relation raw \PTC{} rate across the four models.
    Each bar shows the fraction of records of a given Wikidata
    relation on which the model both prefers $a_{\mathrm{old}}$
    under standard prompting and recovers $a_{\mathrm{new}}$ under
    the temporal cue. The ordering
    P35~$>$~P169~$>$~\{P286,~P488,~P6\} is preserved across all four
    models, making the small-but-dense P35 (head of state, $n{=}183$)
    and P169 (CEO, $n{=}208$) relations the dominant per-record
    \PTC{} signal: together they hold $4.5\%$ of the benchmark but
    contribute the strongest conflict rate, while the three
    high-volume relations contribute most of the absolute record
    count yet only moderate per-record signal. This is what
    motivates the V2 (per-relation) construction of $\Delta$ in the
    main body: a single global average would dilute the strong P35
    and P169 signal with noisier high-volume contributions, while a
    per-relation $\Delta$ recovers it. The same cross-model ordering is
    visible within each relation: Llama-3.1-8B has the highest
    per-relation rate on three of five relations.}
    \label{fig:relation-breakdown}
\end{figure}

\begin{insightgreen}{Concentration of per-record \PTC{} signal in small relations}
Head of state (P35, $n{=}183$) and CEO (P169, $n{=}208$) together
hold $4.5\%$ of the benchmark but exhibit the largest per-record
\PTC{} rates in every model (P35 reaches $0.235$ on Llama-3.1-8B,
roughly $3\times$ the rate of any high-volume relation). The
unbalanced relation distribution is therefore a feature, not a
confound.
\end{insightgreen}

\paragraph{Reading the domain view.} \cref{tab:phase1-by-domain-app}
aggregates the five Wikidata relations into three broad domains:
\emph{politics} (P35 head of state + P6 head of government),
\emph{corporate\_leadership} (P169 CEO + P488 chairperson), and
\emph{sports} (P286 head coach). The domain ordering matches the
per-relation ordering reported in the main body
(\cref{fig:relation-breakdown}): politics carries the strongest per-record
\PTC{} signal in every model, corporate leadership is intermediate,
and sports is lowest. The domain ordering is preserved across all four
models, suggesting the effect is driven primarily by the underlying
relation distribution (P35 alone accounts for most of the political
\PTC{} signal at the $7$--$8$B scale) rather than by any
domain-specific representational structure.

\begin{table}[htbp]
\centering
\rowcolors{2}{tablezebra}{white}
\small
\setlength{\tabcolsep}{6pt}
\renewcommand{\arraystretch}{1.25}
\begin{tabular}{lrcccc}
\toprule
\hdrrow
\hdr{Domain} & \hdr{$n$} & \hdr{Qwen-1.5B} & \hdr{Qwen-7B} &
\hdr{Mistral-7B} & \hdr{Llama-3.1-8B} \\
\midrule
politics                  & 3{,}766 & \cellcolor{tablehighlight}0.028 & \cellcolor{tablehighlight}0.043 & 0.061 & 0.072 \\
corporate\_leadership     & 2{,}589 & 0.026 & 0.036 & 0.053 & \cellcolor{tablehighlight}0.073 \\
sports                    & 2{,}391 & 0.015 & 0.026 & \cellcolor{tablehighlight}0.066 & \cellcolor{tablehighlight}0.076 \\
\bottomrule
\end{tabular}
\caption{Per-domain raw \PTC{} rate on the full $8{,}746$-record
benchmark, aggregating the five Wikidata relations into three
domains: politics (P35 head-of-state $+$ P6 head-of-government),
corporate leadership (P169 CEO $+$ P488 chairperson), and sports
(P286 head-coach). Politics is the strongest \PTC{} domain across
all four models, corporate leadership is intermediate, and sports
is lowest. Cream highlights mark the per-model leading domain.
The ordering politics~$>$~corporate~$>$~sports is preserved across
all four models. Combined with the per-relation analysis
(\cref{fig:relation-breakdown}), this indicates that the
apparent domain effect is driven primarily by the underlying
relation composition (P35 and P6 are both political and together
contribute the dominant per-record \PTC{} signal) rather than by
any domain-specific representational structure in the residual
stream.}
\label{tab:phase1-by-domain-app}
\end{table}

\section{Layer Localization Details}
\label{app:locator-details}

This appendix expands on the Stage~2 locator results summarized in the
main paper (the main-paper locator table). The headline table reports only peak
AFR and plateau width; here we additionally report the full per-layer
AFR profile (\cref{fig:afr-profile}) and the per-instance mean
log-probability shifts the patch induces on $a_{\mathrm{new}}$ and
$a_{\mathrm{old}}$ at $\ell^*$, which together characterize
\emph{how} the patched activation flips the answer.

\begin{figure}[htbp]
    \centering
    \includegraphics[width=0.86\textwidth]{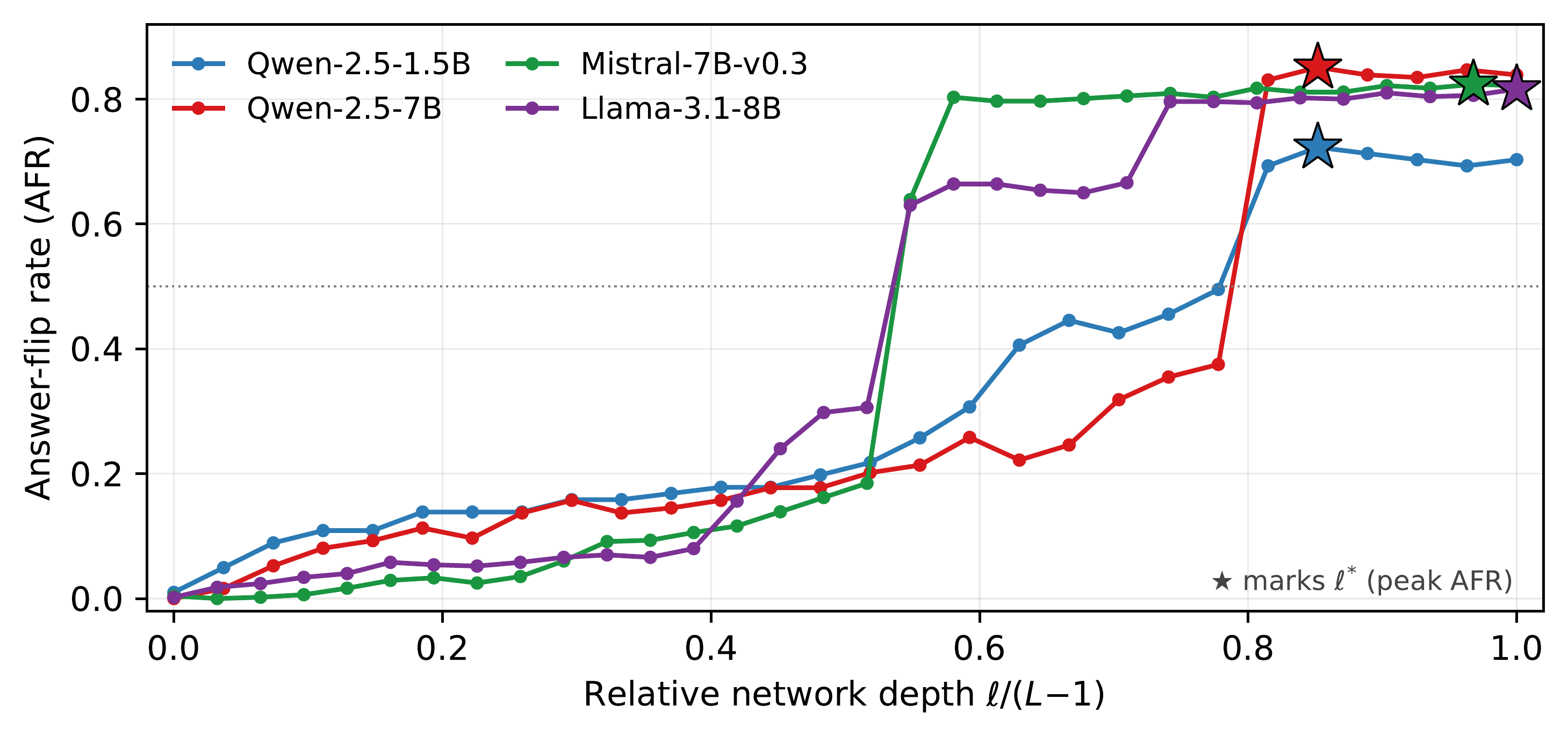}
    \caption{Per-layer answer-flip rate (AFR) for all four models,
    with the $x$-axis normalised by network depth ($\ell / (L - 1)$)
    so the four architectures (28, 28, 32, 32 layers) share a common
    frame. AFR$(\ell)$ is the fraction of verified \PTC{} instances on
    which a single-token activation patch at layer $\ell$ from the
    temporal-prompt forward pass into the standard-prompt forward
    pass flips the answer argmax. Stars mark each model's
    conflict-critical layer $\ell^{*} = \arg\max_{\ell}
    \mathrm{AFR}(\ell)$. Three observations: (i) all four curves are
    near-zero below relative depth $\approx\!0.6$, indicating that
    conflict-relevant information accumulates only in the second half
    of the network; (ii) each curve shows a sharp transition followed
    by a relatively flat plateau, supporting the Local Separability
    precondition of \cref{thm:tas-full}; (iii) the two Qwens
    transition at the same relative depth, while Mistral and
    Llama-3.1-8B transition slightly later, consistent with the
    family-specific $\ell^{*}$ values in the main-paper locator table.}
    \label{fig:afr-profile}
\end{figure}

\begin{insightgreen}{Sharp layerwise transition in answer-flip rate}
On every model the AFR profile is nearly flat below
$\ell/(L{-}1)\!\approx\!0.6$ and then climbs to its peak inside a
narrow late-network band. This is the \emph{causal} signature
\cref{thm:tas-full} relies on: a single, localised separating direction
that single-token activation patching can exploit.
\end{insightgreen}

\paragraph{Mean $\Delta$ log-probability shifts.} At $\ell^*$,
$\Delta \overline{\log P}(a_{\mathrm{new}})$ is the average gain in
length-normalized log-probability the patched activation produces for
the newer answer, and $\Delta \overline{\log P}(a_{\mathrm{old}})$ is
the corresponding shift for the older answer (typically negative when
the patch successfully redirects probability mass). The pattern across
models is informative. On Qwen-2.5-1.5B, the patch yields a large
positive shift for $a_{\mathrm{new}}$ ($+0.880$ nats), but also a
positive shift for $a_{\mathrm{old}}$ ($+0.263$ nats), suggesting that
the patched representation increases probability on \emph{both}
candidates relative to baseline (more strongly on the newer one).
On all three $7$--$8$B-class models the $a_{\mathrm{old}}$ shift is
negative ($-0.148$ to $-0.302$), indicating a cleaner zero-sum
redirection of probability mass between the two candidates at $\ell^*$.

\paragraph{Plateau geometry.} The plateau is the contiguous block of
layers with AFR within $0.05$ of peak that contains $\ell^*$. The two
Qwens share both $\ell^*=23$ and a $6$-layer plateau, consistent with
shared architecture and tokenizer. Mistral's plateau is the widest in
our study ($14$ layers, $\ell \in [18, 31]$), suggesting that the
conflict-resolution computation in Mistral is distributed across
roughly the upper half of the network. Llama-3.1-8B's plateau is a
$9$-layer end-network band ($\ell \in [23, 31]$), with the peak at the
final decoder layer.

A sublayer (attention vs.\ MLP) decomposition is a natural follow-up
that we do not include here.

\section{Steering Vector Variants (V1 / V2 / V3)}
\label{app:variants}

The \TAS{} framework defines three steering-vector averaging granularities for estimating the conflict direction $\Delta$ at layer $\ell^*$. In the first variant, V1, a single global direction is computed by averaging $\Delta$ over all verified \PTC{} instances. This is the simplest construction, since the same direction is applied to every query regardless of relation or domain.

The second variant, V2, computes a separate direction for each Wikidata relation in the benchmark. Each relation-specific $\Delta$ is averaged only over the verified \PTC{} instances associated with that relation. This design is intended to preserve relation-conditioned activation geometry, which may otherwise be diluted when all instances are collapsed into a single global average.

The third variant, V3, uses an intermediate level of granularity by computing one direction per broad domain, such as politics, corporate leadership, and sports. Thus, V3 lies between the global construction of V1 and the fine-grained relation-specific construction of V2.

We ran the full V1/V2/V3 comparison under the corrected held-out protocol
(train-only directions, validation-selected $\alpha$, one evaluation on the
subject-disjoint test set; \cref{tab:variants-app}, plotted in
\cref{fig:variant-ablation}). Relations or domains
with fewer than $10$ train-\PTC{} records fall back to the global direction
(same rule for validation and test; never tuned on test). The pilot's
``V1${<}$V3${<}$V2'' ordering \emph{does not survive}: on Mistral-7B-v0.3
(the pilot's own model) the held-out ordering is V1${>}$V2${>}$V3, and V1
gives the highest Recovery on two of four models. No V2-vs-V1 or V2-vs-V3
Recovery difference is significant after Holm correction (paired McNemar,
per-model discordant counts released with the code); the
three variants are statistically comparable. On PA, V2 is not better than
V1 and is significantly \emph{worse} on Qwen-2.5-1.5B (discordant $2/26$,
$p_{\mathrm{Holm}}{=}1.2\!\times\!10^{-5}$). The smallest relations (P35,
P169) have too few held-out test instances ($n{=}1$--$6$) to compare
per-relation. We therefore retain V2 as a \emph{relation-conditioned
mechanistic} construction rather than a performance-superior one; the
simpler global V1 is an equally good, better-preserving alternative. The
earlier $500$-record single-model oracle pilot is superseded by this table.

\begin{table}[htbp]
\centering\small\setlength{\tabcolsep}{4pt}
\begin{tabular}{lccccccc}
\toprule
\hdrrow
& \multicolumn{3}{c}{\hdr{Recovery}} & \multicolumn{3}{c}{\hdr{PA}} & \\
\hdrrow
\hdr{Model} & \hdr{V1} & \hdr{V2} & \hdr{V3} & \hdr{V1} & \hdr{V2} & \hdr{V3} & \hdr{$\alpha$ 1/2/3} \\
\midrule
Qwen-2.5-1.5B & 0.375 & 0.469 & 0.438 & \textbf{0.940} & 0.820 & 0.815 & 1/2/2 \\
Qwen-2.5-7B & \textbf{0.615} & 0.538 & 0.564 & 0.830 & 0.800 & 0.805 & 2/2/2 \\
Mistral-7B-v0.3 & \textbf{0.394} & 0.324 & 0.310 & 0.830 & 0.850 & 0.850 & 8/4/4 \\
Llama-3.1-8B & 0.345 & 0.356 & \textbf{0.368} & 0.790 & 0.785 & 0.790 & 4/6/6 \\
\bottomrule
\end{tabular}
\caption{\textbf{Held-out V1/V2/V3 ablation} on the subject-disjoint test
set: train-only directions, validation-selected $\alpha$ (shown as
1/2/3), evaluated once on identical test \PTC{} records and the identical
$200$ test controls. Best Recovery per model in bold. V1/V2/V3 are
statistically comparable (no significant Recovery difference after Holm);
V2 is not consistently best, and V1 leads on Qwen-2.5-7B and
Mistral-7B-v0.3. This held-out table supersedes the earlier
$500$-record, single-model, oracle-$\alpha$ pilot.}
\label{tab:variants-app}
\end{table}

\begin{figure}[htbp]
    \centering
    \includegraphics[width=\columnwidth]{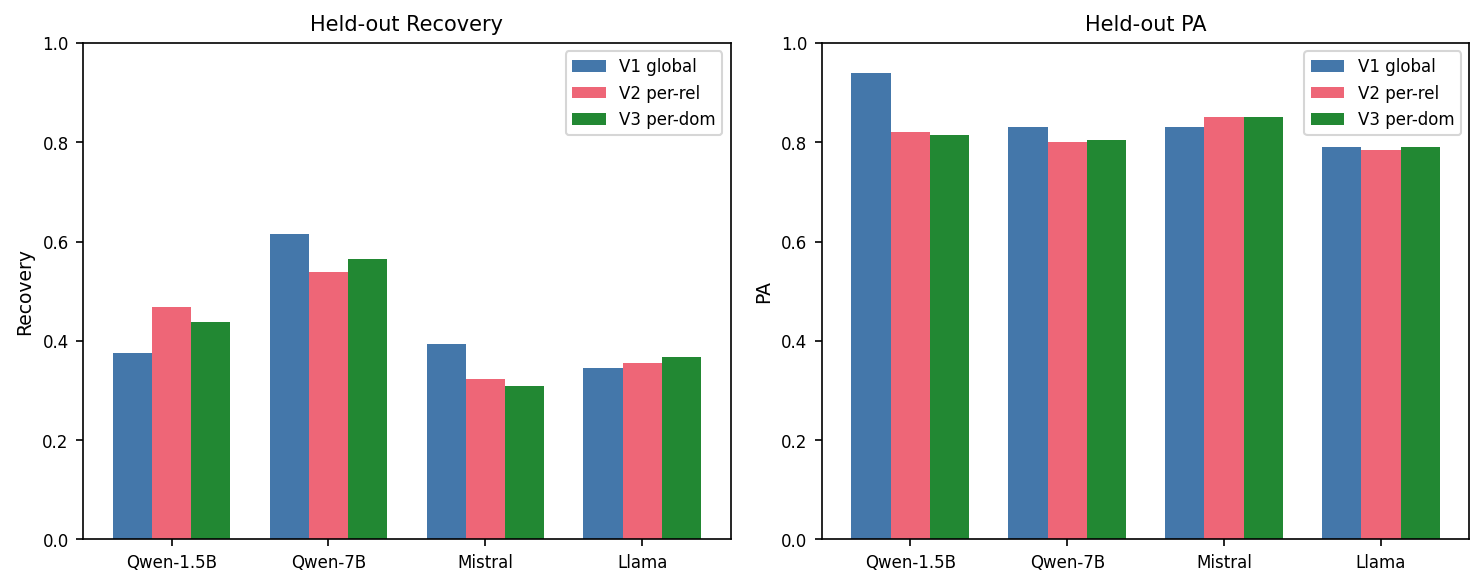}
    \caption{Held-out Recovery (left) and PA (right) for V1 (global),
    V2 (per-relation), V3 (per-domain). The variants are statistically
    comparable; V2 is not consistently best.}
    \label{fig:variant-ablation}
\end{figure}

\section{Extended \TAS{} Diagnostics}
\label{app:tas-extended}

This appendix expands the headline end-to-end \TAS{} results with three
complementary views drawn from the same per-model JSONs as
the main-paper TAS and threshold-sweep tables: detector PR curves
(\cref{fig:detector-pr}), the oracle steering-scale sweep
(\cref{fig:oracle-alpha}), and the $\tau$-Pareto frontier
(the main-paper threshold-sweep table). Together they trace each model's behaviour
across the three controls (detector quality, $\alpha$, $\tau$) that govern
\TAS{}'s operating point.

\paragraph{Why Qwen-2.5-7B shows the largest effect.}
Among the four models, Qwen-2.5-7B is where a single-direction edit moves
the most verified-\PTC{} cases. Three properties coincide in it:
strong-but-not-saturated coverage (filtered \PTC{} $0.071$), a tight
$6$-layer causal plateau, and a small steering norm, so a modest
$\alpha^{*}{=}2$ carries the state across the argmax boundary without
overshooting into the preservation loss that larger $\alpha$ incurs on
Mistral and Llama. Relatedly, the gap between always-on and detector-gated
Recovery tracks detector quality rather than the edit: it is smallest on
Mistral-7B-v0.3 ($0.27$), whose detector separates best, and largest on the
two Qwens ($0.44$ and $0.46$), whose detectors are weakest. This is a statement about the gate, not about the
direction; the always-on figures are the mechanistic quantity, and the
gated figures inherit the detector's failure.

\begin{figure}[htbp]
    \centering
    \includegraphics[width=0.92\textwidth]{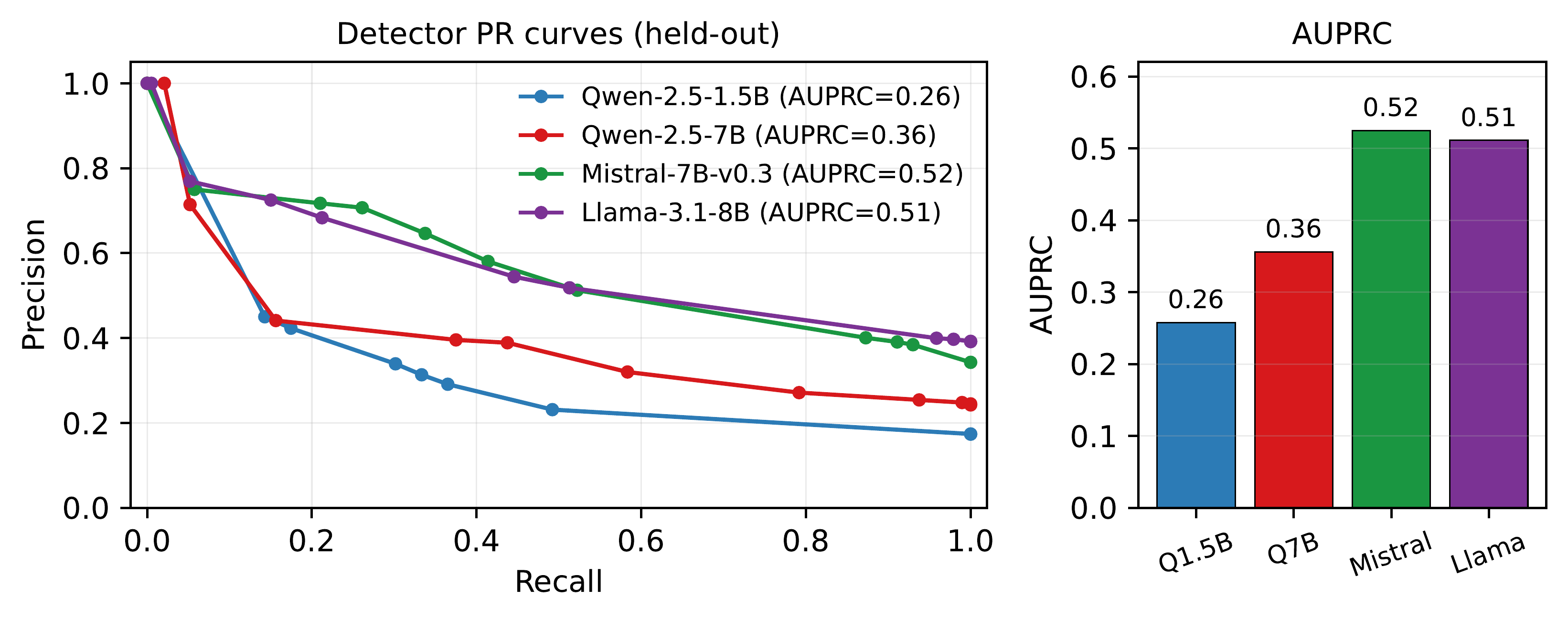}
    \caption{\textbf{Legacy instance-level split; diagnostic only;
    superseded by the subject-disjoint evaluation.} Stage~1 detector
    precision-recall curves computed on the legacy instance-level split,
    which \emph{overstates} detector quality. The corrected subject-disjoint
    curves are \cref{fig:detector-corrected-pr} and the corrected metrics
    \cref{tab:detector-corrected}: held-out test AUPRC falls to
    $0.028$--$0.097$ and Qwen-2.5-1.5B AUROC is below chance ($0.47$). Even
    on this optimistic legacy split the relative ordering already shows
    Mistral-7B-v0.3 and Llama-3.1-8B training stronger detectors than the
    two Qwens; the corrected metrics confirm that conflict detection under
    honest subject-disjoint evaluation remains unsolved, which is why the
    detector-gated pipeline is reported as a negative result rather than a
    deployable component. This figure is retained only for historical
    comparison.}
    \label{fig:detector-pr}
\end{figure}

\begin{figure}[htbp]
    \centering
    \includegraphics[width=0.95\textwidth]{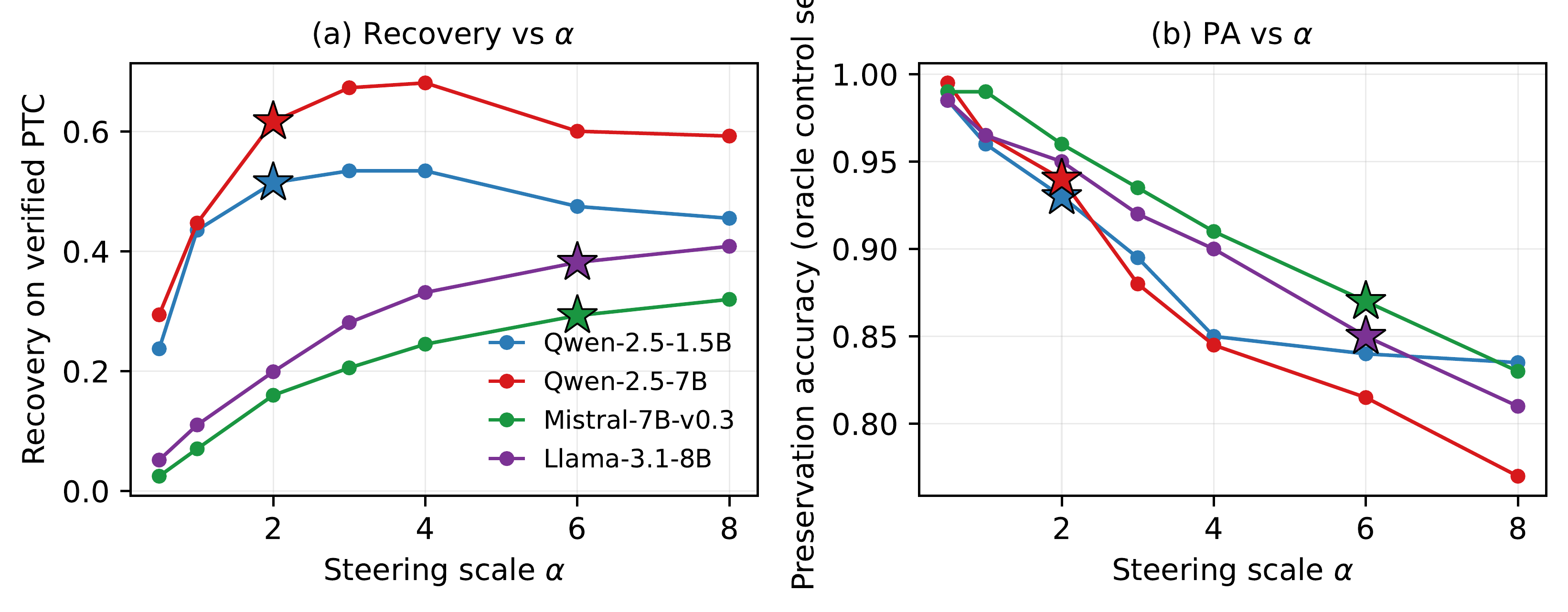}
    \caption{Oracle $\alpha$-sweep at $\ell^{*}$ with the V2
    per-relation $\Delta$ and no detector gate (steering is applied to
    every instance). \textbf{(a)} Recovery on the verified \PTC{}
    subset rises monotonically with the steering scale $\alpha$ on all
    four models and saturates by $\alpha \approx 4$ on the two Qwens;
    Mistral and Llama-3.1-8B keep climbing more slowly. Stars mark
    each model's $\alpha^{*} = \arg\max_{\alpha}\,[\mathrm{Recovery}
    - \lambda(1 - \mathrm{PA})]$ with $\lambda = 1.0$. \textbf{(b)}
    Preservation accuracy on the matched non-conflict control set
    declines smoothly with $\alpha$, as expected because a stronger
    steering perturbation distorts more of the residual-stream
    trajectory. The $\alpha^{*}$ values lie at the knee of each
    model's Recovery-vs-$\alpha$ curve where the marginal Recovery
    gain begins to be outweighed by the marginal PA loss; this is
    the empirical witness for the existence of a finite
    $\alpha^{*}>0$ required by \cref{thm:tas-full}.}
    \label{fig:oracle-alpha}
\end{figure}

\paragraph{False-alarm rate at headline operating points.}
The in-pipeline false-alarm rate (steered fraction of non-\PTC{}
records) at $\tau\in\{0.15,0.20,0.30\}$ is, per model:
Qwen-2.5-1.5B $0.189$\,/\,$0.143$\,/\,$0.055$;
Qwen-2.5-7B   $0.971$\,/\,$0.376$\,/\,$0.183$;
Mistral-7B-v0.3 $0.772$\,/\,$0.771$\,/\,$0.637$;
Llama-3.1-8B  $0.996$\,/\,$0.995$\,/\,$0.955$.
The two large models' high false-alarm rates at all swept $\tau$
reflect the calibrator-cliff issue discussed in
the main paper.

\begin{insightgreen}{Distinct roles of the three operating parameters}
The PR curves (\cref{fig:detector-pr}) set the ceiling on what the
detector can sort. The $\alpha$-sweep (\cref{fig:oracle-alpha})
shows where the steering vector itself starts overshooting (Recovery
flattens). The $\tau$-frontier (the main-paper threshold-sweep table) is where
operators trade Recovery for PA at deployment. Qwen-2.5-7B is the
strongest model on all three because its detector AUPRC ($0.36$) is
adequate, $\alpha^{*}{=}2$ is on the rising part of its Recovery
curve, and its $\tau$-frontier sits clearly upper-right of every
other model.
\end{insightgreen}

\paragraph{Per-relation Recovery.}
A per-relation breakdown of end-to-end \TAS{} Recovery is a useful
future diagnostic for testing whether the P35-dominant screening
pattern (\cref{fig:relation-breakdown}) also
appears after steering, and would inform the relation-conditional
$\alpha^*$ extension noted in the main paper.

\section{Free-Generation Evaluation}
\label{app:freegen}
As a complementary robustness and limitation analysis, we evaluate every
intervention under \emph{unrestricted} greedy decoding, in addition to the
candidate-probability scoring used throughout the paper. This answers whether the
mechanistic (candidate-level) conclusion also holds when the model answers naturally.
The headline result is behaviourally distinct and negative: on these base
(non-instruction-tuned) models, free generation is dominated by definitional /
non-entity continuations, and \emph{no intervention consistently produces the newer
entity}.

\subsection{Free-Generation Protocol}
\label{app:fg-protocol}
For every held-out test verified-\PTC{} record we generate a natural answer with
identical decoding for every method: greedy (\texttt{do\_sample=False},
\texttt{num\_beams=1}), $\le 25$ new tokens, truncated to the first non-empty line.
Methods: \emph{standard}, \emph{date\_prefix} (``In $\langle$year$\rangle$:'' prefix),
\emph{instruction} (``Answer with the current, most up-to-date fact.''),
\emph{TAS} (always-on), and \emph{ITI}. Steering reuses the \emph{same} raw $\Delta$
vectors, layer $\ell^{*}$, and validation-selected $\alpha$ as the probability
evaluation, applied at the conflict-critical query position during the prompt forward
pass (applying it at every decode step over-applies the scoring-selected $\alpha$ and
destroys fluency). Preservation is measured on up to $200$ non-conflict control
records. Test-\PTC{} sizes: $32/39/71/87$ (Qwen-1.5B/7B, Mistral, Llama).

\subsection{Alias and Entity Resolution}
\label{app:fg-alias}
A generation is matched to the newer/outdated entity by deterministic alias-aware
matching: Unicode NFKC, lowercasing, diacritic and punctuation stripping, whitespace
normalization, then whole-word matching against the entity's Wikidata English label
and aliases \emph{and} the benchmark's stored gold label (the latter recovers
entities whose current Wikidata label is empty or has drifted). Redirects are
resolved by the Wikidata API. Multi-entity generations match if any gold surface form
appears; unresolved outputs are logged for manual review.

\subsection{Main Free-Generation Results}
\label{app:fg-results}
\cref{tab:fg-categories} gives, per model and method, the rate of each output
category. Definitional/non-entity continuations dominate for every method and model;
the new-entity rate never exceeds $0.21$ and \emph{the best method differs by model}
(standard for Qwen-7B and Llama; date-prefix for Mistral; all near zero for
Qwen-1.5B). TAS never exceeds standard on any model, and on Llama steering collapses
to empty output ($72\%$).

\begin{table}[htbp]
\centering\scriptsize\setlength{\tabcolsep}{3pt}
\begin{tabular}{llccccc}
\toprule
\hdrrow
\hdr{Model} & \hdr{Method} & \hdr{new} & \hdr{old} & \hdr{other} & \hdr{defin.} & \hdr{empty} \\
\midrule
Qwen-1.5B & standard & 0.03 & 0.00 & 0.50 & 0.47 & 0.00 \\
          & date\_prefix & 0.06 & 0.09 & 0.41 & 0.44 & 0.00 \\
          & instruction & 0.06 & 0.06 & 0.47 & 0.41 & 0.00 \\
          & TAS & 0.00 & 0.06 & 0.44 & 0.50 & 0.00 \\
          & ITI & 0.03 & 0.00 & 0.53 & 0.44 & 0.00 \\
\midrule
Qwen-7B & standard & 0.21 & 0.10 & 0.33 & 0.36 & 0.00 \\
        & date\_prefix & 0.15 & 0.26 & 0.44 & 0.15 & 0.00 \\
        & instruction & 0.21 & 0.10 & 0.51 & 0.18 & 0.00 \\
        & TAS & 0.18 & 0.10 & 0.38 & 0.31 & 0.03 \\
        & ITI & 0.15 & 0.08 & 0.31 & 0.46 & 0.00 \\
\midrule
Mistral & standard & 0.00 & 0.07 & 0.56 & 0.37 & 0.00 \\
        & date\_prefix & 0.15 & 0.11 & 0.30 & 0.44 & 0.00 \\
        & instruction & 0.04 & 0.01 & 0.01 & 0.93 & 0.00 \\
        & TAS & 0.00 & 0.07 & 0.56 & 0.37 & 0.00 \\
        & ITI & 0.01 & 0.07 & 0.55 & 0.37 & 0.00 \\
\midrule
Llama & standard & 0.05 & 0.05 & 0.25 & 0.66 & 0.00 \\
      & date\_prefix & 0.01 & 0.02 & 0.17 & 0.79 & 0.00 \\
      & instruction & 0.00 & 0.02 & 0.60 & 0.36 & 0.02 \\
      & TAS & 0.00 & 0.00 & 0.08 & 0.20 & 0.72 \\
      & ITI & 0.01 & 0.05 & 0.77 & 0.17 & 0.00 \\
\bottomrule
\end{tabular}
\caption{Free-generation output categories (held-out test \PTC{}). \emph{new/old}:
alias-matches the newer/outdated entity; \emph{other}: names a different entity;
\emph{defin.}: role/relational continuation with no entity answer; \emph{empty}:
no output. Rows sum to $1$. Definitional non-answers dominate; no method consistently
produces the newer entity.}
\label{tab:fg-categories}
\end{table}

\subsection{Candidate-Scoring vs.\ Generation Disagreement}
\label{app:fg-disagreement}
Candidate scoring and free generation disagree substantially
(\cref{tab:fg-disagreement}), confirming they measure \emph{different} things.
Candidate scoring reports date-prefix Recovery of $0.61$--$0.81$, but the
free-generation new-entity rate is only $0.01$--$0.15$. Conversely, standard
prompting has candidate Recovery $0$ by \PTC{} construction (it must prefer the
outdated answer), yet free generation already produces the newer entity up to $0.21$
of the time, because greedy decoding is not restricted to the two candidates. For
TAS/ITI the candidate Recovery ($0.29$--$0.54$) likewise does not translate into a
comparable free-generation new-entity rate.

\begin{table}[htbp]
\centering\scriptsize\setlength{\tabcolsep}{4pt}
\begin{tabular}{llcc}
\toprule
\hdrrow
\hdr{Model} & \hdr{Method} & \hdr{cand.\ Recovery} & \hdr{free-gen new-rate} \\
\midrule
Qwen-1.5B & date\_prefix & 0.61 & 0.06 \\ & TAS & 0.47 & 0.00 \\ & standard & 0.00 & 0.03 \\
\midrule
Qwen-7B & date\_prefix & 0.78 & 0.15 \\ & TAS & 0.54 & 0.18 \\ & standard & 0.00 & 0.21 \\
\midrule
Mistral & date\_prefix & 0.79 & 0.15 \\ & TAS & 0.32 & 0.00 \\ & standard & 0.00 & 0.00 \\
\midrule
Llama & date\_prefix & 0.81 & 0.01 \\ & TAS & 0.36 & 0.00 \\ & standard & 0.00 & 0.05 \\
\bottomrule
\end{tabular}
\caption{Candidate-scoring held-out Recovery vs.\ free-generation new-entity rate.
The two measurements diverge sharply: high candidate Recovery does not yield a
correspondingly high generated-answer rate, and standard generation produces the
newer entity despite candidate scoring preferring the outdated one.}
\label{tab:fg-disagreement}
\end{table}

\subsection{Manual Validation}
\label{app:fg-manual}
We manually inspected a stratified sample of $128$ generations (across all models,
methods, and output categories; $\sim\!90$ examined with full untruncated text
against the gold labels). New/old detection agreed with the automatic label in
$\approx\!100\%$ of inspected cases (alias handling verified on name reordering
``Abe Shinzo'', diacritics, and titles); full $5$-way agreement is $\approx\!90\%$,
with the residual disagreements confined to the \emph{other\_entity vs.\
definitional} boundary (organisation/office descriptions with no person answer),
which does not affect the new-answer rate. Two parser/resolver corrections were made
and re-run: a leading-newline parsing bug (Mistral generations were all truncated to
empty; fixed to take the first non-empty line) and empty-current-Wikidata-label
false negatives (fixed by also matching the benchmark's stored gold label). The
genuinely unresolvable outputs are the definitional/empty non-answers, which are
correctly identified as such.

\subsection{Error Analysis: Representative Outputs}
\label{app:fg-errors}
\begin{compactitem}
\item \emph{Correct newer entity:} ``The head coach of FC Shakhtar Donetsk is Roberto De Zerbi.''
\item \emph{Outdated entity:} ``Luciano Spalletti is the head coach of Inter Milan.'' (newer: Antonio Conte)
\item \emph{Definition instead of entity:} ``The head of state of Sri Lanka is the President.''
\item \emph{Unrelated entity:} ``The governor of Rio de Janeiro is Wilson Witzel.'' (newer: Eduardo Paes)
\item \emph{Multiple entities:} ``A. Jeremy Corbyn B. Ed Miliband C. Nick Clegg D. Nick Clegg''
\item \emph{Malformed continuation:} ``A. A. A. A. A. A. A. A.'' (degenerate, under steering)
\end{compactitem}

\subsection{Interpretation}
\label{app:fg-interp}
We state explicitly: (i) this experiment does \emph{not} validate \TAS{} as a
practical generation method: TAS never exceeds standard generation and degrades
fluency on the smallest model; (ii) it reveals a limitation of evaluating base
(non-instruction-tuned) models through unrestricted greedy generation, which is
dominated by definitional non-answers; (iii) it does \emph{not} invalidate the
candidate-level mechanistic result, because candidate scoring remains the appropriate
instrument for measuring a directional parametric effect; and (iv) candidate scoring
and free generation answer related but different questions. We therefore retain
candidate-level scoring as the primary mechanistic evaluation and report free
generation as complementary robustness and limitation evidence.

\section{Cross-Family Generalization}
\label{app:appendix_cross_model}

This appendix consolidates the cross-family evidence presented
piecemeal across the main-paper screening, locator, and overlap analyses
and frames it as a single \emph{generalization claim}: the \PTC{}
phenomenon and the \TAS{} intervention are not artefacts of any single
training pipeline.

\paragraph{Quantitative cross-family agreement at $7$--$8$B.}
At the $7$--$8$B parameter class, three different families (Qwen-2.5-7B,
Mistral-7B-v0.3, Llama-3.1-8B), trained on different corpora with
different tokenizers, agree on every headline metric within a few
percentage points: \OPR{} all $\in [0.540, 0.544]$, \EG{} all $\in
[+0.090, +0.179]$ nats, filtered \PTC{} all $\in [0.071, 0.103]$, peak
AFR all $\in [0.816, 0.851]$. The \PTC{} pattern is therefore not a
quirk of any single training pipeline.

Cross-family steering-vector transfer (applying a Mistral-derived
$\Delta$ to Llama-3.1-8B at matched relative depth) is left to future
work and is motivated by the relatively large pairwise
Mistral~$\cap$~Llama overlap in the main-paper overlap figure.

\begin{insightteal}{\PTC{} is not a single-pipeline artefact}
At the $7$--$8$B scale, three families with different corpora and
tokenizers agree on every headline metric within a few percentage
points (\OPR{} $\in[0.540,0.544]$, filtered \PTC{}
$\in[0.071,0.103]$, peak AFR $\in[0.816,0.851]$). The phenomenon is
real, measurable, and capacity-sensitive across pipelines; it is not
a quirk of any single corpus or tokenizer family.
\end{insightteal}

\section{Implementation and Reproducibility}
\label{app:complexity-analysis}

This appendix documents the hardware, compute footprint, and
reproducibility provisions of our implementation. Everything is
deterministic given the model checkpoints, the benchmark JSONL, and
the per-model $(\ell^*, \alpha^*, \tau^*)$ values reported throughout;
no random initialization beyond the detector's stratified train/test
split (which we seed) affects the headline numbers.

\paragraph{Hardware.} All Phase~1 screening and \TAS{} experiments were
run on a workstation with 2$\times$ NVIDIA TITAN RTX (24~GB each).
Phase~1 screening on the full $8{,}746$-record benchmark takes
approximately $25$--$30$ minutes per model in fp16 single-GPU~mode.
Full \TAS{} pipeline (cache activations $\to$ locate $\to$ steer $\to$
train detector $\to$ end-to-end TAS at three $\tau$ values) takes
approximately $90$--$120$ minutes per model. The four-model ITI
baseline sweep (the main-paper ITI comparison) runs in $4$--$15$
minutes per model on a single TITAN RTX, dominated by the $1000$
forward passes ($500$ prefers-old, $500$ prefers-new) used to construct
$\Delta_{\mathrm{ITI}}$ plus the $\alpha$-sweep over the $200$-instance
control set.

\paragraph{Default settings.} The released pipeline ships with the
following defaults, which were used unchanged for every reported
number: $\alpha$ grid $\{0.5, 1, 2, 3, 4, 6, 8\}$; detector thresholds
$\tau \in \{0.15, 0.20, 0.30\}$; $\lambda = 1.0$ in the $J$ objective;
$200$-instance non-conflict control set (seed $0$); $500$-instance
caps for the per-model prefers-new and prefers-old ITI samples
(seed $0$); $\tau_{\mathrm{rec}} = -3$ for the knowledge-recovery
filter; bfloat16 weights for Mistral-7B-v0.3 and Llama-3.1-8B, fp16
for the two Qwens. Pre-trained model checkpoints are loaded from the
Hugging Face Hub through a model registry.

\paragraph{Steering vector storage.} Per-model steering vectors are
$\le 150$~KB. Detector activations are $\sim 8$~MB per model. Full
per-layer activation caches for the locator are $\sim 30$~MB per model.

\paragraph{Code and data release.}
The benchmark loader, per-model outputs, figure-generation scripts,
and inference-time steering code will be released with anonymous
submission artifacts or at camera-ready, subject to the venue's
anonymity policy. Hyperparameters, model checkpoints, and the per-model
$(\ell^*, \alpha^*, \tau^*)$ values are deterministic and reported
throughout. The five appendix figures
(\cref{fig:afr-profile,fig:detector-pr,fig:oracle-alpha,fig:relation-breakdown,fig:filter-capacity})
are each produced by a single released plotting script
that reads the per-model result files and emits the corresponding plot.

\paragraph{License and usage terms.}
The source code will be released under a permissive open-source
license, and the \PTC{} benchmark under a research-use license, both
intended strictly for research on parametric-knowledge analysis and
inference-time intervention. Precise terms, sourcing notes, and
known-coverage limitations will be documented on the release artefacts
themselves, which will be authoritative where they disagree with this
paper. Underlying language-model checkpoints (Qwen-2.5, Mistral-7B-v0.3,
Llama-3.1-8B) remain subject to their respective publisher licenses
and are not redistributed.
Wikidata-sourced facts retain the public-domain dedication
(CC0~1.0) of their original statements; only the \PTC{} record schema
and the verified filtering decisions on top of those facts are the
new contribution. Identifying release endpoints will be substituted at camera-ready.

\paragraph{Known engineering items.} The largest remaining engineering
item is detector calibration. The isotonic calibrator trained on a
small held-out fold of the per-model detector training set generalizes
imperfectly to the long tail of the full $8{,}746$-record benchmark,
leading to high false-alarm rates at aggressive thresholds for the
larger models. A proper held-out calibration set of $\sim$$1{,}000$
records is expected to substantially improve the operating-point
selection.

\paragraph{Auditability of activation edits (deployment guidance).}
We use the residual-stream edit purely as an analysis instrument, but the
mechanism it exercises has a dual-use character worth stating explicitly:
because the edit alters internal representations without retrieval or
provenance, the same operation that corrects an outdated fact can install a
false or adversarially supplied belief while leaving no external trace in
the output. Nothing in the forward pass marks an answer as having been
steered. Any deployment built on this mechanism should therefore (i) source
$\Delta$ only from trusted, auditable update streams, since whoever
controls $\Delta$ controls the installed belief; (ii) use conservative
detector thresholds, given that our detector is near chance under
subject-disjoint evaluation; and (iii) log every steering decision
(checkpoint hash, $\Delta$ version, layer $\ell^{*}$, strength $\alpha$,
detector score, and timestamp), keyed by a privacy-preserving hash of the
query rather than the raw prompt, so that steered answers remain
reconstructible after the fact without retaining user text.

\section{AI Usage Statement}
\label{app:ai-usage}

\paragraph{Use of AI Assistance.}
AI-based tools such as ChatGPT and Claude were used only to refine the manuscript, including grammar checking, clarity improvement, and language polishing. These tools were also used to support limited code refactoring and formatting improvements. All scientific findings, experimental results, analyses, interpretations, and conclusions presented in this work were developed and verified exclusively by the authors and were not generated by AI systems.

\paragraph{Benchmark.} The $8{,}746$-record benchmark was constructed
from public Wikidata statements following the deterministic protocol
in \cref{app:benchmark}. No AI system was used to generate, label,
verify, or filter benchmark records; LLM-judge cross-verification is
noted as a planned extension in the main paper.

\end{document}